\def\final{1}
\def\short{0}
\def\submission{0}
\newcommand{\longshort}[2]{#1}
\newcommand{\longshort}[2]{#2}
                \DeclareMathAlphabet{\mathsf}{OT1}{cmss}{m}{n}
                \SetMathAlphabet{\mathsf}{bold}{OT1}{cmss}{bx}{n}
\definecolor{DarkGreen}{rgb}{0.2,0.6,0.2}
\definecolor{DarkRed}{rgb}{0.6,0.2,0.2}
\definecolor{DarkBlue}{rgb}{0.15,0.15,0.55}
\definecolor{DarkPurple}{rgb}{0.4,0.2,0.4}
\newcommand{\mynote}[1]{\marginpar{\tiny #1}}
\newcommand{\Bignote}[1]{{\tiny #1}}
\newcommand{\mynote}[1]{}
\newcommand{\Bignote}[1]{}
\newcommand{\rcomm}[1]{\Bignote{\color{blue} Raef: {#1}}}
\newcommand{\INDSTATE}[1][1]{\STATE\hspace{#1\algorithmicindent}}
\newcolumntype{Y}{>{\centering\arraybackslash}X}
\newcommand{\pr}[2]{\underset{#1}{\mathbb{P}}\left[ #2 \right]}
\newcommand{\ex}[2]{\underset{#1}{\mathbb{E}}\left[ #2 \right]}
\newcommand{\paren}[1]{{\left( {#1}\right)}}
\newcommand{\poly}{\mathrm{poly}}
\newcommand{\zo}{\{0,1\}}
\newcommand{\getsr}{\gets_{\mbox{\tiny R}}}
\newcommand{\from}{:}
\newcommand{\eps}{\varepsilon}
\DeclareMathOperator*{\argmin}{arg\,min}
\DeclareMathOperator*{\argmax}{arg\,max}
\newcommand{\eqand}{\qquad \textrm{and} \qquad}
\newcommand{\N}{\mathbb{N}}
\newcommand{\R}{\mathbb{R}}
\newcommand{\cA}{\mathcal{A}}
\newcommand{\cB}{\mathcal{B}}
\newcommand{\cM}{\mathcal{M}}
\newcommand{\cR}{\mathcal{R}}
\newcommand{\cU}{\mathcal{U}}
\newcommand{\cW}{\mathcal{W}}
\newtheorem{theorem}{Theorem}[section]
\newtheorem{thm}[theorem]{Theorem}
\newtheorem{lemma}[theorem]{Lemma}
\newtheorem{lem}[theorem]{Lemma}
\newtheorem{claim}[theorem]{Claim}
\newtheorem{corollary}[theorem]{Corollary}
\theoremstyle{definition}
\newtheorem{definition}[theorem]{Definition}
\newcommand{\tvd}{\mathrm{d_{TV}}}
\newcommand{\univ}{\mathcal{X}}
\newcommand{\dist}{\mathbf{P}}
\newcommand{\samp}{x}
\newcommand{\sample}{\mathbf{\samp}}
\newcommand{\query}{q}
\newcommand{\mechanism}{\cM}
\newcommand{\adv}{\cA}
\newcommand{\monitor}{\cW}
\newcommand{\alg}{\cW}
\newcommand{\queries}{k}
\newcommand{\queryset}{Q}
\newcommand{\querysetSQ}{\queryset_{\mathit{SQ}}}
\newcommand{\querysetDelta}{\queryset_{\Delta}}
\newcommand{\querysetCM}{\queryset_{\mathit{CM}}}
\newcommand{\querysetMIN}{\queryset_{\mathit{min}}}
\newcommand{\querysetfinite}[1]{\queryset_{\mathit{min}, {#1}}}
\newcommand{\err}{\mathrm{err}}
\newcommand{\errx}[3]{\err_{#1}\left( #2, #3 \right)} 
\newcommand{\errp}[3]{\err^{#1}\left( #2, #3 \right)} 
\newcommand{\loss}{L}
\newcommand{\accgame}{\mathsf{Acc}}
\newcommand{\sampaccgame}{\mathsf{SampAcc}}
\title{Algorithmic Stability for Adaptive Data Analysis\thanks{This work unifies and subsumes the two arXiv manuscripts~\cite{BassilySSU15, NissimS15}.}}
\author{
\makebox[1.5in]{\hfill Raef Bassily\thanks{University of California
    San Diego, Center for Information Theory and Applications and
             Department of Computer Science and Engineering. 
Part of this work was done while the author was at Pennsylvania State University, supported by NSF award CDI-0941553. \href{mailto:rbassily@ucsd.edu}{rbassily@ucsd.edu}}\hfill}
\and \makebox[1.5in]{\hfill Kobbi Nissim\thanks{Ben-Gurion University of the Negev, Department of Computer Science and Center for Research on Computation and Society (CRCS), Harvard University. Supported by a grant from the Sloan Foundation, a Simons Investigator grant to Salil Vadhan, and NSF grant CNS-1237235. \href{mailto:kobbi@cs.bgu.ac.il}{kobbi@cs.bgu.ac.il}}\hfill}
\and \makebox[1.5in]{\hfill Adam Smith\thanks{Pennsylvania State University, Department of Computer Science and Engineering. Supported by NSF award IIS-1447700, a Google Faculty
Award and a Sloan Foundation research award. \href{mailto:asmith@psu.edu}{asmith@psu.edu}}\hfill}
\and \makebox[1.5in]{\hfill Thomas Steinke\thanks{Harvard University, John A. Paulson School of Engineering and Applied Sciences. Supported by NSF grants CCF-1116616, CCF-1420938, and CNS-1237235. \href{mailto:tsteinke@seas.harvard.edu}{tsteinke@seas.harvard.edu}} \hfill}
\and \makebox[1.5in]{\hfill Uri Stemmer\thanks{Ben-Gurion University of the Negev, Depaprtment of Computer Science. Supported by the Ministry of Science and Technology, Israel. \href{mailto:stemmer@cs.bgu.ac.il}{stemmer@cs.bgu.ac.il}}  \hfill}
\and \makebox[1.5in]{\hfill Jonathan Ullman\thanks{Northeastern University, College of Computer and Information Science.  Part of this work was done while the author was at Columbia University, supported by a Junior Fellowship from the Simons Society of Fellows.  \href{mailto:jullman@cs.columbia.edu}{jullman@ccs.neu.edu}}\hfill}
}
\date{}
\begin{document}
\maketitle
\pagenumbering{gobble}

\begin{abstract}
Adaptivity is an important feature of data analysis---the choice of
questions to ask about a dataset often depends on previous interactions with the same dataset.  However, statistical validity is typically studied in a nonadaptive model, where all questions are specified before the dataset is drawn.  Recent work by Dwork et al. (STOC, 2015) and Hardt and Ullman (FOCS, 2014) initiated the formal study of this problem, and gave the first upper and lower bounds on the achievable generalization error for adaptive data analysis.

Specifically, suppose there is an unknown distribution $\dist$ and a set of $n$ independent samples $\sample$ is drawn from $\dist$.  We seek an algorithm that, given $\sample$ as input, accurately answers a sequence of adaptively chosen ``queries'' about the unknown distribution $\dist$.  How many samples $n$ must we draw from the distribution, as a function of the type of queries, the number of queries, and the desired level of accuracy?

In this work we make two new contributions towards resolving this question:
\begin{enumerate}
\item We give upper bounds on the number of samples $n$ that are
  needed to answer \emph{statistical queries}. The bounds improve and
  simplify the work of Dwork et al.~(STOC, 2015), and have been applied in
  subsequent work by those authors (\emph{Science}, 2015; NIPS,
  2015).

\item We prove the first upper bounds on the number of samples required to answer more general families of queries.  These include arbitrary \emph{low-sensitivity queries} and an important class of \emph{optimization queries} (alternatively, \emph{risk minimization queries}).
\end{enumerate}

As in Dwork et al., our algorithms are based on a connection with \emph{algorithmic stability} in the form of \emph{differential privacy}. We extend their work by giving a quantitatively optimal, more general, and simpler proof of their main theorem that stable algorithms of the kind guaranteed by differential privacy imply low generalization error. We also show that weaker stability guarantees such as bounded KL divergence and total variation distance lead to correspondingly weaker generalization guarantees.
\end{abstract}

\vfill
\pagebreak

\ifnum\short=0
\tableofcontents
\pagebreak
\fi

\ifnum\final=0
{\color{blue}
To do:
\begin{itemize}
\item 
\end{itemize}
}
\pagebreak
\fi

\pagenumbering{arabic}

\ifnum\short=0
\begin{quotation}
\emph{If you torture the data enough, nature will always confess.}
\hfill -- Ronald Coase
\end{quotation}
\fi
\section{Introduction}

Multiple hypothesis testing is a ubiquitous task in empirical
research.  
A finite sample of data is drawn from some unknown population, and
several analyses are performed on that sample.  
The outcome of an analysis is deemed significant if it is unlikely to
have occurred by chance alone, and a ``false discovery'' occurs if the
analyst incorrectly declares an outcome to be significant.  
False discovery has been identified as a substantial problem in the
scientific community (see e.g.~\cite{Ioannidis05, GelmanL13}).  
This problem persists despite decades of research by statisticians on
methods for preventing false discovery, such as the widely used
Bonferroni Correction~\cite{Bonferroni36,Dunn61} and the
Benjamini-Hochberg Procedure~\cite{BenjaminiH79}.

False discovery is often attributed to misuse of statistics.  
An alternative explanation is that the prevalence of false discovery
arises from the inherent \emph{adaptivity} in the data analysis
process---the fact that the choice of analyses to perform depends on previous
interactions with the data (see e.g.~\cite{GelmanL13}).  
Adaptivity is essentially unavoidable when a sequence of research groups
publish research papers based on overlapping data sets.
Adaptivity also arises naturally in other settings, for example: in
multistage inference algorithms where data are preprocessed (say,
to select features or restrict to a principal subspace) before the
main analysis is performed; in scoring data-based
competitions~\cite{BlumH15}; and in the re-use of holdout or test data~\cite{DworkFHPRR-science-15,DworkFHPRR-nips-2015}.

The general problem of adaptive data analysis was
formally modeled and studied in recent papers by \ifnum\short=1 Dwork et al.~\cite{DworkFHPRR15}\else Dwork, Feldman,
Hardt, Pitassi, Reingold, and Roth~\cite{DworkFHPRR15}\fi~and by Hardt
and Ullman~\cite{HardtU14}.  
The striking results of Dwork et al.~\cite{DworkFHPRR15} gave the
first nontrivial algorithms for provably ensuring statistical validity
in adaptive data analysis, allowing for even an \emph{exponential}
number of tests against the same sample.  In contrast,~\cite{HardtU14,
  SteinkeU14} showed inherent statistical and
computational barriers to preventing false discovery in adaptive settings.

The key ingredient in Dwork et al.~is a notion of ``algorithmic
stability'' that is suitable for adaptive analysis.  
Informally, changing one input to a stable algorithm does not change
it's output ``too much.'' Traditionally, stability was measured via
the change in the generalization error of an algorithm's output, and algorithms stable
according to such a criterion have long been known to
ensure statistical validity in nonadaptive
analysis~\cite{DevroyeW79a,DevroyeW79b,KearnsR99,BousquetE02,SSSS10}.  
Following a connection first suggested by McSherry\footnote{See,
  e.g.,~\cite{McSherry-blog}, although the observation itself dates
  back  at least to 2008 (personal communication).}, Dwork et
al.~showed that a stronger stability condition designed to ensure data
privacy---called \emph{differential
  privacy} (DP)~\cite{DworkMNS06,Dwork06}---guarantees statistical validity in
adaptive data analysis. This allowed them to repurpose known
DP algorithms to prevent false discovery.  
A crucial difference from traditional
notions of stability  is that DP requires a change in one input lead to a small change in the \emph{probability
distribution} on the outputs (in particular, differentially private
algorithms must be randomized).
In this paper, we refer to differential privacy as \emph{max-KL-stability}
(Definition~\ref{def:MKLstable}), to emphasize the relation to the
literature on algorithmic stability other
notions of stability we study (KL- and TV-stability, in particular).

In this work, we extend the results of Dwork et al.~along two axes.
First, we give an \emph{optimal} analysis of the statistical validity of max-KL stable algorithms. As a consequence, we immediately obtain the best known bounds on the \emph{sample complexity} (equivalently, the \emph{convergence rate}) of adaptive data analysis.  Second, we generalize the connection between max-KL stability and statistical validity to a much larger family of statistics.  Our proofs are also significantly simpler than those of Dwork et al., and clarify the role of different stability notions in the adaptive setting.

\subsection{Overview of Results}
\paragraph{Adaptivity and Statistical Queries.}
Following the previous work on this subject (\cite{DworkFHPRR15}
), we formalize the problem of adaptive data analysis as follows. There is a \emph{distribution} $\dist$ over some finite universe $\univ$, and a \emph{mechanism} $\mechanism$ that does not know $\dist$, but is given a set $\sample$ consisting of $n$ samples from $\dist$.  Using its sample, the mechanism must answer \emph{queries} on $\dist$.  Here, a query $\query$, coming from some family $Q$, maps a distribution $\dist$ to a real-valued answer.  The mechanism's answer $a$ to a query $\query$ is \emph{$\alpha$-accurate} if $|a-q(\dist)| \leq \alpha$ with high probability.  Importantly, the mechanism's goal is to provide answers that ``generalize'' to the underlying distribution, rather than answers that are specific to its sample.

We model adaptivity by allowing a \emph{data analyst} to ask a sequence of queries $\query_1,\query_2,\dots,\query_{\queries} \in \queryset$ to the mechanism, which responds with answers $a_1, a_2, \dots, a_\queries$. In the adaptive setting, the query $\query_j$ may depend on the previous queries and answers $\query_1,a_1,\dots,\query_{j-1},a_{j-1}$ arbitrarily.  We say the mechanism is \emph{$\alpha$-accurate} given $n$ samples for $\queries$ adaptively chosen queries if, with high probability, when given a vector $\sample$ of $n$ samples from an arbitrary distribution $\dist$, the mechanism accurately responds to any adaptive analyst that makes at most $\queries$ queries.

Dwork et al.~\cite{DworkFHPRR15} considered the family of \emph{statistical queries}~\cite{Kearns93}.  A statistical query $\query$ asks for the expected value of some function on random draws from the distribution.  That is, the query is specified by a function $p \from \univ \to [0,1]$ and its answer is $\query(\dist) = \mathbb{E}_{z \getsr \dist}[p(z)].$

The most natural way to answer a statistical query is to compute the
\emph{empirical answer} $\mathbb{E}_{z \getsr
  \sample}[p(z)],$
which is just the average
value of the function on the given sample $\sample$.\footnote{For
  convenience, we will often use $\sample$ as shorthand for the
  empirical distribution over $\sample$. We use $z \getsr \sample$ to mean a random element chosen from the uniform distribution over the elements of $\sample$.}  
It is simple to show that when $k$ queries are specified
\emph{nonadaptively} (i.e.~independent of previous answers), then
the empirical answer is within $\query(\dist)\pm\alpha$ (henceforth, ``$\alpha$-accurate'') with high
probability so long as the sample has size
$n \gtrsim \log(\queries)
/ \alpha^2$.~\footnote{This guarantee follows from bounding the error of each query using a
  Chernoff bound and then taking a union bound over all queries. The
  $\log \queries$ term corresponds to the Bonferroni correction in
  classical statistics.}
However, when the queries can be chosen adaptively, the empirical
average performs much worse.  
In particular, there is an algorithm (based on \cite{DinurN03}) that, after seeing the empirical
answer to $k = O(\alpha^2 n)$ random queries, can find a query such
that the empirical answer and the correct answer differ by $\alpha$.  
Thus, the empirical average is only guaranteed to be accurate when $n
\gtrsim \queries / \alpha^2$, and so exponentially more samples are required to guarantee accuracy when the queries may be adaptive.

\paragraph{Answering Adaptive Statistical Queries.}
Surprisingly, Dwork et al.~\cite{DworkFHPRR15}, showed there are
mechanisms that are much more effective than na\"ively outputting the
empirical answer.  
They show that ``stable'' mechanisms are accurate
and,  by applying a stable mechanism from the literature on
differential privacy, they obtain a mechanisms that are accurate given
only $n \gtrsim \sqrt{\queries} / \alpha^{2.5}$ samples, which is a
significant 
improvement over the na\"ive
mechanism when $\alpha$ is not too small.
(See Table~\ref{fig:results} for more detailed statements of their results, including results that achieve an \emph{exponential} improvement in the sample complexity when $|\univ|$ is bounded.)

Our first contribution is to give a simpler and quantitatively optimal
analysis of the generalization properties of stable algorithms, which immediately yields new
accuracy bounds for adaptive statistical queries.  
In particular, we show that $n \gtrsim \sqrt{k} / \alpha^2$ samples
suffice.
Since $1/\alpha^2$ samples are required to answer a single nonadaptive
query, our dependence on $\alpha$ is optimal.

\paragraph{Beyond Statistical Queries.}
Although statistical queries are surprisingly general~\cite{Kearns93}, we would like to be able to ask more general queries on the distribution $\dist$ that capture a wider variety of machine learning and data mining tasks.  To this end, we give the first bounds on the sample complexity required to answer large numbers of adaptively chosen \emph{low-sensitivity queries} and \emph{optimization queries,} which we now describe.

Low-sensitivity queries are a generalization of statistical queries. A
query is specified by an arbitrary function $p \from \univ^n \to \R$
satisfying $|p(\sample) - p(\sample')| \leq 1/n$ for every $\sample,
\sample' \in \univ^n$ differing on exactly one element.  The query
applied to the population is defined to be $\query(\dist) =
\mathbb{E}_{\sample \getsr \dist^n}[p(\sample)]$. Examples include
\emph{distance queries} (e.g. ``How far is the sample from being
well-clustered?'') and maxima of statistical queries (e.g. ``What is the
classification error of the best $k$-node decision tree?'')

Optimization queries are a broad generalization of low-sensitivity
queries to arbitrary output domains.  The query is specified by a loss
function $\loss \from \univ^n \times \Theta \to \R$ that is
low-sensitivity in its first parameter, and the goal is to output
$\theta \in \Theta$ that is ``best'' in the sense that it minimizes
the average loss.  Specifically, $\query(\dist) = \argmin_{\theta \in
  \Theta} \mathbb{E}_{z \getsr \dist^n}[\loss(z; \theta)].$ 
An important special case is when $\Theta\subseteq \R^d$ is convex and
$\loss$ is convex in $\theta$, which captures many fundamental
regression and classification problems.

Our sample complexity bounds are summarized in Table~\ref{fig:results}.
\begin{table}[ht!]\small
\begin{center}
\begin{tabular}{|c|c|c|c|}
\hline
\multirow{2}{*}{Query Type} &\multicolumn{2}{c|}{Sample Complexity}& \multirow{2}{*}{Time per Query}\\
\cline{2-3}
             &\cite{DworkFHPRR15}       & This Work & \\
\hline
Statistical ($\queries \ll n^2$) & $ \tilde{O}\left(\dfrac{\sqrt{\queries}}{\alpha^{2.5}} \right)$ & $ \tilde{O}\left(\dfrac{\sqrt{k}}{\alpha^2}\right)$ & $\poly(n, \log |\univ|)$\\
\hline
Statistical ($\queries \gg n^2$) & $ \tilde{O}\left( \dfrac{\sqrt{\log |\univ|} \cdot \log^{3/2} k}{\alpha^{3.5}} \right)$ & $ \tilde{O}\left(\dfrac{\sqrt{\log |\univ|} \cdot \log k}{\alpha^3}\right)$ & $\poly(n, |\univ|)$\\
\hline
Low Sensitivity ($\queries \ll n^2$) & --- & $ \tilde{O}\left(\dfrac{\sqrt{k}}{\alpha^2}\right)$ & $ \poly(n, \log |\univ|)$\\
\hline
Low Sensitivity ($\queries \gg n^2$) & --- & $ \tilde{O}\left(\dfrac{\log |\univ| \cdot \log k}{\alpha^3}\right)$ & $\poly(|\univ|^n)$\\
\hline
Convex Min. ($\queries \ll n^2$) & --- & $\tilde{O}\left(\dfrac{\sqrt{d k}}{\alpha^2}\right)$ & $\poly(n, d, \log |\univ|)$\\
\hline
Convex Min. ($\queries \gg n^2$) & --- & $ \tilde{O}\left(\dfrac{ (\sqrt{d }+  \log k ) \cdot \sqrt{\log |\univ|}}{\alpha^3}\right)$ & $\poly(n, d, |\univ|)$\\
\hline
\end{tabular}
\caption{\label{fig:results}  Summary of Results. Here $k = $ number
  of queries, $n = $ number of samples, $\alpha = $ desired accuracy,
  $\univ = $ universe of possible samples, $d = $ dimension of
  parameter space $\Theta$.}
\end{center}
\end{table}

\ifnum\short=1
\vspace{-2em}
\fi

\paragraph{Subsequent Work.} Our bounds were
applied in subsequent work of Dwork et al.
\cite{DworkFHPRR-science-15,DworkFHPRR-nips-2015} in the analysis of their
``reusable holdout'' construction.

\subsection{Overview of Techniques}
Our main result is a new proof, with optimal parameters, that a stable
algorithm that provides answers to adaptive queries that are close to
the empirical value on the sample gives answers that generalize to the
underlying distribution. In particular, we prove:

\begin{thm} [Main ``Transfer Theorem''] \label{thm:Transfer-intro} 
Let $\mechanism$ be a mechanism that takes a sample $\sample \in \univ^n$ and answers $k$ adaptively-chosen low-sensitivity queries.  Suppose that $\mechanism$ satisfies the following:
\begin{enumerate}
\item For every sample $\sample$, $\mechanism$'s answers are $(\alpha, \alpha \beta)$-accurate\footnote{Accuracy is formally defined in Section \ref{sec:mechanisms}. Informally, a mechanism is $(\alpha,\beta)$-accurate if every answer it produces to an adversarial adaptive sequence of queries is $\alpha$-accurate with probability at least $1-\beta$.}
  with respect to  the sample $\sample$. That is, 
$\textstyle
 \pr{}{\max_{j \in k} \left|q_j(\sample)-a_j\right| \leq \alpha} \geq
1- \alpha \beta,$
where $q_1, \cdots, q_k : \univ^n \to \mathbb{R}$ are the
low-sensitivity queries that are asked and $a_1, \cdots, a_k \in
\mathbb{R}$ are the answers given. The probability is taken only over $\mechanism$'s random coins.
\item 
$\mechanism$ satisfies $(\alpha, \alpha \beta)$-max-KL stability (Definition~\ref{def:MKLstable},  identical to $(\alpha,\alpha \beta)$-differential privacy).
\end{enumerate}
Then, if $\sample$ consists of $n$ samples from an arbitrary distribution $\dist$ over $\univ$, 
$\mechanism$'s answers are $(O(\alpha), O(\beta))$-accurate with
respect to $\dist.$  That is, 
$\textstyle\pr{}{\max_{j \in k} \left|q_j(\dist) - a_j\right| \leq
  O(\alpha)} \geq 1-O(\beta),$
where the probability is taken only over the choice of $\sample \getsr \dist^n$ and $\mechanism$'s random coins.
 \end{thm}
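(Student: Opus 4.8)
The plan is to reduce the high‑probability, $k$‑query guarantee to a single bound on the \emph{expected} worst‑case population error of an auxiliary ``monitor'' algorithm, to derive that bound from the max‑KL stability of $\mechanism$ via a resampling argument, and finally to amplify the expectation bound into the claimed high‑probability statement by running many independent copies of $\mechanism$. Concretely, fix the (worst‑case) analyst $\adv$ that generates queries from $\mechanism$'s answers, and for a parameter $T=\Theta(1/\beta)$ define a thought‑experiment algorithm $\monitor$ that takes $T$ independent samples $\sample_1,\dots,\sample_T\in\univ^n$, runs the interaction between $\mechanism$ and $\adv$ on each $\sample_t$ to obtain a transcript with low‑sensitivity queries $q^t_1,\dots,q^t_k$ and answers $a^t_1,\dots,a^t_k$, and then --- \emph{using knowledge of $\dist$} (it is only an analysis device, never executed) --- outputs $(t^\ast,r^\ast)$, where $(t^\ast,j^\ast)=\argmax_{t,j}|q^t_j(\dist)-a^t_j|$, $q^\ast:=q^{t^\ast}_{j^\ast}$, $a^\ast:=a^{t^\ast}_{j^\ast}$, $b^\ast:=\mathrm{sign}(a^\ast-q^\ast(\dist))\in\{\pm1\}$, and $r^\ast:=b^\ast q^\ast$. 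By construction $b^\ast(a^\ast-q^\ast(\dist))=E$, where $E:=\max_{t,j}|q^t_j(\dist)-a^t_j|$ is the largest population error over all $T$ runs, and $r^\ast$ is again a $1/n$‑sensitivity query. Crucially, changing one of the $nT$ input points perturbs only one run's transcript, and $\adv$ together with the $\argmax/\mathrm{sign}$ step merely post‑process $\mechanism$'s outputs; since post‑processing and the trivial ``one block changes'' composition preserve max‑KL stability, $\monitor$ is itself $(\alpha,\alpha\beta)$‑max‑KL stable.

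The core of the argument is a \emph{generalization lemma for stable algorithms}: if a max‑KL stable algorithm outputs a low‑sensitivity query $r$ (together with an index $t$), then $|\mathbb{E}[r(\sample_t)]-\mathbb{E}[r(\dist)]|=O(\alpha)+O(T\cdot\alpha\beta)$, where the expectation is over $\sample_1,\dots,\sample_T\getsr\dist^n$ and $\monitor$'s coins. I would prove this by resampling one input point at a time: replacing a point by a fresh draw from $\dist$ changes the distribution of $\monitor$'s output by at most a multiplicative $e^{\alpha}$ factor (plus an additive $\alpha\beta$), by max‑KL stability, while the $1/n$‑sensitivity of $r$ controls how much $r$ evaluated on the partially‑resampled sample moves; a ``swap'' symmetrization between the old and the fresh point turns each hybrid step into a genuine stability comparison between two neighboring inputs. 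For statistical queries this is clean and lossless: $r(\sample_t)=\tfrac1n\sum_i\phi(x_{t,i})$ decomposes over coordinates, one pays the $e^{\alpha}-1+\alpha\beta$ cost once per coordinate, and the $\tfrac1n$ averaging exactly offsets the $n$ coordinates, giving the bound $e^{\alpha}-1+O(\alpha\beta)$. Applying the lemma to $r^\ast$ and choosing $T=\Theta(1/\beta)$ (so $T\alpha\beta=O(\alpha)$) yields $\bigl|\mathbb{E}[b^\ast q^\ast(\sample_{t^\ast})]-\mathbb{E}[b^\ast q^\ast(\dist)]\bigr|=O(\alpha)$.

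The delicate point --- and the step I expect to be the main obstacle --- is establishing this lemma with \emph{tight} constants for general low‑sensitivity queries. There the population value is $q(\dist)=\mathbb{E}_{\sample'\getsr\dist^n}[q(\sample')]$, so the empirical and population quantities are linked only by resampling the \emph{entire} dataset, and a naive hybrid would charge the stability cost $n$ times, which blows up and is useless in the relevant parameter regime. The resampling must therefore be organized so that the $1/n$‑sensitivity absorbs the per‑step cost (or, alternatively, so that a bounded‑differences/concentration argument handles the ``traversal'' across the sample while stability is charged only for the adaptivity), and making this accounting come out to $O(\alpha)$ rather than something growing with $n$ is the heart of the whole proof.

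To finish, recall $E=b^\ast(a^\ast-q^\ast(\dist))$ and write it as $E=b^\ast\bigl(a^\ast-q^\ast(\sample_{t^\ast})\bigr)+b^\ast\bigl(q^\ast(\sample_{t^\ast})-q^\ast(\dist)\bigr)$. The second summand has expectation $O(\alpha)$ by the generalization lemma; the first is at most $\max_j|a^{t^\ast}_j-q^{t^\ast}_j(\sample_{t^\ast})|$, which is $\le\alpha$ on the probability‑$(1-\alpha\beta)$ event (for each run) that $\mechanism$ is sample‑accurate and at most $1$ otherwise (without loss of generality answers lie in the range of the queries), so a union bound over the $T$ runs bounds its expectation by $\alpha+T\alpha\beta=O(\alpha)$. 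Hence $\mathbb{E}[E]=\mathbb{E}\bigl[\max_{t,j}|q^t_j(\dist)-a^t_j|\bigr]\le C\alpha$ for an absolute constant $C$. Now suppose, for contradiction, that a single run of $\mechanism$ on a fresh sample produced some answer more than $c\alpha$ from the true population value with probability at least $\beta$, for a large constant $c$; since the $T=\lceil1/\beta\rceil$ runs are i.i.d., with probability at least $1-(1-\beta)^{1/\beta}\ge 1-e^{-1}$ some run has such a bad answer, so $\mathbb{E}[E]\ge c\alpha(1-e^{-1})$, and choosing $c>C/(1-e^{-1})$ contradicts $\mathbb{E}[E]\le C\alpha$. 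Therefore, except with probability $<\beta$, every answer of $\mechanism$ is $O(\alpha)$‑accurate with respect to $\dist$, which is exactly the asserted $(O(\alpha),O(\beta))$‑accuracy.
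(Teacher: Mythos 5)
Your architecture is the same as the paper's proof of Theorem~\ref{thm:MKLtoAdaptive}: a monitor that simulates $\trials=\Theta(1/\beta)$ independent interactions, selects the query with worst population error and negates it so the error is one-sided, inherits max-KL stability with no loss by post-processing (Claim~\ref{clm:MKLstablemonitor}), and a final decomposition into sample-error and generalization-error terms followed by the $(1-(1-\beta)^{\trials})$ amplification argument, matching Claims~\ref{clm:amplify} and~\ref{clm:TVContradiction-hp}. Your sketch of the de-correlated expectation lemma for statistical queries, including the swap/exchangeability step, is the paper's Lemma~\ref{lem:MKLCondExpSQ}, and your closing union-bound and contradiction are sound.

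The gap is that Theorem~\ref{thm:Transfer-intro} is about general low-sensitivity queries, and you explicitly leave the de-correlated expectation lemma unproven in that generality, calling it ``the main obstacle.'' That lemma is exactly Lemma~\ref{lem:MKLCondExp}, and without it your argument only covers statistical queries. The missing idea is to telescope not over the query values themselves but over per-coordinate \emph{increments}: take a fresh $\samples'\getsr(\dist^n)^{\trials}$, form hybrids $\samples^{\ell,m}$ that replace one coordinate at a time, and define $B^{\ell,m}(\samples,\mathbf{Z})=\query(\mathbf{z}_{\trial})-\query(\mathbf{z}_{\trial,-\ell})+\Delta$ when $\trial=m$ (and $0$ otherwise), where $(\query,\trial)=\monitor(\samples)$ and $\mathbf{z}_{\trial,-\ell}$ fixes the $\ell$-th coordinate to an arbitrary point. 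Each $B^{\ell,m}$ is a post-processing of $\monitor(\samples)$, hence $(\eps,\delta)$-max-KL stable in its first argument, and is $[0,2\Delta]$-valued by sensitivity; the exchangeability of $(\samples,\samples^{\ell,m})$ with $(\samples^{\ell,m},\samples)$ converts each hybrid comparison into a genuine neighbouring-input stability comparison, so each of the $n$ coordinate steps costs only $2(e^{\eps}-1+\trials\delta)\Delta$, and summing gives $2(e^{\eps}-1+\trials\delta)\Delta n=O(\alpha)$ under the theorem's parameters ($\eps=O(\alpha)$, $\delta=O(\alpha\beta)$, $\trials=O(1/\beta)$, $\Delta n=O(1)$). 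In other words, your worry that ``a naive hybrid would charge the stability cost $n$ times'' is resolved not by avoiding the $n$ charges but by making each charge proportional to $\Delta$; the alternative you float (a bounded-differences/concentration argument for the traversal across the sample) is not what the proof needs and is not obviously workable, since the adaptivity is entangled with every coordinate through $\monitor$'s choice of $(\query,\trial)$.
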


Our actual result is somewhat more general than Theorem~\ref{thm:Transfer-intro}.  We
show that the population-level error of a stable algorithm is close
to its error on the sample, whether or not that error is low. Put
glibly: stable algorithms cannot be wrong without realizing it.

Compared to the results of \cite{DworkFHPRR15}, Theorem \ref{thm:Transfer-intro} requires a quantitatively weaker stability guarantee---$(\alpha,\alpha \beta)$-stability, instead of $(\alpha,(\beta/k)^{1/\alpha})$-stability.  It also applies to arbitrary low-sensitivity queries as opposed to the special case of statistical queries.

Our analysis differs from that of Dwork et al.~in two key ways. First,
we give a better bound on the probability with which a \emph{single}
low-sensitivity query output by a max-KL stable algorithm has good
generalization error. Second, we show a reduction from the case of
\emph{many} queries to the case of a single query that has no loss in
parameters (in contrast, previous work took a union bound over
queries, leading to a dependence on $k$, the number of queries). 

Both steps rely on a thought experiment in which several
``real'' executions of a stable algorithm are simulated inside another
algorithm, called a \emph{monitor}, which outputs a function of the
``real'' transcripts. Because stability is closed under
post-processing, the monitor is itself stable. Because
it exists only as a thought experiment, the monitor can be given
knowledge of the true distribution from which the data are drawn, and
can use this knowledge to process the outputs of the simulated
``real'' runs.  The monitor technique allows us to start from a
basic guarantee, which states that a single query has good
generalization error with constant probability, and amplify the
guarantee so that
\begin{enumerate*}[label=(\emph{\alph*})]
\item the generalization error holds with very high probability, and
\item the guarantee holds simultaneously over all queries in a
  sequence.
\end{enumerate*}
The proof of the basic guarantee follows the lines of existing proofs
using algorithmic stability (e.g., \cite{DevroyeW79a}), while the
monitor technique and the resulting amplification statements are new.

 The amplification of success probability is the more
technically sophisticated of the two key steps. The idea is to run many (about
$1/\beta$, using the notation of
Theorem~\ref{thm:Transfer-intro}) copies of a stable mechanism on
independently selected data sets. Each of these interactions results
in a sequence of queries and answers. The monitor then selects the the query and answer pair from amongst all of the sequences that has the largest error. It then outputs this query as well as the
index of the interaction that produced it. Our main technical lemma shows that the monitor will find a
``bad'' query/dataset pair (one where the true and empirical values of
the query differ) with at most constant probability. This implies that the each of
the real executions outputs a bad query with probability
$O(\beta)$. Relative to previous work, the resulting argument yields
better bounds, applies to more general classes of queries, and even
generalizes to other notions of stability.

\ifnum\short=1
\paragraph{Optimality and Computational Complexity.}
In general, we do not know that our bounds are optimal.  The current best lower bounds on the sample complexity of answering adaptively chosen statistical queries are $n \gtrsim \log(\queries)/\alpha^2$ (which applies even for non-adaptive SQs) and $n \gtrsim \min\{\sqrt{\queries}, \sqrt{\log |\univ|}\} / \alpha$~\cite{HardtU14, SteinkeU14}.  However, we do know that
\begin{enumerate*}[label=(\emph{\alph*})]
\item our connection between max-KL stability and generalization is optimal, and 
\item for every setting we consider, the corresponding max-KL stable algorithm is optimal~\cite{BunUV14, BassilyST14}.
\end{enumerate*}
Thus, significantly different techniques will be required to improve our results.

We also remark that, while several of our algorithms have an
undesirable polynomial dependence on $|\univ|$ in their running times
(since the input is a dataset of $n \log |\univ|$ bits, a
computationally efficient algorithm would run in time polynomial in
$\log |\univ|$), this is known to be inherent under the widely
believed assumption that exponentially hard one-way functions
exist~\cite{HardtU14, SteinkeU14}.

\else

\paragraph{Optimality.}
In general, we cannot prove that our bounds are optimal.  Even for nonadaptive statistical queries, $n \gtrsim \log(\queries) / \alpha^2$ samples are necessary, and~\cite{HardtU14, SteinkeU14} showed that $n \gtrsim \min\{\sqrt{\queries}, \sqrt{\log |\univ|}\} / \alpha$ samples are necessary to answer adaptively chosen statistical queries.  However, the gap between the upper and lower bounds is still significant.

However, we can show that our connection between max-KL stability and generalization is optimal (see Section~\ref{sec:opt} for details). Moreover, for every family of queries we consider, no max-KL stable algorithm can achieve better sample complexity~\cite{BunUV14, BassilyST14}. Thus, any significant improvement to our bounds must come from using a weaker notion of stability or some entirely different approach.

\paragraph{Computational Complexity.}
Throughout, we will assume that the analyst only issues queries $\query$ such that the empirical answer $\query(\sample)$ can be evaluated in time $\poly(n, \log |\univ|).$  When $\queries \ll n^2$ our algorithms have similar running time.  However, when answering $\queries \gg n^2$ queries, our algorithms suffer running time at least $\poly(n, |\univ|)$.  Since the mechanism's input is of size $n \cdot \log |\univ|$, these algorithms cannot be considered computationally efficient.  For example, if $\univ = \zo^{d}$ for some dimension $d$, then in the non adaptive setting $\poly(n, d)$ running time would suffice, whereas our algorithms require $\poly(n, 2^d)$ running time.  Unfortunately, this running time is known to be optimal, as~\cite{HardtU14, SteinkeU14} (building on hardness results in privacy~\cite{Ullman13}) showed that, assuming exponentially hard one-way functions exist, any $\poly(n, 2^{o(d)})$ time mechanism that answers $\queries = \omega(n^2)$ statistical queries is not even $1/3$-accurate.

\paragraph{Stable / Differentially Private Mechanisms.}
Each of our results requires instantiating the mechanism with a suitable stable / differentially private algorithm.  For statistical queries, the optimal mechanisms are the well known Gaussian and Laplace Mechanisms (slightly refined by~\cite{SteinkeU15}) when $k$ is small and the Private Multiplicative Weights Mechanism~\cite{HardtR10} when $\queries$ is large.  For arbitrary low-sensitivity queries, the Gaussian or Laplace Mechanism is again optimal when $k$ is small, and for large $\queries$ we can use the Median Mechanism~\cite{RothR10}.

When considering arbitrary search queries over an arbitrary finite
range, the optimal algorithm is the Exponential
Mechanism~\cite{McSherryT07}.  For the special case of convex
minimization queries over an infinite domain, we use the optimal
algorithm of \cite{BassilyST14} when $k$ is small, and when $k$ is large, we  use an algorithm of~\cite{Ullman15} that accurately answers exponentially many such queries.

\paragraph{Other Notions of Stability} Our techniques applies to
notions of distributional stability other than max-KL/differential
privacy. In particular, defining stability in terms of total variation
(TV) or KL divergence (KL) leads to bounds on the 
generalization error that have polynomially, rather than
exponentially, decreasing tails. See
Section~\ref{sec:othernotions-bounds} for details.

\fi

\ifnum\short=1
\paragraph{This Abstract.} 
We present here our core technical claims,
along with some proofs. The full version of the paper, appended, contains full proofs along with additional results and
applications. 
\fi

\section{Preliminaries}
\subsection{Queries}
Given a distribution $\dist$ over $\univ$ or a sample $\sample =
(\sample_1, \cdots, \sample_n) \in \univ^n$, we would like to answer
\emph{queries} about $\dist$ or $x$ from some family $\queryset$.  We
will often want to bound the ``sensitivity'' of the queries with
respect to changing one element of the sample.  To this end, we use $x
\sim x'$ to denote that $x, x' \in \univ^n$ differ on at most one
entry.  We will consider several different families of queries:
\begin{itemize}[leftmargin=1em]
\ifnum\short =0
\item
\textbf{Statistical Queries:} These queries are specified by a function $\query \from \univ \to [0,1]$, and (abusing notation) are defined as $$\query(\dist) = \ex{z \getsr \dist}{\query(z)} \eqand \query(\sample) = \frac{1}{n} \sum_{i\in [n]} \query(\samp_i).$$
The error of an answer $a$ to a statistical query $q$ with respect to $\dist$ or $\sample$ is defined to be $$\errx{\sample}{q}{a} = a-q(\sample) \eqand \errp{\dist}{q}{a} = a-q(\dist).$$
\fi

\item \textbf{$\Delta$-Sensitive Queries:} For $\Delta \in [0,1]$, $n
  \in \N$, these queries are specified by a function $\query \from
  \univ^n \to \R$ satisfying $|q(x)-q(x')| \leq \Delta$ for every pair
  $x, x' \in \univ^n$ differing in only one entry. Abusing notation, let $$\query(\dist) = \ex{z \getsr \dist^n}{\query(z)}.$$
The error of an answer $a$ to a $\Delta$-sensitive query $q$ with respect to $\dist$ or $\sample$ is defined to be $$\errx{\sample}{q}{a} = a-q(\sample) \eqand \errp{\dist}{q}{a} = \ex{z \getsr \dist^n}{\errx{z}{q}{a}} = a-q(\dist).$$ We denote the set of all $\Delta$-sensitive queries by $\querysetDelta$.  If $\Delta = O(1/n)$ we say the query is \emph{low sensitivity}.
\ifnum\short =0
Note that $1/n$-sensitive queries are a strict generalization of statistical queries. 
\else
A special case of $1/n$-sensitive queries is \textbf{statistical queries}. A statistical query is specified by a function $\query \from \univ \to [0,1]$ and defined by $\query(\sample) = \frac{1}{n} \sum_{i\in [n]} \query(\samp_i)$, whence $\query(\dist) = \ex{z \getsr \dist}{\query(z)}$.
\fi

\item \textbf{Minimization Queries:} These queries are specified by a
  loss function $\loss \from \univ^n \times \Theta \to \R$. We require that  $\loss$ has sensitivity $\Delta$ with respect to its
  first parameter, that is, $$\sup_{\theta \in \Theta, \ x,x' \in
    \univ^n, \ x \sim x'} |L(x;\theta)-L(x';\theta)|\leq \Delta \, . $$
 Here
  $\Theta$ is an arbitrary set of items (sometimes called ``parameter
  values'') among which we aim to chose the item (``parameter'') with minimal
  loss, either with
  respect to a particular input data set $x$, or with respect to
  expectation over a distribution $\dist$.

\ifnum\short =0
The error of an answer $\theta \in \Theta$ to a minimization query  $\loss \from \univ^n \times \Theta \to \R$ with respect to $\sample$ is defined to be $$\errx{\sample}{\loss}{\theta} = \loss(\sample,\theta) - \min_{\theta^* \in \Theta} \loss(\sample,\theta^*)$$ and, with respect to $\dist$, is $$\errp{\dist}{\loss}{\theta} = \ex{z \getsr \dist^n}{\errx{z}{\loss}{\theta}}= \ex{z \getsr \dist^n}{\loss(z,\theta)} -  \ex{z \getsr \dist^n}{\min_{\theta^* \in \Theta} \loss(z,\theta^*)}.$$
Note that $\min_{\theta^* \in \Theta} \ex{z \getsr \dist^n}{\loss(z,\theta^*)} \geq \ex{z \getsr \dist^n}{\min_{\theta^* \in \Theta} \loss(z,\theta^*)}$, whence $$\ex{z \getsr \dist^n}{\loss(z,\theta)} -  \min_{\theta^* \in \Theta} \ex{z \getsr \dist^n}{ \loss(z,\theta^*)} \leq \errp{\dist}{\loss}{\theta} .$$

Note that minimization queries (with $\Theta = \R$) generalize low-sensitivity queries: Given a $\Delta$-sensitive $q : \univ^n \to \mathbb{R}$, we can define $\loss(\sample;\theta) = |\theta-q(\sample)|$ to obtain a minimization query with the same answer.

We denote the set of minimization queries by $\querysetMIN$.
We highlight two  special cases:
\begin{itemize}
\item \emph{Minimization for Finite Sets:} We denote by $\querysetfinite{D}$
  the set of minimization queries where $\Theta$ is finite with size at
  most $D$.
\item \emph{Convex Minimization Queries:} If $\Theta\subset \R^d$ is closed and
convex and $L(\sample;\cdot)$ is convex on $\Theta$ for every data set $\sample$, then the query can be answered nonprivately up to any desired error $\alpha$, in time polynomial in $d$ and $\alpha$.  We denote the set
  of all convex minimization queries by $\querysetCM$.
\end{itemize}
\else
The error of an answer $\theta \in \Theta$ to a minimization query  $\loss \from \univ^n \times \Theta \to \R$ with respect to a sample $\sample$ or distribution $\dist$ is defined to be $$\errx{\sample}{\loss}{\theta} = \loss(\sample,\theta) - \min_{\theta^* \in \Theta} \loss(\sample,\theta^*) \qquad\text{and}\qquad \errp{\dist}{\loss}{\theta} = \ex{z \getsr \dist^n}{\errx{z}{\loss}{\theta}}.$$ 
Note that $\ex{z \getsr \dist^n}{\loss(z,\theta)} -  \min_{\theta^* \in \Theta} \ex{z \getsr \dist^n}{ \loss(z,\theta^*)} \leq \errp{\dist}{\loss}{\theta}.$ We denote the set of minimization queries by $\querysetMIN$.
\fi

\end{itemize}

\subsection{Mechanisms for Adaptive Queries} \label{sec:mechanisms}
Our goal is to design a \emph{mechanism} $\mechanism$ that answers queries on $\dist$ using only independent samples $\samp_{1},\dots,\samp_{n} \getsr \dist$.  Our focus is the case where the queries are chosen adaptively and adversarially.

Specifically, $\mechanism$ is a stateful algorithm that holds a collection of samples $\samp_1,\dots,\samp_{n} \in \univ$, takes a query $\query$ from some family $\queryset$ as input, and returns an answer $a$.  We require that when $\samp_1,\dots,\samp_{n}$ are independent samples from $\dist$, the answer $a$ is ``close'' to $\query(\dist)$ in a sense that is appropriate for the family of queries.  Moreover we require that this condition holds for every query in an adaptively chosen sequence $\query_{1},\dots, \query_{\queries}$.  Formally, we define an accuracy game between a mechanism $\mechanism$ and a stateful \emph{data analyst} $\adv$ in Figure \ref{fig:accgame1}.
\begin{figure}[ht!]
\begin{framed}
\begin{algorithmic}
\STATE{$\adv$ chooses a distribution $\dist$ over $\univ$.}
\STATE{Sample $\samp_1,\dots,\samp_{n} \getsr \dist$, let $\sample = (\samp_1,\dots,\samp_{n})$. (Note that $\adv$ does not know $\sample$.)}
\STATE{For $j = 1,\dots,\queries$}
\INDSTATE[1]{$\adv$ outputs a query $\query_{j} \in \queryset$.}
\INDSTATE[1]{$\mechanism(\sample, \query_{j})$ outputs $a_{j}$.}
\INDSTATE[1]{(As $\adv$ and $\mechanism$ are stateful, $q_j$ and $a_j$ may depend on the history $q_1,a_1,\dots,q_{j-1},a_{j-1}$.)}
\end{algorithmic}
\end{framed}
\vspace{-6mm}
\caption{The Accuracy Game $\accgame_{n, \queries, \queryset}[\mechanism, \adv]$ \label{fig:accgame1}}
\end{figure}

\newcommand{\transcript}{\mathsf{Int}}

\begin{definition}[Accuracy] \label{def:accuratemechanism}
A mechanism $\mechanism$ is \emph{$(\alpha,\beta)$-accurate with respect to the population for $\queries$ adaptively chosen queries from $\queryset$ given $n$ samples in $\univ$} if for every adversary $\adv$,
$$
\pr{\accgame_{n, \queries, \queryset}[\mechanism, \adv]}{
\max_{ j \in [\queries]} ~ \left| \errp{\dist}{q_j}{a_j} \right|\leq \alpha} \geq 1 - \beta.
$$
\end{definition}

We will also use a definition of accuracy relative to the sample given to the mechanism\ifnum\short=1.\else, described in Figure~\ref{fig:sampleaccgame1}.\fi

\begin{figure}[ht!]
\begin{framed}
\begin{algorithmic}
\STATE{$\adv$ chooses $\sample = (\sample_1,\dots,\sample_{n}) \in  \univ^n$.}
\STATE{For $j = 1,\dots,\queries$}
\INDSTATE[1]{$\adv$ outputs a query $\query_{j} \in \queryset$.}
\INDSTATE[1]{$\mechanism(\sample, \query_{j})$ outputs $a_{j}$.}
\INDSTATE[1]{($q_j$ and $a_j$ may depend on the history $q_1,a_1,\dots,q_{j-1},a_{j-1}$ and on $\sample$.)}
\end{algorithmic}
\end{framed}
\vspace{-6mm}
\caption{The Sample Accuracy Game $\sampaccgame_{n, \queries, \queryset}[\mechanism, \adv]$\label{fig:sampleaccgame1}}
\end{figure}

\begin{definition}[Sample Accuracy] \label{def:accuratemechanism-samp}
A mechanism $\mechanism$ is \emph{$(\alpha,\beta)$-accurate with respect to samples of size $n$ from $\univ$ for $\queries$ adaptively chosen queries from $\queryset$} if for every adversary $\adv$,
$$
\pr{\sampaccgame_{n, \queries, \queryset}[\mechanism, \adv]}{\max_{ j \in [\queries] }~~~ \left| \errx{\sample}{q_j}{a_j} \right| \leq \alpha} \geq 1-\beta.
$$
\end{definition}

\subsection{Max-KL Stability (a.k.a. Differential Privacy)}\label{subsec:max-kl} 
Informally, an algorithm is ``stable'' if changing one of its inputs does not change its output ``too much.''  For our results, we will consider randomized algorithms, and require that changing one input does not change the \emph{distribution} of the algorithm's outputs too much.  With this in mind, we will define here one notion of algorithmic stability that is related to the well-known notion of KL-divergence between distributions. \longshort{In Section~\ref{sec:other-notions}, we will give}{In the full version of this paper, we consider} other related notions of algorithmic stability based on different notions of closeness between distributions.

\begin{definition}[Max-KL Stability]
\label{def:MKLstable}
Let $\alg \from \univ^n \to \cR$ be a randomized algorithm.  We say that $\alg$ is \emph{$(\eps, \delta)$-max-KL stable} if for every pair of samples $\sample, \sample'$ that differ on exactly one element, and every $R \subseteq \cR$, 
$$
\pr{}{\alg(\sample) \in R} \leq e^{\eps} \cdot \pr{}{\alg(\sample') \in R} + \delta.
$$
\end{definition}
This notion of $(\eps, \delta)$-max-KL stability is also commonly known as \emph{$(\eps, \delta)$-differential privacy}~\cite{DworkMNS06}, however in this context we choose the term max-KL stability to emphasize the conceptual relationship between this notion and other notions of algorithmic stability that have been studied in machine learning. We also emphasise that our work has a very different motivation to the motivation of differential privacy --- stable algorithms are desirable even when privacy is not a concern, such as when the data does not concern humans.

In our analysis, we will make crucial use of the fact that max-KL-stability \longshort{(as well as the other notions of stability discussed in Section~\ref{sec:other-notions})}{}is \emph{closed under post-processing}.

\begin{lemma}[Post-Processing] \label{lem:postprocessing}
Let $\alg \from \univ^n \to \cR$ and $f \from \cR \to \cR'$ be a pair of randomized algorithms.  If $\alg$ is $(\eps, \delta)$-max-KL-stable then the algorithm $f(\alg(\sample))$ is $(\eps, \delta)$-max-KL-stable.
\end{lemma}

\ifnum\short = 0
\subsubsection{Stability for Interactive Mechanisms}
\label{subsec:interact-stability}
The definition we gave above does not immediately apply to algorithms that interact with a data analyst to answer adaptively chosen queries.  Such a mechanism does not simply take a sample $\sample$ as input and produce an output.  Instead, in the interactive setting, there is a mechanism $\mechanism$ that holds a sample $\sample$ and interacts with some algorithm $\adv.$  We can view this entire interaction between $\mechanism$ and $\adv$ as a single noninteractive meta algorithm that outputs the transcript of the interaction and define stability with respect to that meta algorithm.  Specifically, we define the algorithm $\alg[\mechanism, \adv](x)$ that simulates the interaction between $\mechanism(x)$ and $\adv$ and outputs the messages sent between them.  The simulation is also parameterized by $n, \queries, \queryset,$ although we will frequently omit these parameters when they are clear from context.  
\begin{figure}[ht!]
\begin{framed}
\begin{algorithmic}
\STATE{\begin{center}$\alg_{n, \queries, \queryset}[\mechanism, \adv] \from \univ^n \to (\queryset \times \cR)^k$.\end{center}}
\INDSTATE[0]{\textbf{Input:} A sample $\sample \in \univ^n$}
\INDSTATE[1]{For $j = 1,\dots,\queries$}
\INDSTATE[2]{Feed $a_{j-1}$ to $\adv$ and get a query $\query_{j} \in \queryset.$}
\INDSTATE[2]{Feed $\query_{j}$ to $\mechanism(\sample)$ and get an answer $a_{j} \in \cR.$}
\INDSTATE[1]{Output $( (\query_{1}, a_{1}),\dots, (\query_{\queries}, a_{\queries})).$}
\end{algorithmic}
\end{framed}
\end{figure}

Note that $\alg[\mechanism, \adv]$ is a noninteractive mechanism, and its output is just the query-answer pairs of $\mechanism$ and $\adv$ in the sample accuracy game, subject to the mechanism being given the sample $\sample.$  Now we can define the stability of an interactive mechanism $\mechanism$ using $\alg.$
\begin{definition}\label{def:interact-stability}[Stability of for Interactive Mechanism]
We say an interactive mechanism $\mechanism$ is \emph{$(\eps, \delta)$-max-KL stable for $\queries$ queries from $\queryset$} if for every adversary $\adv$, the algorithm $\alg_{n, \queries, \queryset}[\mechanism, \adv](\sample) \from \univ^n \to (\queryset \times \cR)^{k}$ is $(\eps, \delta)$-max-KL stable.
\end{definition}
\else
\paragraph{Stability for Interactive Mechanisms}
The definition we gave above does not immediately apply to algorithms that interact with a data analyst to answer adaptively chosen queries.  Such a mechanism does not simply take a sample $\sample$ as input and produce an output.  Instead, in the interactive setting, there is a mechanism $\mechanism$ that holds a sample $\sample$ and interacts with some algorithm $\adv.$  We can view this entire interaction between $\mechanism$ and $\adv$ as a single noninteractive algorithm that outputs the transcript of the interaction and define stability with respect to that meta algorithm.  Specifically, we define the algorithm $\alg[\mechanism, \adv](x)$ that simulates the interaction between $\mechanism(x)$ and $\adv$ and outputs the messages sent between them. We say an interactive mechanism $\mechanism$ is \emph{$(\eps, \delta)$-max-KL stable for $\queries$ queries from $\queryset$} if for every adversary $\adv$, the algorithm $\alg[\mechanism, \adv](\sample) \from \univ^n \to (\queryset \times \cR)^{k}$ is $(\eps, \delta)$-max-KL stable.
\fi

\ifnum\short = 0
\subsubsection{Composition of Max-KL Stability}
\else
\paragraph{Composition of Max-KL Stability}
\fi
The definition above allows for \emph{adaptive composition.} This follows directly from composition results of \emph{$(\eps, \delta)$-differentially private} algorithms. A mechanism that is $(\eps, \delta)$-max-KL stable for $1$ query is $(\approx \eps \sqrt{k}, \approx \delta k)$-stable for $k$ adaptively chosen queries~\cite{DworkMNS06, DworkRV10}. More precisely, for every $0 \leq \eps \leq 1$ and $\delta, \delta' > 0$, if a mechanism that is $(\eps, \delta)$-max-KL stable for $1$ query is used to answer $k$ adaptively chosen queries, it remains $(\eps \sqrt{k\log(1/\delta')} + 2\eps^2 k,\; \delta' + k\delta)$-max-KL stable~\cite{DworkRV10}.

\section{From Max-KL Stability to Accuracy for Low-Sensitivity Queries}
In this section we prove our main result that any mechanism that is both accurate with respect to the sample and satisfies max-KL stability (with suitable parameters) is also accurate with respect to the population.  The proof proceeds in two mains steps.  First, we prove a lemma that says that there is no max-KL stable mechanism that takes several independent sets of samples from the distribution and finds a query and a set of samples such that the answer to that query on that set of samples is very different from the answer to that query on the population.  \longshort{In Section~\ref{sec:SQCE} we prove this lemma for the simpler case of statistical queries and then in~\ref{sec:lowsensCE} we extend the proof to the more general case of low-sensitivity queries.}{}

The second step is to introduce a \emph{monitoring algorithm}. This
monitoring algorithm will simulate the interaction between the
mechanism and the adversary on multiple independent sets of samples. 
\rcomm{(I think it is better to say explicitly that the monitoring
  algorithm knows the distribution $\dist$. We don't seem to state
  this explicitly except that we denote the monitoring as
  $\mathcal{W}_{\dist}$.)} 
It will then output the least accurate query across all the different interactions.  We show that if the mechanism is stable then the monitoring algorithm is also stable.  By choosing the number of sets of samples appropriately, we ensure that if the mechanism has even a small probability of being inaccurate in a given interaction, then the monitor will have a constant probability of finding an inaccurate query in one of the interactions.  By the lemma proven in the first step, no such monitoring algorithm can satisfy max-KL stability, therefore every stable mechanism must be accurate with high probability.

\newcommand{\trials}{T}
\newcommand{\trial}{t}
\newcommand{\samples}{\mathbf{X}}

\ifnum\short=0
\subsection{Warmup: A Single-Sample De-Correlated Expectation Lemma for SQs} \label{sec:singleSQCE}
As a warmup, in this section we give a simpler version of our main lemma for the case of statistical queries and a single sample.  Although these results follow from the results of Section~\ref{sec:lowsensCE} on general low-sensitivity queries, we include the simpler version to introduce the main ideas in the cleanest possible setting.

\begin{lem} \label{lem:SKLCondExpSQ}
Let $\monitor \from (\univ^{n})^{\trials} \to \queryset$ be $(\eps,\delta)$-max-KL stable where $\queryset$ is the class statistical queries $\query : \univ \to [0,1]$. Let $\dist$ be a distribution on $\univ$ and let $\sample \getsr \dist^n$. Then\footnote{The notation $\ex{\sample,\monitor}{\query(\dist) \mid \query = \monitor(\sample)}$ should be read as ``the expectation of $\query(\dist)$, where $\query$ denotes the output of $\monitor(\sample)$.'' That is, the ``event'' being conditioned on is simply a definition of the random variable $\query$. }
$$
\left|\ex{\sample,\monitor}{\query(\dist) \mid \query = \monitor(\sample)} - \ex{\samples,\monitor}{\query(\sample) \mid \query = \monitor(\sample)} \right| \leq e^{\eps} - 1 + \delta.
$$
\end{lem}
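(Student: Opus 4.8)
The plan is to show that even though the statistical query $\query = \monitor(\sample)$ may be chosen as a function of $\sample$, its empirical mean $\query(\sample)$ tracks (in expectation) its population mean $\query(\dist)$; the only tool needed is a coordinate-by-coordinate resampling argument that invokes max-KL stability through its closure under post-processing (Lemma~\ref{lem:postprocessing}). I will present it for a mechanism $\monitor : \univ^n \to \queryset$ seeing a single $\dist^n$-sample $\sample$; the $\trials$-dataset case is identical with $\trials n$ i.i.d.\ draws in place of $n$.

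Write $\sample = (\samp_1,\dots,\samp_n)$, and for each $i \in [n]$ let $\samp_i' \getsr \dist$ be a fresh draw independent of everything else; let $\sample^{(i)}$ denote $\sample$ with its $i$-th entry replaced by $\samp_i'$, so that $\sample^{(i)}$ is again distributed as $\dist^n$. For each fixed $i$ I would compare the two quantities
$$A_i = \ex{\sample,\monitor}{\query(\samp_i) \mid \query = \monitor(\sample)} \qquad\text{and}\qquad B_i = \ex{\sample,\samp_i',\monitor}{\query(\samp_i) \mid \query = \monitor(\sample^{(i)})}.$$
For $B_i$: since $\samp_i$ does not appear in the input $\sample^{(i)}$ of $\monitor$, conditioned on the output $\query$ it is still a fresh sample from $\dist$, so $B_i = \ex{}{\query(\dist) \mid \query = \monitor(\sample^{(i)})}$; and because $\sample^{(i)} \getsr \dist^n$, this equals the common value $B := \ex{\sample,\monitor}{\query(\dist) \mid \query = \monitor(\sample)}$, independent of $i$.

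To bound $|A_i - B_i|$, condition on $\samp_i$, $\samp_i'$, and $(\samp_j)_{j \ne i}$: then $\monitor(\sample)$ and $\monitor(\sample^{(i)})$ are two runs of the $(\eps,\delta)$-max-KL stable algorithm $\monitor$ on datasets differing in exactly one entry, and $\query \mapsto \query(\samp_i) \in [0,1]$ is a fixed post-processing, so by Lemma~\ref{lem:postprocessing} the laws of $\query(\samp_i)$ under the two experiments are $(\eps,\delta)$-close (in both directions). Writing the expectation of a $[0,1]$-valued random variable $U$ as $\int_0^1 \pr{}{U > t}\,dt$ and using $\pr{}{U>t} \le e^\eps \pr{}{V>t} + \delta$ yields $|\ex{}{U} - \ex{}{V}| \le e^\eps - 1 + \delta$; taking the outer expectation over the variables we conditioned on then gives $|A_i - B_i| \le e^\eps - 1 + \delta$. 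Finally, since $\query(\sample) = \frac1n \sum_{i \in [n]} \query(\samp_i)$, linearity gives $\ex{\sample,\monitor}{\query(\sample) \mid \query = \monitor(\sample)} = \frac1n \sum_i A_i$ while the population side is $B = \frac1n\sum_i B$, so the quantity to be bounded is $\bigl|\frac1n \sum_i (A_i - B)\bigr| \le \frac1n \sum_i |A_i - B_i| \le e^\eps - 1 + \delta$.

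The only place stability enters, and the step requiring the most care, is the bound on $|A_i - B_i|$: one must arrange the conditioning so that (i) in the $B_i$ experiment $\samp_i$ is genuinely independent of $\monitor$'s input (this is what turns the empirical contribution of coordinate $i$ into the population value), and (ii) the two invocations of $\monitor$ really are on neighboring datasets with every other coordinate fixed, so that the post-processing/stability inequality applies coordinatewise. Everything else — reducing to a single coordinate, the $\int_0^1$ trick for bounded variables, and averaging — is routine. The general low-sensitivity and multi-sample versions follow the same skeleton, replacing the single-point swap by the appropriate neighboring-dataset move and carrying the sensitivity parameter through.
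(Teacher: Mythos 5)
Your proposal is correct and follows essentially the same route as the paper: decompose the empirical average coordinate-wise, compare each coordinate's contribution under $\monitor(\sample)$ versus $\monitor$ run on the neighboring dataset with that coordinate resampled, apply $(\eps,\delta)$-max-KL stability (via post-processing) to the $[0,1]$-valued variable $\query(\samp_i)$ through the $\int_0^1$ tail-probability identity, and use boundedness to pass from the multiplicative to the additive bound $e^{\eps}-1+\delta$. The only cosmetic difference is that you phrase the swap as ``$\samp_i$ is independent of $\monitor$'s input in the resampled experiment,'' while the paper phrases it as the pairs $(\samp_i,\sample_{i\to\samp'})$ and $(\samp',\sample)$ being identically distributed; these are the same observation.
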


\begin{proof}[Proof of Lemma~\ref{lem:SKLCondExpSQ}]
Before giving the proof, we set up some notation.  Let $\sample = (\samp_{1},\dots,\samp_{n})$.  For a single element $\samp' \in \univ$, and an index $i \in [n]$, we use $\sample_{i \rightarrow \samp'}$ to denote the new sample where the $i$-th element of $\sample$ has been replaced by the element $\samp'$. Let $\samp' \getsr \dist$ be independent from $\sample$.

We can now calculate
\begin{align*}
&\ex{\sample, \monitor}{\query(\sample) \mid \query = \monitor(\sample)} \\
={} &\frac{1}{n} \sum_{i = 1}^{n} \ex{\sample, \monitor}{\left. \query(\samp_{i}) \; \right| \; \query = \monitor(\sample)} \\
={} &\frac{1}{n} \sum_{i = 1}^{n}  \int_0^1 \pr{\sample, \monitor}{\left. \query(\samp_{ i}) \; > z\right| \; \query = \monitor(\sample)} \mathrm{d}z\\
\intertext{Now we can apply max-KL stability:}
\leq{} &\frac{1}{n} \sum_{i = 1}^{n} \int_0^1 e^\eps \pr{\sample, \monitor}{\left.  \query(\samp_{ i}) \; > z\right| \; \query ={} \monitor(\sample_{i \rightarrow \samp'})} +\delta\mathrm{d}z \tag{by $(\eps, \delta)$-max-KL stability}\\
=& \frac{1}{n} \sum_{i = 1}^{n} \left( e^{\eps} \cdot \ex{\samp', \sample, \monitor}{\left.  \query(\samp_{i}) \; \right| \; \query = \monitor(\sample_{i \rightarrow \samp'})} + \delta \right)  \\
={} &\frac{1}{n} \sum_{i = 1}^{n} \left( e^{\eps} \cdot \ex{\samp', \sample, \monitor}{\left.  \query(\samp') \; \right| \; \query = \monitor(\sample)} + \delta \right) \tag{the pairs $(\samp_{i}, \sample_{i \rightarrow \samp'})$ and $(\samp',\sample)$ are identially distributed} \\
={} &e^{\eps} \cdot \ex{\samp', \sample, \monitor}{\left. \query(\samp') \; \right| \; \query = \monitor(\sample)} + \delta
\\={}& e^{\eps} \cdot \ex{\sample , \monitor}{\left. \query(\dist) \; \right| \; \query = \monitor(\sample)} + \delta
\end{align*}

An identical argument shows that 
$$
\ex{\sample, \monitor}{\query(\sample) \mid \query = \monitor(\sample)}
\geq e^{-\eps} \cdot \left( \ex{\sample, \monitor}{\left. \query(\dist) \; \right| \; \query = \monitor(\sample)} - \delta \right).
$$

Therefore, using the fact that $|q(\dist)| \leq 1$ for any statistical query $q$ and distribution $\dist$, we have
$$
\left|\ex{\sample,\monitor}{\query(\dist) \mid \query = \monitor(\sample)} - \ex{\sample,\monitor}{\query(\sample) \mid \query = \monitor(\sample)} \right| \leq e^{\eps} - 1 + \delta,
$$
as desired.
\end{proof}
\fi

\ifnum\short=0
\subsection{Warmup: A Multi-Sample De-Correlated Expectation Lemma for SQs} \label{sec:SQCE} 
\else
\subsection{A Multi-Sample De-Correlated Expectation Lemma}
\fi
\ifnum\short=0
As a second warmup, in this section we give a simpler version of our main lemma for the case of statistical queries and \emph{multiple} samples.  
That is, we consider a setting where there are many subsamples available to the algorithm. The multi-sample de-correlated expectation lemma says that a max-KL stable algorithm cannot take a collection of samples $\sample_1,\dots,\sample_T$ and output a pair $(\query, t)$ such that $\query(\dist)$ and $\query(\sample_t)$ differ significantly in expectation. 
\else
In this section we prove our ``multi-sample de-correlated expectation lemma.''  Consider a setting where there are many subsamples available to the algorithm. The multi-sample de-correlated expectation lemma says that a max-KL stable algorithm cannot take a collection of samples $\sample_1,\dots,\sample_T$ and output a pair $(\query, t)$ such that $\query(\dist)$ and $\query(\sample_t)$ differ significantly in expectation.  This lemma is formalized as follows.

\begin{lem}[Main Technical Lemma] \label{lem:MKLCondExp}
Let $\querysetDelta$ be the class of $\Delta$-sensitive queries on $\univ$.  Let $\monitor \from (\univ^{n})^{\trials} \to \querysetDelta \times [\trials]$ be $(\eps,\delta)$-max-KL stable. Let $\dist$ be a distribution on $\univ$ and let $\samples = (\sample_1,\dots,\sample_T) \getsr (\dist^n)^\trials$. Then 
$$
\left|\ex{\samples,\monitor}{\query(\dist) \mid (\query,\trial) = \monitor(\samples)} - \ex{\samples,\monitor}{\query(\sample_{\trial}) \mid (\query,\trial) = \monitor(\samples)} \right| \leq 2 (e^{\eps} - 1 + \trials \delta) \Delta n.
$$
\end{lem}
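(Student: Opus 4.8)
The plan is to reduce the multi-sample statement to a single-sample calculation analogous to the warmup Lemma~\ref{lem:SKLCondExpSQ}, while tracking two differences: the monitor now outputs a pair $(\query,\trial)$ rather than just a query, and the queries are $\Delta$-sensitive rather than statistical. First I would expand $\query(\sample_\trial)$ as an average over its $n$ coordinates: writing $\sample_\trial = (\sample_{\trial,1},\dots,\sample_{\trial,n})$, I would like to say $\query(\sample_\trial) \approx \frac1n \sum_{i} \query(\sample_\trial)$ where the $i$-th term is ``charged'' to the $i$-th coordinate. Since $\query$ is $\Delta$-sensitive but not additive, the right move is the standard telescoping/hybrid trick: for a fresh independent sample $\sample'_\trial \getsr \dist^n$, compare $\query(\sample_\trial)$ to $\query$ evaluated on $\sample_\trial$ with coordinates $1,\dots,i$ replaced by those of $\sample'_\trial$, one coordinate at a time. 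Each single-coordinate swap changes the query value by at most $\Delta$, and after all $n$ swaps we arrive at $\query(\sample'_\trial)$, whose conditional expectation is $\query(\dist)$ by definition. This is what produces the factor $\Delta n$.

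The key step is to invoke max-KL stability coordinate-by-coordinate. Fix a trial $\trial$ and a coordinate $i$; let $\samples_{(\trial,i)\to z'}$ denote the full collection of $T$ samples with the single element $\sample_{\trial,i}$ replaced by a fresh $z' \getsr \dist$. Then $\samples$ and $\samples_{(\trial,i)\to z'}$ differ in exactly one element, so $(\eps,\delta)$-max-KL stability of $\monitor$ gives, for any threshold $z$,
\[
\pr{\samples,\monitor}{\query(\sample_{\trial}) > z,\ (\query,\trial)=\monitor(\samples)} \leq e^\eps \pr{\samples,\monitor}{\query(\sample_\trial) > z,\ (\query,\trial) = \monitor(\samples_{(\trial,i)\to z'})} + \delta,
\]
and here one must be careful that the event ``$(\query,\trial)=\monitor(\cdot)$'' uses the correct second coordinate $\trial$ — this is fine because stability is a statement about the whole output $(\query,\trial)\in\querysetDelta\times[\trials]$ and hence about any event defined from it. After re-indexing (the pair $(\sample_{\trial,i},\samples_{(\trial,i)\to z'})$ is identically distributed to $(z',\samples)$), I replace $\query(\sample_{\trial,i})$-type terms by $\query$ evaluated on the hybrid sample against the true monitor output $\monitor(\samples)$. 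Summing the hybrid steps, integrating thresholds $z$ over the (bounded) range of $\query$, and collecting the $\delta$ terms — noting there are up to $\trials$ trials contributing, which is where the $\trials\delta$ appears — yields one side of the inequality; the symmetric argument with $e^{-\eps}$ gives the other, and combining them with $|e^{\pm\eps}-1|\le e^\eps-1$ and the range bound on $\query$ produces the claimed $2(e^\eps-1+\trials\delta)\Delta n$.

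The main obstacle I anticipate is bookkeeping rather than a conceptual hurdle: correctly setting up the hybrid argument so that each single-element change is genuinely a neighboring change for the monitor (one element out of the $nT$ total), keeping the conditioning event and the trial index $\trial$ consistent through the re-indexing, and making sure the range over which $\query$ varies (which is $O(\Delta n)$, since $\query$ on $\univ^n$ is $\Delta$-sensitive so its values over the relevant support lie in an interval of length $O(\Delta n)$ around $\query(\dist)$) is used correctly when converting the probability bound back into an expectation bound. One subtlety worth isolating: because $\query$ is only $\Delta$-sensitive and not bounded in $[0,1]$, I would center everything at $\query(\dist)$ and bound $|\query(\cdot)-\query(\dist)|$ by $\Delta n$ on the support of $\dist^n$ before integrating, so that the $\int \mathrm{d}z$ step contributes a factor $O(\Delta n)$ exactly as needed. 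Modulo this care, the structure mirrors the SQ warmup almost verbatim.
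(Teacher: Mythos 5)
Your high-level architecture is the same as the paper's: telescope between $\samples$ and an independent copy one coordinate at a time, use the exchangeability of $(\samp_{\trial,i},\samples_{(\trial,i)\to z'})$ with $(z',\samples)$, apply $(\eps,\delta)$-max-KL stability once per coordinate swap, and track which trial the monitor selected. But your quantitative bookkeeping has a genuine gap, and it is internally inconsistent about where the factor $\Delta n$ comes from. You first (correctly) attribute $\Delta n$ to the telescoping: $n$ swaps, each moving the query value by at most $\Delta$. But you then propose to convert probabilities to expectations by integrating thresholds over the range of $\query$ centered at $\query(\dist)$, of length $O(\Delta n)$, and claim this integration ``contributes the factor $O(\Delta n)$.'' You cannot have both. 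Because a single application of $(\eps,\delta)$-stability tolerates only a one-element change of the monitor's input, you must apply stability separately for each of the $n$ coordinate swaps in each of the $T$ trials, and each application contributes an additive $\delta$ times the length of the threshold range you integrate over. With a range of length $O(\Delta n)$ this gives a total additive term of order $\Delta n^2 T\delta$ --- a factor $n$ worse than the claimed $2\Delta n T\delta$. (Trying instead to swap an entire subsample $\sample_\trial$ in one stability application violates the neighboring condition and would cost group stability, i.e.\ a factor $e^{n\eps}$.)

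The fix is exactly the paper's device: the quantity you threshold and integrate must be the per-coordinate telescoped difference, not the centered query. For each coordinate pair $(\ell,m)$ one defines $B^{\ell,m}(\samples,\mathbf{Z}) = \query(\mathbf{z}_{\trial})-\query(\mathbf{z}_{\trial,-\ell})+\Delta$ when the selected trial $\trial$ equals $m$ and $0$ otherwise, where $(\query,\trial)=\monitor(\samples)$; by $\Delta$-sensitivity this lies in $[0,2\Delta]$, so each stability application is integrated over a range of length $2\Delta$ only, the factor $n$ comes from summing over coordinates, and the additive term is exactly $2\Delta n T\delta$. The same indicator structure is also what keeps the multiplicative term from acquiring a factor $T$: for fixed $\ell$, the sum over $m$ of $\ex{}{B^{\ell,m}(\samples,\samples^{\ell,m})}$ is at most $2\Delta$ because only the term $m=\trial$ is nonzero, so $e^{\eps}$ multiplies a quantity bounded by $2\Delta$ per coordinate, yielding $2(e^{\eps}-1)\Delta n$ in total. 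Your sketch asserts that only $\delta$ picks up the factor $T$ but never establishes the mechanism; without the bounded, indicator-weighted $B^{\ell,m}$ you would instead land at the weaker TV-style bound $2T(e^{\eps}-1+\delta)\Delta n$. (Note also that the thresholded quantity must be nonnegative for $\ex{}{X}=\int \pr{}{X\geq z}\,\mathrm{d}z$, hence the $+\Delta$ shift rather than centering at $\query(\dist)$.) With these corrections your argument coincides with the paper's proof.
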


To keep the presentation simple, we will defer the complete proof of this lemma to the full version, and instead prove a weaker version that only consider statistical queries.
\fi
\begin{lem} \label{lem:MKLCondExpSQ}
Let $\monitor \from (\univ^{n})^{\trials} \to \queryset \times [\trials]$ be $(\eps,\delta)$-max-KL stable where $\queryset$ is the class statistical queries $\query : \univ \to [0,1]$. Let $\dist$ be a distribution on $\univ$ and let $\samples = (\sample_1,\dots,\sample_\trials) \getsr (\dist^n)^\trials$. Then
$$
\left|\ex{\samples,\monitor}{\query(\dist) \mid (\query,\trial) = \monitor(\samples)} - \ex{\samples,\monitor}{\query(\sample_{\trial}) \mid (\query,\trial) = \monitor(\samples)} \right| \leq e^{\eps} - 1 + \trials \delta.
$$
\end{lem}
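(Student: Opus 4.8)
The plan is to mimic the single-sample argument of Lemma~\ref{lem:SKLCondExpSQ}, but now account for the fact that the monitor outputs an index $\trial \in [\trials]$ alongside the query, so the relevant sample is the randomly selected $\sample_\trial$. The key point is that swapping a single element in one of the $\trials$ subsamples is a single-element change in the combined input $\samples \in (\univ^n)^\trials$, so max-KL stability applies directly to such swaps. I would write $\query(\sample_\trial) = \tfrac1n\sum_{i\in[n]}\query((\sample_\trial)_i)$ and, conditioning on the event $(\query,\trial)=\monitor(\samples)$, express the conditional expectation of $\query(\sample_\trial)$ as an average, over $t\in[\trials]$, $i\in[n]$, of terms of the form $\pr{}{\query((\sample_t)_i) > z \wedge (\query,\trial)=(\cdot,t)\; \big|\; \cdots}$ integrated against $\mathrm{d}z$ on $[0,1]$. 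Here I would be careful to carry the index-selection event through: the quantity that naturally appears is $\ex{\samples,\monitor}{\mathbf{1}[\trial=t]\cdot \query((\sample_t)_i) \mid \cdots}$ summed over $t$ and divided by $n$.

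First I would fix $t$ and $i$, let $\samp'\getsr\dist$ be a fresh independent draw, and form $\samples_{(t,i)\to\samp'}$ — the tuple $\samples$ with the $i$-th coordinate of the $t$-th subsample replaced by $\samp'$. Since this is a one-element change, $(\eps,\delta)$-max-KL stability (applied to the monitor, whose range $\querysetDelta\times[\trials]$ is just post-processed by the indicator/threshold map, and post-processing preserves stability by Lemma~\ref{lem:postprocessing}) gives
$$
\pr{}{\query((\sample_t)_i) > z \wedge \trial=t} \le e^\eps \pr{}{\query(\samp') > z \wedge \trial=t \text{ in the run on } \samples_{(t,i)\to\samp'}} + \delta,
$$
suitably conditioned. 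Then I would use that the joint distributions of $\big((\sample_t)_i,\ \samples\big)$ and of $\big(\samp',\ \samples_{(t,i)\to\samp'}\big)$ are identical (both are: an i.i.d.\ draw of everything), so the right-hand side equals $e^\eps\pr{}{\query(\samp')>z \wedge \trial=t} + \delta$ where now $\samp'\getsr\dist$ is independent of the (un-modified) run. Integrating over $z\in[0,1]$ and summing the $\delta$ term over the $n$ values of $i$ and $\trials$ values of $t$ — but noting the indicator $\mathbf{1}[\trial=t]$ makes the $t$-sum telescope into a single expectation — yields
$$
\ex{\samples,\monitor}{\query(\sample_\trial)\mid\cdots} \le e^\eps \ex{\samp',\samples,\monitor}{\query(\samp')\mid\cdots} + \trials\delta = e^\eps \ex{\samples,\monitor}{\query(\dist)\mid\cdots} + \trials\delta,
$$
using $\ex{\samp'\getsr\dist}{\query(\samp')}=\query(\dist)$ for the last step. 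The symmetric inequality with $e^{-\eps}$ follows identically, and combining the two with $0\le\query(\dist)\le1$ gives $|\ex{}{\query(\dist)\mid\cdots}-\ex{}{\query(\sample_\trial)\mid\cdots}|\le e^\eps-1+\trials\delta$.

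The main obstacle — and the place I would be most careful — is bookkeeping the index event $\trial=t$ through the stability application: one must apply stability to the \emph{joint} output $(\query,\trial)$ (so that the event $\{\query((\sample_t)_i)>z\}\cap\{\trial=t\}$ is a legitimate subset of the output space after the threshold post-processing), and then verify that the $\delta$ losses accumulate to $\trials\delta$ rather than $n\trials\delta$, which happens precisely because for each fixed $i$ the events $\{\trial=t\}$ are disjoint over $t$, so only one of the $\trials$ subsamples "counts" for each draw. The exchangeability claim — that replacing a fresh coordinate doesn't change the joint law — is routine but should be stated explicitly since it is doing real work. Everything else is the same threshold-integral manipulation as in the single-sample warmup.
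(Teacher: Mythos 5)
Your high-level plan is the same as the paper's: write $\ex{\samples,\monitor}{\query(\sample_{\trial})\mid\cdots}=\frac1n\sum_{i\in[n]}\sum_{t\in[\trials]}\ex{\samples,\monitor}{\mathbf{1}[\trial=t]\cdot\query(\samp_{t,i})\mid\cdots}$, apply stability once per coordinate $(t,i)$ to the joint output $(\query,\trial)$ via the threshold integral, exchange the replaced element for a fresh draw $\samp'\getsr\dist$, and finish using $\query(\dist)\in[0,1]$, accumulating $\trials\delta$. However, the central two-step derivation is assembled incorrectly, and as written it is circular. Your displayed ``stability'' inequality compares $\pr{}{\query(\samp_{t,i})>z\wedge\trial=t}$ for the run on $\samples$ with $\pr{}{\query(\samp')>z\wedge\trial=t}$ for the run on $\samples_{(t,i)\to\samp'}$; but these two quantities are \emph{equal}, because the pairs $(\samp_{t,i},\samples)$ and $(\samp',\samples_{(t,i)\to\samp'})$ are identically distributed --- which is exactly the identity you invoke afterwards. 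So that step is not a stability application (the event is not held fixed as a subset of the output space: you changed the evaluation point along with the input), and it makes no progress. The real content lies in your second step, passing from ``evaluate at $\samp'$, monitor run on $\samples_{(t,i)\to\samp'}$'' to ``evaluate at $\samp'$, monitor run on $\samples$''. That step cannot be justified by exchangeability: $(\samp',\samples_{(t,i)\to\samp'})$ and $(\samp',\samples)$ have \emph{different} joint laws ($\samp'$ is a coordinate of the first dataset but independent of the second). It is precisely here that $(\eps,\delta)$-max-KL stability must be invoked: conditioning on $\samp'$ and all data values, $\{\query(\samp')>z\}\cap\{\trial=t\}$ is a fixed subset of the output space and the two inputs differ in one element. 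In short, your stability and exchangeability justifications are attached to the wrong steps, so neither step, as justified, yields the per-term inequality you need, even though the end-to-end inequality you state is correct.

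The repair is small and can be done in either order. The paper's order: apply stability with the evaluation point held fixed, $\pr{}{\mathbf{1}[\trial=t]\query(\samp_{t,i})\geq z \text{ under } \monitor(\samples)}\leq e^{\eps}\pr{}{\mathbf{1}[\trial=t]\query(\samp_{t,i})\geq z \text{ under } \monitor(\samples_{(t,i)\to\samp'})}+\delta$, and then use the identity $(\samp_{t,i},\samples_{(t,i)\to\samp'})\sim(\samp',\samples)$ (swapped-out element with modified dataset, versus fresh element with original dataset) to rewrite the right-hand side with $\query(\samp')$ and $\monitor(\samples)$. Alternatively, first use your identity $(\samp_{t,i},\samples)\sim(\samp',\samples_{(t,i)\to\samp'})$ as an exact equality and then apply stability with the fixed evaluation point $\samp'$. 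Two further bookkeeping remarks: the $\delta$'s total $\trials\delta$ rather than $n\trials\delta$ because of the $\frac1n$ average over the coordinates $i$ (each of the $n\trials$ stability applications costs $\delta$), not because of disjointness of the events $\{\trial=t\}$; that disjointness is instead what keeps the main term from acquiring a factor of $\trials$, since $\sum_{t}\ex{}{\mathbf{1}[\trial=t]\query(\samp')\mid\cdots}=\ex{}{\query(\samp')\mid\cdots}=\ex{}{\query(\dist)\mid\cdots}$.
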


\begin{proof}[Proof of Lemma~\ref{lem:MKLCondExpSQ}]
Before giving the proof, we set up some notation.  Let $\samples = (\sample_1,\dots,\sample_{\trials})$ be a set of $T$ samples where each sample $\sample_\trial = (\samp_{\trial,1},\dots,\samp_{\trial, n})$.  For a single element $\samp' \in \univ$, and a pair of indices $(m, i) \in [\trials] \times [n]$, we use $\samples_{(m,i) \rightarrow \samp'}$ to denote the new set of $T$ samples where the $i$-th element of the $m$-th sample of $\samples$ has been replaced by the element $\samp'$.

We can now calculate
\ifnum\short=0
\begin{align*}
&\ex{\samples, \monitor}{\query(\sample_{\trial}) \mid (\query,\trial) = \monitor(\samples)} \\
={} &\sum_{m = 1}^{\trials}  \ex{\samples, \monitor}{\left. \mathbf{1}_{\{ t = m \}} \cdot \query(\sample_{m}) \; \right| \; (\query,\trial) = \monitor(\samples)} \\
={} &\frac{1}{n} \sum_{i = 1}^{n} \sum_{m = 1}^{\trials} \ex{\samples, \monitor}{\left.  \mathbf{1}_{\{ t = m \}} \cdot \query(\samp_{m, i}) \; \right| \; (\query,\trial) = \monitor(\samples)} \\
={} &\frac{1}{n} \sum_{i = 1}^{n} \sum_{m = 1}^{\trials} \int_0^1 \pr{\samples, \monitor}{\left.  \mathbf{1}_{\{ t = m \}} \cdot \query(\samp_{m, i}) \; \geq z\right| \; (\query,\trial) = \monitor(\samples)} \mathrm{d}z\\
\intertext{Now we can apply $(\eps,\delta)$-max-KL stability.}
\leq{} &\frac{1}{n} \sum_{i = 1}^{n} \sum_{m = 1}^{\trials} \left(\int_0^1 e^\eps \pr{\samples, \monitor}{\left.  \mathbf{1}_{\{ t = m \}} \cdot \query(\samp_{m, i}) \; \geq z\right| \; (\query,\trial) ={} \monitor(\samples_{(m,i) \rightarrow \samp'})} +\delta \right)\mathrm{d}z \tag{by $(\eps, \delta)$-max-KL stability}\\
=& \frac{1}{n} \sum_{i = 1}^{n} \sum_{m = 1}^{\trials} \left( e^{\eps} \cdot \ex{\samp', \samples, \monitor}{\left.  \mathbf{1}_{\{ t = m \}} \cdot \query(\samp_{m, i}) \; \right| \; (\query,\trial) = \monitor(\samples_{(m,i) \rightarrow \samp'})} + \delta \right)  \\
={} &\frac{1}{n} \sum_{i = 1}^{n} \sum_{m = 1}^{\trials} \left( e^{\eps} \cdot \ex{\samp', \samples, \monitor}{\left.  \mathbf{1}_{\{ t = m \}} \cdot \query(\samp') \; \right| \; (\query,\trial) = \monitor(\samples)} + \delta \right) \tag{the pairs $(\samp_{m,i}, \samples_{(m,i) \rightarrow \samp'})$ and $(\samp',\samples)$ are identially distributed} \\
={} &e^{\eps} \cdot \ex{\samp', \samples, \monitor}{\left. \query(\samp') \; \right| \; (\query,\trial) = \monitor(\samples)} + T \delta
\\={}& e^{\eps} \cdot \ex{\samples, \monitor}{\left. \query(\dist) \; \right| \; (\query, \trial) = \monitor(\samples)} + T \delta
\\\leq{}& \ex{\samples, \monitor}{\left. \query(\dist) \; \right| \; (\query, \trial) = \monitor(\samples)} + e^\eps-1 + T \delta \tag{since $\query(\dist) \in [0,1]$}
\end{align*}
\else
\begin{align*}
&\ex{\samples, \monitor}{\query(\sample_{\trial}) \mid (\query,\trial) = \monitor(\samples)} \\
={} &\frac{1}{n} \sum_{i = 1}^{n} \sum_{m = 1}^{\trials} \ex{\samples, \monitor}{\left.  \mathbf{1}_{\{ t = m \}} \cdot \query(\samp_{m, i}) \; \right| \; (\query,\trial) = \monitor(\samples)} \\
\leq& \frac{1}{n} \sum_{i = 1}^{n} \sum_{m = 1}^{\trials} \left( e^{\eps} \cdot \ex{\samp', \samples, \monitor}{\left.  \mathbf{1}_{\{ t = m \}} \cdot \query(\samp_{m, i}) \; \right| \; (\query,\trial) = \monitor(\samples_{(m,i) \rightarrow \samp'})} + \delta \right)  \tag{by $(\eps, \delta)$-max-KL stability and $0 \leq \mathbf{1}_{\{t = m\}} \cdot q(x_{m,i}) \leq 1$}\\
={} &\frac{1}{n} \sum_{i = 1}^{n} \sum_{m = 1}^{\trials} \left( e^{\eps} \cdot \ex{\samp', \samples, \monitor}{\left.  \mathbf{1}_{\{ t = m \}} \cdot \query(\samp') \; \right| \; (\query,\trial) = \monitor(\samples)} + \delta \right) \tag{the pairs $(\samp_{m,i}, \samples_{(m,i) \rightarrow \samp'})$ and $(\samp',\samples)$ are identially distributed} \\
={} &e^{\eps} \cdot \ex{\samp', \samples, \monitor}{\left. \query(\samp') \; \right| \; (\query,\trial) = \monitor(\samples)} + T \delta
={} e^{\eps} \cdot \ex{\samples, \monitor}{\left. \query(\dist) \; \right| \; (\query, \trial) = \monitor(\samples)} + T \delta \\
\leq{}& \ex{\samples, \monitor}{\left. \query(\dist) \; \right| \; (\query, \trial) = \monitor(\samples)} + e^\eps-1 + T \delta \tag{since $\query(\dist) \in [0,1]$}
\end{align*}
\fi
\ifnum\short =0
An identical argument shows that 
$$
\ex{\samples, \monitor}{\query(\sample_{\trial}) \mid (\query,\trial) = \monitor(\samples)}
\geq \ex{\samples, \monitor}{\left. \query(\dist) \; \right| \; (\query, \trial) = \monitor(\samples)} + (e^{-\eps}-1)- T \delta.
$$
\end{proof}
\else
This gives one half of the lemma, the other half is identical.
\end{proof}
\fi

\ifnum\short=0
\subsection{A Multi-Sample De-Correlated Expectation Lemma} \label{sec:lowsensCE}

Here, we give the most general de-correlated expectation lemma that considers multiple samples and applies to the more general class of low-sensitivity queries.

\begin{lem}[Main Technical Lemma] \label{lem:MKLCondExp}
Let $\monitor \from (\univ^{n})^{\trials} \to \querysetDelta \times [\trials]$ be $(\eps,\delta)$-max-KL stable where $\querysetDelta$ is the class of $\Delta$-sensitive queries $\query : \univ^n \to \mathbb{R}$. Let $\dist$ be a distribution on $\univ$ and let $\samples = (\sample_1,\dots,\sample_T) \getsr (\dist^n)^\trials$. Then 
$$
\left|\ex{\samples,\monitor}{\query(\dist) \mid (\query,\trial) = \monitor(\samples)} - \ex{\samples,\monitor}{\query(\sample_{\trial}) \mid (\query,\trial) = \monitor(\samples)} \right| \leq 2 (e^{\eps} - 1 + \trials \delta) \Delta n.
$$
\end{lem}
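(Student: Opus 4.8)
The plan is to follow the proof of Lemma~\ref{lem:MKLCondExpSQ} almost verbatim, replacing the exact decomposition $\query(\sample_m)=\frac1n\sum_{i}\query(\samp_{m,i})$---which is special to statistical queries---by a \emph{telescoping} decomposition valid for any $\Delta$-sensitive query. Write $\sample_m=(\samp_{m,1},\dots,\samp_{m,n})$, and for each $m\in[\trials]$ let $\samp'_{m,1},\dots,\samp'_{m,n}\getsr\dist$ be a fresh sample, independent of $\samples$ and of $\monitor$'s coins. For $i\in[n]$ set $\sample_m^{<i}=(\samp'_{m,1},\dots,\samp'_{m,i-1},\samp_{m,i},\samp_{m,i+1},\dots,\samp_{m,n})$ and $\sample_m^{\le i}=(\samp'_{m,1},\dots,\samp'_{m,i-1},\samp'_{m,i},\samp_{m,i+1},\dots,\samp_{m,n})$, so that $\sample_m^{<i}$ and $\sample_m^{\le i}$ differ in exactly one entry, $\sample_m^{<1}=\sample_m$, and $\sample_m^{\le n}=(\samp'_{m,1},\dots,\samp'_{m,n})$. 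Using $\sum_{m}\mathbf{1}_{\{\trial=m\}}=1$, the fact that $\query(\dist)=\ex{z\getsr\dist^n}{\query(z)}$ may be realised by evaluating $\query$ on the fresh sample $(\samp'_{m,1},\dots,\samp'_{m,n})$ (which is independent of $\monitor$'s output), and the telescoping identity $\query(\sample_m)-\query\paren{(\samp'_{m,1},\dots,\samp'_{m,n})}=\sum_i\paren{\query(\sample_m^{<i})-\query(\sample_m^{\le i})}$, one obtains
$$\ex{\samples,\monitor}{\query(\sample_{\trial})-\query(\dist)\mid(\query,\trial)=\monitor(\samples)}=\sum_{m=1}^{\trials}\sum_{i=1}^{n}B_{m,i},\quad\text{where}\quad B_{m,i}:=\ex{\samples,\samp',\monitor}{\mathbf{1}_{\{\trial=m\}}\paren{\query(\sample_m^{<i})-\query(\sample_m^{\le i})}}.$$

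The crux is to bound each $B_{m,i}$. Because $\query$ is $\Delta$-sensitive and $\sample_m^{<i}\sim\sample_m^{\le i}$, both $\mathbf{1}_{\{\trial=m\}}\paren{\query(\sample_m^{<i})-\query(\sample_m^{\le i})}_+$ and $\mathbf{1}_{\{\trial=m\}}\paren{\query(\sample_m^{<i})-\query(\sample_m^{\le i})}_-$ take values in $[0,\Delta]$ and are functions of the output $(\query,\trial)=\monitor(\samples)$. Applying $(\eps,\delta)$-max-KL stability to $\monitor$ (replacing its input $\samples$ by $\samples_{(m,i)\rightarrow\samp'_{m,i}}$, which differs in one entry) via the layer-cake formula over $[0,\Delta]$---exactly as in Lemma~\ref{lem:MKLCondExpSQ}---and then using that the pairs $(\samp_{m,i},\samples_{(m,i)\rightarrow\samp'_{m,i}})$ and $(\samp'_{m,i},\samples)$ are identically distributed, one checks that this name-swap turns $\monitor(\samples_{(m,i)\rightarrow\samp'_{m,i}})$ back into $\monitor(\samples)$ and interchanges $\sample_m^{<i}$ with $\sample_m^{\le i}$, hence converts the positive part into the negative part. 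Writing $P_{m,i}$ and $N_{m,i}$ for the expectations (over $\samples,\samp',\monitor$) of these two truncated quantities, this gives $P_{m,i}\le e^{\eps}N_{m,i}+\Delta\delta$ and, by the symmetric computation, $N_{m,i}\le e^{\eps}P_{m,i}+\Delta\delta$. Since $B_{m,i}=P_{m,i}-N_{m,i}$ while $P_{m,i}+N_{m,i}=\ex{\samples,\samp',\monitor}{\mathbf{1}_{\{\trial=m\}}\,|\query(\sample_m^{<i})-\query(\sample_m^{\le i})|}\le\Delta\cdot\pr{}{\trial=m}$, the two inequalities yield $|B_{m,i}|\le(e^{\eps}-1)\,\Delta\cdot\pr{}{\trial=m}+\Delta\delta$.

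Finally, summing over $i\in[n]$ and $m\in[\trials]$ and using $\sum_{m=1}^{\trials}\pr{}{\trial=m}=1$,
$$\left|\ex{\samples,\monitor}{\query(\sample_{\trial})-\query(\dist)\mid(\query,\trial)=\monitor(\samples)}\right|\le\sum_{m,i}|B_{m,i}|\le(e^{\eps}-1)\Delta n+\trials\delta\Delta n\le 2(e^{\eps}-1+\trials\delta)\Delta n,$$
which is the claimed inequality (the factor $2$ is slack). The only genuinely new point relative to the statistical-query case is the bound on $B_{m,i}$: max-KL stability is a statement about probabilities of events, so it cannot be applied to the signed quantity $\query(\sample_m^{<i})-\query(\sample_m^{\le i})$ directly, and the naive remedy---shifting it by $\Delta$ to make it nonnegative---would cost an extra $(e^{\eps}-1)\Delta$ \emph{for each of the $\trials n$ terms}, an unaffordable $\Theta(\trials n(e^{\eps}-1)\Delta)$ overall. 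Splitting into positive and negative parts before invoking stability, so that after the name-swap the two parts control each other and the factor $e^{\eps}-1$ multiplies only $\pr{}{\trial=m}$ (which sums to $1$ over $m$), is exactly what keeps the final bound at $O((e^{\eps}-1)\Delta n)$; this is the step I expect to require the most care.
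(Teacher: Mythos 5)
Your proof is correct, and it follows the same overall strategy as the paper's: interpolate between $\samples$ and a fresh copy by resampling one coordinate at a time, apply max-KL stability via the layer-cake formula to an event that is a post-processing of $\monitor$'s output, use the exchangeability of the pair $(\samp_{m,i},\samples_{(m,i)\rightarrow\samp'_{m,i}})$ with $(\samp'_{m,i},\samples)$ to swap back to $\monitor(\samples)$, and exploit the indicator $\mathbf{1}_{\{\trial=m\}}$ so that the multiplicative $e^{\eps}$ cost is paid only against a quantity whose sum over $m$ is bounded, yielding no factor of $\trials$ on the $(e^{\eps}-1)$ term. Where you differ is in how the signed increment is made amenable to stability: the paper defines the auxiliary functions $B^{\ell,m}(\samples,\mathbf{Z})=\mathbf{1}_{\{\trial=m\}}\bigl(\query(\mathbf{z}_{\trial})-\query(\mathbf{z}_{\trial,-\ell})+\Delta\bigr)\in[0,2\Delta]$ (replacing the $\ell$-th coordinate by a fixed element and shifting by $+\Delta$ \emph{inside} the indicator), compares $B^{\ell,m}(\samples,\samples^{\ell-1,m})$ to $B^{\ell,m}(\samples,\samples^{\ell,m})$ one direction at a time, and uses $\sum_m \ex{}{B^{\ell,m}(\samples,\samples^{\ell,m})}\leq 2\Delta$; you instead split each increment into positive and negative parts $P_{m,i},N_{m,i}\in[0,\Delta\cdot\pr{}{\trial=m}]$ and show they control each other ($P\leq e^{\eps}N+\Delta\delta$ and symmetrically), which gives $|B_{m,i}|\leq(e^{\eps}-1)\Delta\,\pr{}{\trial=m}+\Delta\delta$ and a final bound $(e^{\eps}-1)\Delta n+\trials\delta\Delta n$, a factor of $2$ better than the stated constant. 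One small correction to your closing remark: the paper's $+\Delta$ shift does \emph{not} incur the $\Theta(\trials n(e^{\eps}-1)\Delta)$ overhead you warn against, precisely because the shift sits inside the indicator, so it contributes only $\Delta\,\pr{}{\trial=m}$ per term and at most a factor of $2$ overall; your warning applies only to shifting the signed quantity outside the indicator. Both devices are sound; yours buys a slightly sharper constant and a per-term bound, while the paper's grouping of the sum over $m$ inside a single absolute value per coordinate accomplishes the same aggregation.
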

We remark that if we use the weaker assumption that $\monitor$ is $(e^{\eps}-1+\delta)$-TV stable, (defined in Section~\ref{sec:other-notions}), then we would obtain the same conclusion but with the weaker bound of $2\trials(e^{\eps} - 1 + \delta)\Delta n.$  The advantage of using the stronger definition of max-KL stability is that we only have to decrease $\delta$ with $\trials$ and not $\eps.$  This advantage is crucial because algorithms satisfying $(\eps, \delta)$-max-KL stability necessarily have a linear dependence on $1/\eps$ but only a polylogarithmic dependence on $1/\delta$ \rcomm{in their sample complexity}. 

\begin{proof}[Proof of Lemma~\ref{lem:MKLCondExp}]
Let $\samples' = (\sample'_1,\dots,\sample'_{\trials}) \getsr (\dist^{n})^{\trials}$ be independent of $\samples$.  Recall that each element $\sample_{\trial}$ of $\samples$ is itself a vector $(\samp_{\trial, 1},\dots,\samp_{\trial, n}),$ and the same is true for each element $\sample'_{\trial}$ of $\samples'.$  We will sometimes refer to the vectors $\sample_{1},\dots,\sample_{\trials}$ as the \emph{subsamples of $\samples$.}

We define a sequence of intermediate samples that allow us to interpolate between $\samples$ and $\samples'$ using a series of neighbouring samples.  Formally, for $\ell \in \{0,1,\dots,n\}$ and $m \in \{0,1,\dots,\trials\}$, define $\samples^{\ell,m} = (\sample^{\ell, m}_{1},\dots,\sample^{\ell, m}_{\trials}) \in (\univ^{n})^{\trials}$ by
$$
\samp^{\ell,m}_{\trial, i} = 
\left\{ \begin{array}{cl}
	\samp_{\trial, i} & (\trial > m) \textrm{ or } (\trial = m \textrm{ and } i > \ell)\\
	\samp'_{\trial, i} & (\trial < m) \textrm{ or } (\trial = m \textrm{ and } i \leq \ell) \\
\end{array} \right.
$$
By construction we have $\samples^{0,1}=\samples^{n,0}=\samples$ and $\samples^{n,T} = \samples'.$ Also $\samples^{0, m} = \samples^{n, m-1}$ for $m \in [\trials]$.  Moreover, pairs $(\samples^{\ell, \trial}, \samples^{\ell-1, \trial})$ are neighboring in the sense that there is a single subsample, $\sample_{\trial}$ such that $\sample^{\ell, \trial}_{\trial}$ and $\sample^{\ell-1,\trials}_{\trial}$ are neighbors and for every $\trial' \neq \trial$, $\sample^{\ell, \trial}_{\trial'}=\sample^{\ell -1, \trial}_{\trial'}$. 

For $\ell \in [n]$ and $m \in [\trials]$, define a randomized function $B^{\ell, m} : (\univ^{n})^{\trials} \times (\univ^{n})^{\trials}\to \R$ by
$$
B^{\ell,m}(\samples,\mathbf{Z}) = 
\left\{\begin{array}{cl}
	\query(\mathbf{z}_{\trial}) - \query(\mathbf{z}_{\trial, -\ell}) + \Delta & \trial = m\\
	0 & \trial \neq m
\end{array} \right.
\textrm{\; where $(\query, \trial) = \monitor(\samples),$}
$$
where $\mathbf{z}_{\trial, -\ell}$ is the $\trial$-th subsample of $\mathbf{Z}$ with its $\ell$-th element replaced by some arbitrary fixed element of $\univ.$

We can now expand $\left|\ex{\samples, \monitor}{\query(\dist) - \query(\sample_{\trial}) \mid (\query,\trial) = \monitor(\samples)} \right|$ in terms of these intermediate samples and the functions $B^{\ell,m}$:
\begin{align*}
&\left|\ex{\samples, \monitor}{\query(\dist) - \query(\sample_{\trial}) \mid (\query,\trial) = \monitor(\samples)} \right| \\
={} &\left|\ex{\samples, \samples', \monitor}{\query(\sample'_{\trial}) - \query(\sample_{\trial})\mid (\query,\trial) = \monitor(\samples)} \right| \\
={} &\left| \sum_{\ell \in [n]} \sum_{m \in [\trials]} \ex{\samples,\samples',\monitor}{\left. \query(\sample^{\ell,m}_{\trial})- \query(\sample^{\ell-1,m}_{\trial}) \; \right| \; (\query,\trial) = \monitor(\samples)} \right|\\	
\leq{} &\sum_{\ell \in [n]} \left|\sum_{m \in [\trials]} \ex{\samples,\samples',\monitor}{\left. \query(\sample^{\ell,m}_{\trial})- \query(\sample^{\ell-1,m}_{\trial}) \; \right| \; (\query,\trial) = \monitor(\samples)} \right|\\
={} &\sum_{\ell \in [n]} \left|\sum_{m \in [\trials]} \ex{\samples,\samples',\monitor}{\left. \left(\query(\sample^{\ell,m}_{\trial}) - \query(\sample^{\ell, m}_{\trial, -\ell}) + \Delta\right) - \left(\query(\sample^{\ell-1,m}_{\trial}) - \query(\sample^{\ell-1,m}_{\trial,-\ell}) + \Delta\right) \; \right| \; (\query,\trial) = \monitor(\samples)} \right| \tag{By construction, $\sample^{\ell, m}_{\trial, -\ell} = \sample^{\ell-1,m}_{\trial, -\ell}$}\\
={} &\sum_{\ell \in [n]} \left|\sum_{m \in [\trials]} \ex{\samples, \samples', \monitor}{B^{\ell,m}(\samples,\samples^{\ell,m}) - B^{\ell,m}(\samples,\samples^{\ell-1,m})} \right| \tag{Definition of $B^{\ell, m}.$}
\end{align*}
Thus, it suffices to show that $ \left|\sum_{m \in [\trials]} \ex{}{B^{\ell,m}(\samples,\samples^{\ell,m}) - B^{\ell,m}(\samples,\samples^{\ell-1,m})} \right| \leq 2 (e^{\eps} - 1 + T\delta)\Delta$ for all $\ell \in [n]$. To this end, we make a few observations.
\begin{enumerate}
\item Since $\query$ is $\Delta$-sensitive, for every $\ell,m, \samples, \mathbf{Z}$, we have $0 \leq B^{\ell,m}(\samples,\mathbf{Z}) \leq 2\Delta$.  Moreover, since $B^{\ell, m}(\samples, \mathbf{Z}) = 0$ whenever $\monitor(\samples)$ outputs $(\query, \trial)$ with $\trial \neq m$, we have $\sum_{m \in [\trials]} \ex{}{B^{\ell,m}(\sample,\sample^{\ell,m})} \leq 2\Delta$.%

\item By construction, $B^{\ell, m}(\samples, \mathbf{Z})$ is $(\eps, \delta)$-max-KL stable as a function of its first parameter $\samples.$  Stability follows by the post-processing lemma (Lemma~\ref{lem:postprocessing}) since $B^{\ell, m}$ is a post-processing of the output of $\monitor(\samples)$, which is assumed to be $(\eps, \delta)$-max-KL stable.

\item Lastly, observe that the random variables $\samples^{\ell, m}$ are identically distributed (although they are not independent).  Namely, each one consists of $n \trials$ independent samples from $\dist.$  Moreover, for every $\ell$ and $m$, the pair $(\samples^{\ell, m}, \samples)$ has the same distribution as $(\samples, \samples^{\ell, m}).$  Specifically, the first component is $n \trials$ independent samples from $\dist$ and the second component is equal to the first component with a subset of the entries replaced by fresh independent samples from $\dist.$\end{enumerate}

Consider the random variables $B^{\ell,m}(\samples,\samples^{\ell,m})$ and $B^{\ell,m}(\samples,\samples^{\ell-1,m})$ for some $\ell \in [n]$ and $m \in [\trials]$.  Using observations 2 and 3, we have
$$
B^{\ell,m}(\samples,\samples^{\ell,m}) \sim B^{\ell,m}(\samples^{\ell,m},\samples) \sim_{(\eps,\delta)} B^{\ell,m}(\samples^{\ell-1,m},\samples) \sim B^{\ell,m}(\samples,\samples^{\ell-1,m}),
$$
where $\sim$ denotes having the same distribution and $\sim_{(\varepsilon,\delta)}$ denotes having $(\varepsilon,\delta)$-max-KL close distributions.\footnote{In the spirit of $(\eps, \delta)$-max-KL stability, we say that distributions $A$ and $B$ over $\cR$ are $(\eps, \delta)$-max-KL close if for every $R \subseteq \cR$, $\pr{}{A \in R} \leq e^{\eps} \cdot \pr{}{B \in R} + \delta$ and vice versa.}  Thus $B^{\ell,m}(\samples,\samples^{\ell-1,m})$ and $B^{\ell,m}(\samples,\samples^{\ell,m})$ are $(\eps,\delta)$-max-KL close.

Now we can calculate
\begin{align*}
\ex{\samples, \samples', \monitor}{B^{\ell,m}(\samples,\samples^{\ell-1,m})} 
={} &\int_0^{2\Delta} \pr{\samples, \samples', \monitor}{B^{\ell,m}(\samples,\samples^{\ell-1,m}) \geq z} \mathrm{d}z\\
\leq{} &\int_0^{2\Delta} \left(e^\eps \cdot \pr{\samples, \samples', \monitor}{B^{\ell,m}(\samples,\samples^{\ell,m}) \geq z} + \delta\right) \mathrm{d}z\\
={} &e^\eps \cdot \int_0^{2\Delta} \pr{\samples, \samples', \monitor}{B^{\ell,m}(\samples,\samples^{\ell,m}) \geq z} \mathrm{d}z + 2\delta \Delta\\
={} &e^\eps \cdot \ex{\samples, \samples', \monitor}{B^{\ell,m}(\samples,\samples^{\ell,m})} + 2\delta \Delta.
\end{align*}
Thus we have 
\begin{align*}
\sum_{m \in [\trials]} \ex{\samples, \samples', \monitor}{B^{\ell,m}(\samples,\samples^{\ell-1,m})}
\leq{} &e^\eps \cdot \left(\sum_{m \in [\trials]} \ex{\samples, \samples, \monitor}{B^{\ell, m}(\samples,\samples^{\ell,m})} \right)+ 2 \trials\delta \Delta \\
\leq{} &\sum_{m \in [\trials]} \ex{\samples, \samples', \monitor}{B^{\ell,m}(\samples,\samples^{\ell,m})}  + 2(e^\eps-1)\Delta + 2\Delta \trials\delta.
\end{align*}
Thus we have the desired upper bound on the expectation of $\sum_{m \in [\trials]} \ex{}{B^{\ell,m}(\samples,\samples^{\ell,m}) - B^{\ell,m}(\samples,\samples^{\ell-1,m})}.$ The corresponding lower bound follows from an analogous argument. This completes the proof.
\end{proof}
\fi

\subsection{From Multi-Sample De-Correlated Expectation to Accuracy}
Now that we have Lemma~\ref{lem:MKLCondExp}, we can prove the following result that max-KL stable mechanisms that are also accurate with respect to their sample are also accurate with respect to the population from which that sample was drawn.
\begin{thm}[Main Transfer Theorem] \label{thm:MKLtoAdaptive}
Let $\queryset$ be a family of $\Delta$-sensitive queries on $\univ$.  Assume that, for some $\alpha, \beta \in (0,.1)$, $\mechanism$ is
\begin{enumerate}
\item $(\eps = \alpha / 64 \Delta n, \delta = \alpha \beta / 32 \Delta n)$-max-KL stable for $k$ adaptively chosen queries from $\queryset$ and
\item $(\alpha' = \alpha / 8, \beta' = \alpha\beta / 16 \Delta n)$-accurate with respect to its sample for $n$ samples from $\univ$ for $\queries$ adaptively chosen queries from $\queryset$.
\end{enumerate}
Then $\mechanism$ is $(\alpha,\beta)$-accurate with respect to the population for $\queries$ adaptively chosen queries from $\queryset$ given $n$ samples from $\univ$.
\end{thm}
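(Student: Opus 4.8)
The plan is to argue by contradiction using the monitor technique sketched in the introduction. Suppose $\mechanism$ is \emph{not} $(\alpha,\beta)$-accurate with respect to the population, so that there is an adversary $\adv$ for which a single run of $\accgame_{n,\queries,\queryset}[\mechanism,\adv]$ satisfies $\pr{}{\exists\, j\in[\queries]:\ |a_j - q_j(\dist)| > \alpha} > \beta$. Put $\trials = \lceil 1/\beta\rceil$ and construct a monitoring algorithm $\monitor_\dist \from (\univ^n)^\trials \to \querysetDelta\times[\trials]$: on input $\samples = (\sample_1,\dots,\sample_\trials)$, it runs $\trials$ independent simulations of the interaction between $\mechanism$ and a fresh copy of $\adv$, the $t$-th using the sample $\sample_t$, obtaining transcripts $\big((q_{t,1},a_{t,1}),\dots,(q_{t,\queries},a_{t,\queries})\big)$. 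Using its (legitimate, since it is only a thought experiment) knowledge of $\dist$, for each $t$ it computes $j_t = \argmax_j |a_{t,j} - q_{t,j}(\dist)|$, the population error $e_t = |a_{t,j_t} - q_{t,j_t}(\dist)|$ and its sign $s_t = \mathrm{sign}(a_{t,j_t} - q_{t,j_t}(\dist))$, and then outputs $(\hat q, t^*)$, where $t^* = \argmax_t e_t$ and $\hat q = s_{t^*}\cdot q_{t^*,j_{t^*}}$ (in the statistical-query case one outputs $q_{t^*,j_{t^*}}$ if $s_{t^*}=+1$ and $1-q_{t^*,j_{t^*}}$ if $s_{t^*}=-1$, to keep the query in $[0,1]$). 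Since $\hat q$ is still $\Delta$-sensitive, $\monitor_\dist$ indeed maps into $\querysetDelta\times[\trials]$ as Lemma~\ref{lem:MKLCondExp} requires. Moreover $\monitor_\dist$ runs the simulations on disjoint subsamples, so changing one entry of $\samples$ changes only one transcript, and it does so in an $(\eps,\delta)$-max-KL-stable manner by the interactive-stability hypothesis on $\mechanism$ (Definition~\ref{def:interact-stability}); everything else is post-processing, so $\monitor_\dist$ is $(\eps,\delta)$-max-KL stable by Lemma~\ref{lem:postprocessing}.

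The heart of the argument is to show that $\monitor_\dist$ produces a query/trial pair whose empirical and population values disagree in expectation. Consider the events $E=$ ``all $\trials$ simulations are $(\alpha',\beta')$-sample-accurate'' and $F=$ ``some simulation is population-bad (i.e.\ has $e_t>\alpha$)''. Since sample accuracy is a worst-case guarantee over samples and adversaries, it applies to each of the simulations, so a union bound gives $\pr{}{\neg E}\le \trials\beta'$; independence gives $\pr{}{\neg F}\le(1-\beta)^\trials\le e^{-1}$ as $\trials\ge 1/\beta$. When $E$ holds we have $|a_{t^*,j_{t^*}} - q_{t^*,j_{t^*}}(\sample_{t^*})|\le\alpha'$, so the sign alignment forces $\hat q(\sample_{t^*}) - \hat q(\dist)\ge e_{t^*}-\alpha'$; when $E\cap F$ holds we additionally have $e_{t^*}=\max_t e_t>\alpha$, hence $\hat q(\sample_{t^*})-\hat q(\dist)\ge\alpha-\alpha'$; and always $\hat q(\sample_{t^*})-\hat q(\dist)\ge-\Delta n$, since $\hat q$ is $\Delta$-sensitive and $\hat q(\dist)$ is an average of $\hat q$ over $n$-element datasets. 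Summing the three cases,
\[
\ex{\samples,\monitor_\dist}{\hat q(\sample_{t^*})-\hat q(\dist)}\ \ge\ (\alpha-\alpha')\big(1-e^{-1}-\trials\beta'\big)\ -\ \alpha'\ -\ \Delta n\cdot\trials\beta'.
\]
With $\alpha'=\alpha/8$ and $\beta'=\alpha\beta/16\Delta n$ (so $\trials\beta'\le\alpha/8\Delta n$ and $\Delta n\cdot\trials\beta'\le\alpha/8$) the right-hand side is at least $\alpha/4$; it is precisely to annihilate the crude $-\Delta n$ term on $\neg E$ that sample accuracy is demanded with failure probability $\beta'\propto\alpha\beta/\Delta n$ rather than merely $\beta$.

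On the other hand, $\monitor_\dist$ is $(\eps,\delta)$-max-KL stable and $\samples\getsr(\dist^n)^\trials$, so Lemma~\ref{lem:MKLCondExp} (or Lemma~\ref{lem:MKLCondExpSQ} for statistical queries) bounds $\big|\ex{\samples,\monitor_\dist}{\hat q(\dist)\mid(\hat q,t^*)=\monitor_\dist(\samples)}-\ex{\samples,\monitor_\dist}{\hat q(\sample_{t^*})\mid(\hat q,t^*)=\monitor_\dist(\samples)}\big|$ by $2(e^\eps-1+\trials\delta)\Delta n$. Plugging in $\eps=\alpha/64\Delta n$ and $\delta=\alpha\beta/32\Delta n$ --- using $e^\eps-1\le 2\eps$ (valid since $\eps\le1$ in the regime of interest) and $\trials\delta\le\alpha/16\Delta n$ --- bounds this quantity by $\tfrac{3}{16}\alpha<\alpha/4$, contradicting the previous display. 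Hence $\mechanism$ must be $(\alpha,\beta)$-accurate with respect to the population; chasing the two constant-factor estimates against each other is exactly what fixes the constants $64,32,16,8$ appearing in the hypotheses.

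The main obstacle is the middle step, and in it two points require care. First, the monitor must use its knowledge of $\dist$ to select, \emph{within each simulation}, the query of largest \emph{population} error and the sign of that error, so that the single query it finally emits has its population and empirical values separated in one fixed direction; without this alignment the positive and negative contributions across simulations cancel and Lemma~\ref{lem:MKLCondExp} gives nothing useful. Second, one must control the contribution to the expectation from the event $\neg E$, in which some simulation fails to be sample-accurate and the monitor could output a query off by as much as $\Delta n$; this is handled by the $\Delta$-sensitivity bound together with the aggressively small choice of $\beta'$. Everything else --- the stability of $\monitor_\dist$, the two probability estimates, and the arithmetic --- is routine given Lemmas~\ref{lem:postprocessing} and~\ref{lem:MKLCondExp}.
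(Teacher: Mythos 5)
Your construction is, in all essentials, the paper's own proof: the same $T\approx 1/\beta$-fold monitor that simulates independent interactions, selects the query with largest \emph{population} error, and aligns its sign; the same post-processing argument for the monitor's $(\eps,\delta)$-max-KL stability (Claim~\ref{clm:MKLstablemonitor}); the same invocation of Lemma~\ref{lem:MKLCondExp}; and the same union bound over the $T$ simulations for sample accuracy. The one place you diverge is the final lower bound on the expected gap, and there your bookkeeping has a real (though repairable) hole. You bound the expectation below by $(\alpha-\alpha')\bigl(1-e^{-1}-T\beta'\bigr)-\alpha'-\Delta n\, T\beta'$, so the main gain $\alpha-\alpha'$ is multiplied by a factor that loses an additive $T\beta'\approx \alpha/(16\Delta n)$. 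This is negligible only when $\Delta n$ is large compared to $\alpha$; the theorem allows arbitrary $\Delta\in[0,1]$ and $n$, and in the regime where $\Delta n$ is comparable to or smaller than $\alpha$ (roughly $\alpha/64\lesssim \Delta n\lesssim \alpha/16$, which is exactly where $\eps\le 1$ so the rest of your argument is in force, and where the conclusion is \emph{not} trivially implied by sample accuracy plus $|q(\sample)-q(\dist)|\le\Delta n$) your claimed ``right-hand side is at least $\alpha/4$'' is false: there $T\beta'$ can exceed $1-e^{-1}$, your displayed bound can even be negative, and the crude case bounds give at best about $-3\alpha/16$, which does not contradict the $3\alpha/16$ you get from Lemma~\ref{lem:MKLCondExp}.

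The paper avoids this by splitting through the reported answer rather than by a joint case analysis: writing $\query^*(\dist)-\query^*(\sample_{\trial^*})=\bigl(\query^*(\dist)-a_{\query^*}\bigr)+\bigl(a_{\query^*}-\query^*(\sample_{\trial^*})\bigr)$, the sign alignment makes the first difference have a fixed sign and exceed $\alpha$ with probability at least $1-(1-\beta)^T\ge 1/2$ (Claim~\ref{clm:amplify}), giving a contribution of at least $\alpha/2$ with \emph{no} $\Delta n$ dependence, while the union-bound failure probability $T\beta'$ only ever multiplies the $2\Delta n$ boundedness term, yielding $\alpha/8+2\Delta n\, T\beta'\le \alpha/4$ for every $\Delta$ and $n$ (Claim~\ref{clm:TVContradiction-hp}). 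You already have every ingredient needed for this reorganization (the sign alignment and the $O(\Delta n)$ boundedness observation), so the fix is immediate; but as written, the last arithmetic step does not cover the full parameter range of the theorem.
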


The key step in the proof is to define a monitoring algorithm that takes $T$ separate samples $\samples = (\sample_1,\dots,\sample_T)$ and for each sample $\sample_t$, simulates an independent interaction between $\mechanism(\sample_t)$ and $\adv$.  This monitoring algorithm then outputs the query with the largest error across all of the queries and interactions ($kT$ queries in total).  Since changing one input to $\samples$ only affects one of the simulations, the monitoring algorithm will be stable so long as $\mechanism$ is stable, without any loss in the stability parameter.  On the other hand, if $\mechanism$ has even a small chance $\beta$ of answering a query with large error, then if we simulate $T \approx 1/\beta$ independent interactions, there is a constant probability that at least one of the simulations results in a query with large error.  Thus, the monitor will be a stable algorithm that outputs a query with large error \emph{in expectation}.  By the multi-sample de-correlated expectation lemma, such a monitor is impossible, which implies that $\mechanism$ has probability $\leq \beta$ of answering any query with large error.

\begin{proof}[Proof of Theorem~\ref{thm:MKLtoAdaptive}]
Let $\mechanism$ be an interactive mechanism. Let $\adv$ be an analyst and let $\dist$ be the distribution chosen by $\adv$.  We define the following monitoring algorithm.
\begin{figure}[h]
\begin{framed}
\begin{center}$\monitor(\samples) = \monitor_{\dist}[\mechanism, \adv](\samples):$\end{center}
\begin{algorithmic}
\INDSTATE[0]{\textbf{Input:} $\samples = (\sample_1,\dots,\sample_T) \in (\univ^n)^{\trials}$}
\INDSTATE[1]{For $\trial = 1,\dots,\trials:$}
\INDSTATE[2]{Simulate $\mechanism(\sample_{\trial})$ and $\adv$ interacting, let $\query_{\trial,1},\dots,\query_{\trial,k} \in \queryset$ be the queries of $\adv$ and let}
\INDSTATE[2]{$a_{\trial,1},\dots,a_{\trial,k} \in \R$ be the corresponding answers of $\mechanism.$}
\INDSTATE[1]{Let $$(j^*, \trial^*) = \argmax_{j \in [\queries],\, \trial \in [\trials]} \left|\errp{\dist}{\query_{\trial,j}}{a_{\trial,j}}\right|.$$}
\INDSTATE[1]{If $a_{\trial^*, j^*} - \query_{\trial^*, j^*}(\dist) \geq 0$, let $\query^* = \query_{\trial^*,j^*},$ otherwise let $\query^* = -\query_{\trial^*, j^*}.$ ($\querysetDelta$ is closed under negation.)}
\INDSTATE[0]{\textbf{Output:} $(\query^*, \trial^*).$}
\end{algorithmic}
\end{framed}
\end{figure}

If $\mechanism$ is stable then so is $\monitor,$ and this fact follows easily from the post-processing lemma (Lemma~\ref{lem:postprocessing}):
\begin{claim} \label{clm:MKLstablemonitor}
For every $\eps, \delta \geq 0$, if the mechanism $\mechanism$ is $(\eps,\delta)$-max-KL stable for $\queries$ adaptively chosen queries from $\queryset,$ then for every $\dist$ and $\adv$, the monitor $\monitor_{\dist,\queries,\queryset}[\mechanism, \adv]$ is $(\eps, \delta)$-max-KL stable.
\end{claim}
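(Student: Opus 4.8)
The plan is to reduce the claim to the post-processing lemma (Lemma~\ref{lem:postprocessing}) together with the definition of stability for interactive mechanisms (Definition~\ref{def:interact-stability}). The essential observation is that two neighboring inputs $\samples = (\sample_1,\dots,\sample_\trials)$ and $\samples' = (\sample'_1,\dots,\sample'_\trials)$ to $\monitor$, viewed as elements of $(\univ^n)^\trials \cong \univ^{n\trials}$, differ in exactly one coordinate, and that coordinate lies in exactly one of the subsamples, say $\sample_{\trial_0}$; in particular $\sample_\trial = \sample'_\trial$ for every $\trial \neq \trial_0$. So only one of the $\trials$ simulated interactions is affected by the change.

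First I would fix $\trial_0 \in [\trials]$ and an arbitrary tuple $(\sample_\trial)_{\trial \neq \trial_0}$, and define a randomized map $h \from \univ^n \to \querysetDelta \times [\trials]$ that, on a single input $\sample$: (i) runs $\alg_{n,\queries,\queryset}[\mechanism,\adv](\sample)$ to obtain the transcript $((\query_{\trial_0,1},a_{\trial_0,1}),\dots,(\query_{\trial_0,\queries},a_{\trial_0,\queries}))$ of the $\trial_0$-th interaction; (ii) using fresh independent randomness for $\mechanism$ and for the stateful $\adv$, simulates the other $\trials-1$ interactions on the fixed subsamples $(\sample_\trial)_{\trial \neq \trial_0}$; and (iii) computes $(j^*,\trial^*)$ and $\query^*$ exactly as $\monitor$ does, using its knowledge of $\dist$, and outputs $(\query^*,\trial^*)$. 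By construction $h(\sample)$ is distributed exactly as $\monitor(\sample_1,\dots,\sample_{\trial_0-1},\sample,\sample_{\trial_0+1},\dots,\sample_\trials)$. Moreover $h$ is a (randomized) post-processing of $\alg_{n,\queries,\queryset}[\mechanism,\adv](\cdot)$: steps (ii)--(iii) depend only on the transcript produced in step (i), on $\dist$, and on the fixed subsamples $(\sample_\trial)_{\trial\neq\trial_0}$ — never on $\sample$ itself.

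Then I would invoke Definition~\ref{def:interact-stability}: since $\mechanism$ is $(\eps,\delta)$-max-KL stable for $\queries$ queries from $\queryset$, the algorithm $\alg_{n,\queries,\queryset}[\mechanism,\adv](\cdot)$ is $(\eps,\delta)$-max-KL stable, and hence by Lemma~\ref{lem:postprocessing} so is $h$. Applying the stability of $h$ to the neighboring single-subsample inputs $\sample_{\trial_0}$ and $\sample'_{\trial_0}$ gives, for every $R \subseteq \querysetDelta \times [\trials]$, $\pr{}{\monitor(\samples) \in R} = \pr{}{h(\sample_{\trial_0}) \in R} \leq e^\eps \pr{}{h(\sample'_{\trial_0}) \in R} + \delta = e^\eps \pr{}{\monitor(\samples') \in R} + \delta$, where the two outer equalities use $\sample_\trial = \sample'_\trial$ for $\trial \neq \trial_0$, and symmetrically with the roles of $\samples$ and $\samples'$ swapped. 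Since $\trial_0$ and the common subsamples were arbitrary, this holds for every pair $\samples \sim \samples'$, proving the claim. There is no real obstacle; the only care needed is the bookkeeping — checking that a one-coordinate change to $\samples$ perturbs exactly one interaction, and that the $\argmax$ selection and sign correction are genuinely a post-processing of transcripts (and $\dist$), not of the raw data — so that the parameters $(\eps,\delta)$ are inherited with no loss whatsoever.
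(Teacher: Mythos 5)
Your proposal is correct and follows essentially the same route as the paper: both arguments rest on the observation that a one-element change to $\samples$ perturbs only the single interaction on $\sample_{\trial_0}$, combined with closure of max-KL stability under (randomized) post-processing via Lemma~\ref{lem:postprocessing} and Definition~\ref{def:interact-stability}. The only cosmetic difference is that you fold the simulation of the other $\trials-1$ interactions and the $\argmax$/sign step into one randomized post-processing of the $\trial_0$-th transcript, whereas the paper first notes that the joint output of all $\trials$ transcripts is $(\eps,\delta)$-max-KL stable and then post-processes; the parameters are inherited without loss either way.
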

\ifnum\short=0
\begin{proof}
If $\mechanism$ is $(\eps, \delta)$-max-KL stable for $k$ adaptively chosen queries from $\queryset$ then for every analyst $\adv$ who asks $\queries$ queries from $\queryset$, and every $\trial$ the algorithm $\monitor'(\sample_{\trial})$ that simulates the interaction between $\mechanism(\sample_{\trial})$ and $\adv$ and outputs the resulting query-answer pairs is $(\eps, \delta)$-max-KL stable.  From this, it follows that the algorithm $\monitor'(\samples)$ that simulates the interactions between $\mechanism(\sample_{\trial})$ and $\adv$ for every $t = 1,\dots,\trials$ and outputs the resulting query-answer pairs is $(\eps, \delta)$-max-KL stable.  To see this, observe that if $\samples, \samples'$ differ only on one subsample $\sample_{\trial}$, then for every $\trial' \neq \trial$, $\sample_{\trial'} = \sample'_{\trial'}$ and thus the query-answer pairs corresponding to subsample $\trial'$ are identically distributed regardless of whether we use $\samples$ or $\samples'$ as input to $\monitor.$

Observe that the algorithm $\monitor$ defined above is simply a post-processing of these $\queries \trials$ query-answer pairs.  That is, $(\query^*, \trial^*)$ depends only on $\{ (\query_{\trial, j}, a_{\trial, j} \}_{\trial \in [\trials], j \in [k]}$ and $\dist$, and not on $\samples.$  Thus, by Lemma~\ref{lem:postprocessing}, $\monitor$ is $(\eps,\delta)$-max-KL stable.
\end{proof}
\fi

We will use the $\monitor$ with $T = \left\lfloor 1/\beta \right\rfloor.$
In light of Claim~\ref{clm:MKLstablemonitor} and our assumption that $\mechanism$ is $(\eps, \delta)$-max-KL stable, we can apply Lemma~\ref{lem:MKLCondExp} to obtain
\begin{equation}\label{eq:MKLMonitorCondExp}
\left| \ex{\samples, \monitor}{\query^*(\dist) - \query^*(\sample_{\trial^*}) \mid (\query^*, \trial^*) = \monitor(\samples)} \right| \leq 2\left(e^{\alpha / 64 \Delta n} - 1 + \trials\left( \frac{\alpha \beta}{32 \Delta n} \right)\right)\Delta n \leq \alpha / 8.
\end{equation}
To complete the proof, we show that if $\mechanism$ is not $(\alpha, \beta)$-accurate with respect to the population $\dist$, then~\eqref{eq:MKLMonitorCondExp} cannot hold.  
To do so, we need the following natural claim about the output of the monitor.
\begin{claim}\label{clm:amplify}
$\pr{\samples,\monitor}{\query^*(\dist) - a_{\query^*} > \alpha} > 1-(1-\beta)^\trials,$ and $\query^*(\dist) - a_{\query^*} \geq 0,$
where $a_{\query^*}$ is the answer to $\query^*$ produced during the simulation.
\end{claim}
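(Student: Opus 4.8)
The two assertions can be handled separately, and neither is hard. The sign assertion $\query^*(\dist) - a_{\query^*} \geq 0$ is purely a matter of the construction of $\monitor$. After choosing $(j^*,\trial^*)$ as the maximizer of the absolute population error $|\errp{\dist}{\query_{\trial,j}}{a_{\trial,j}}|$, $\monitor$ replaces $\query_{\trial^*,j^*}$ by $\query^* = \pm \query_{\trial^*,j^*}$; since $\querysetDelta$ is closed under negation, $\query^*$ is again a legal $\Delta$-sensitive query, its value on $\dist$ is the corresponding $\pm \query_{\trial^*,j^*}(\dist)$, and the answer to $\query^*$ in the simulation is $a_{\query^*} = \pm a_{\trial^*,j^*}$. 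Checking the two cases of the sign split shows that $\monitor$ always arranges
$$ \query^*(\dist) - a_{\query^*} \;=\; \left|\errp{\dist}{\query_{\trial^*,j^*}}{a_{\trial^*,j^*}}\right| \;=\; \max_{j \in [\queries],\ \trial \in [\trials]} \left|\errp{\dist}{\query_{\trial,j}}{a_{\trial,j}}\right| \;\geq\; 0, $$
where the second equality is the definition of $(j^*,\trial^*)$. This gives the sign assertion and, more usefully, identifies $\query^*(\dist) - a_{\query^*}$ with the largest absolute population error over the $\queries\trials$ query/answer pairs generated inside $\monitor$.

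For the tail bound I would use independence of the $\trials$ simulations together with the failure of population-accuracy. Recall that the claim is invoked under the hypothesis (in force at this point in the proof of Theorem~\ref{thm:MKLtoAdaptive}) that $\mechanism$ is \emph{not} $(\alpha,\beta)$-accurate with respect to the population, for the particular analyst $\adv$ that $\monitor$ simulates; equivalently,
$$ \pr{\accgame_{n,\queries,\queryset}[\mechanism,\adv]}{\max_{j \in [\queries]} \left|\errp{\dist}{\query_j}{a_j}\right| > \alpha} \;>\; \beta . $$
For $\trial \in [\trials]$, let $E_\trial$ be the event that simulation $\trial$ inside $\monitor$ produces some query $\query_{\trial,j}$ with $\left|\errp{\dist}{\query_{\trial,j}}{a_{\trial,j}}\right| > \alpha$. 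Each simulation draws its own fresh $\sample_\trial \getsr \dist^n$ and runs $\mechanism$ and $\adv$ with independent coins, so the runs (hence the events $E_1,\dots,E_\trials$) are mutually independent, and each $E_\trial$ occurs with exactly the probability of the bad event above, namely $\pr{}{E_\trial} > \beta$. Therefore
$$ \pr{\samples,\monitor}{\bigcup_{\trial \in [\trials]} E_\trial} \;=\; 1 - \prod_{\trial \in [\trials]}\left(1 - \pr{}{E_\trial}\right) \;>\; 1 - (1-\beta)^{\trials}. $$
On the event $\bigcup_\trial E_\trial$ the maximum in the first display exceeds $\alpha$, so by the first paragraph $\query^*(\dist) - a_{\query^*} > \alpha$ there; hence $\pr{}{\query^*(\dist) - a_{\query^*} > \alpha} \geq \pr{}{\bigcup_\trial E_\trial} > 1-(1-\beta)^\trials$.

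The only things to be careful about are the negation bookkeeping (that $-\query_{\trial^*,j^*}$ is a valid query with value $-\query_{\trial^*,j^*}(\dist)$ on $\dist$ and simulated answer $-a_{\trial^*,j^*}$, and that negation preserves $\Delta$-sensitivity) and setting the $\trials$ simulations up so that they are genuinely mutually independent — both routine. I do not expect any real obstacle inside this claim; the substantive work comes \emph{after} it, where one combines the claim with sample-accuracy (to replace $a_{\query^*}$ by $\query^*(\sample_{\trial^*})$ up to $O(\alpha)$) and with the de-correlated expectation Lemma~\ref{lem:MKLCondExp} applied to the stable monitor $\monitor$, to contradict~\eqref{eq:MKLMonitorCondExp}.
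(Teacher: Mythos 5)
Your proof is correct and follows essentially the same route as the paper's: the per-trial failure probability exceeds $\beta$ because $\mechanism$ is assumed (at that point in the argument) not to be $(\alpha,\beta)$-accurate for the population, the $\trials$ simulations use disjoint samples and independent coins so the per-trial maxima are independent, and the monitor's choice of $(j^*,\trial^*)$ together with the negation step yields both the sign condition and the identification of $\query^*(\dist)-a_{\query^*}$ with the largest population error. The only wrinkle is that the monitor's if-condition as printed would make $\query^*(\dist)-a_{\query^*}\leq 0$ rather than $\geq 0$ under the natural reading $a_{\query^*}=\pm a_{\trial^*,j^*}$; this is a sign inconsistency in the paper's own description (only the definiteness of the sign is ever used downstream), and your bookkeeping matches the claim as stated.
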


\ifnum\short=1
The proof is relatively straightforward using the fact that for each subsample $\sample_t$, $\mechanism$ outputs an answer with error $\alpha$ with probability at least $\beta$, and the executions of $\mechanism$ on different subsamples are independent.  Also note that the final step of the monitor ensures that the difference is always positive.
\else
\begin{proof}
Since $\mechanism$ fails to be $(\alpha, \beta)$-accurate, for every $\trial \in [\trials]$,
\begin{equation} \label{eq:MKLaccuracyfails}
\pr{\sample_{\trial},\mechanism}{\max_{j \in [\queries]} \left|\query_{\trial, j}(\dist) - a_{\trial, j} \right|>\alpha}>\beta.
\end{equation}
We obtain the claim from~\eqref{eq:MKLaccuracyfails} by using the fact that the $\trials$ sets of query-answer pairs corresponding to different subsamples $\sample_1,\dots,\sample_{\trials}$ are independent.  That is, the random variables $\max_{j \in [\queries]} \left|\query_{\trial, j}(\dist) - a_{\trial, j} \right|$ indexed by $\trial \in [\trials]$ are independent. Since $\query^*(\dist) - a_{\query^*}$ is simply the maximum of these independent random variables, the first part of the claim follows. Also, by construction, $\monitor$ ensures that
\begin{equation} \label{eq:MKLnonnegativeerror}
\query^*(\dist) - a_{\query^*} \geq 0.
\end{equation}
\end{proof}
\fi

\begin{claim}\label{clm:TVContradiction-hp}
If $\mechanism$ is $(\alpha', \beta')$-accurate for the sample but not $(\alpha, \beta)$-accurate for the population, then
$$
\left| \ex{\samples, \monitor}{\query^*(\dist) - \query^*(\sample_{\trial^*}) \mid (\query^*, \trial^*) = \monitor(\samples)} \right| \geq \alpha / 4.
$$
\end{claim}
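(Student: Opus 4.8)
The plan is to show that if $\mechanism$ fails population accuracy, then the monitor $\monitor$ has large \emph{expected} error on its selected query, by combining the high-probability error guarantee of Claim~\ref{clm:amplify} with the sample-accuracy hypothesis. Write
$\query^*(\dist) - \query^*(\sample_{\trial^*}) = \big(\query^*(\dist) - a_{\query^*}\big) + \big(a_{\query^*} - \query^*(\sample_{\trial^*})\big)$
and take expectations over $\samples \getsr (\dist^n)^\trials$ and over $\monitor$'s coins; I will lower bound the first summand's expectation and bound the magnitude of the second.

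Before that, I would record a normalization: we may assume without loss of generality that $|a_{\trial,j} - \query_{\trial,j}(\sample_{\trial})| \le \Delta n$ for every simulation $\trial$ and query index $j$. Indeed, a $\Delta$-sensitive $\query\from\univ^n\to\R$ has range of width at most $\Delta n$ on $\univ^n$ (triangle inequality along a path of at most $n$ single-entry changes), and this range is an interval containing both $\query(\sample_\trial)$ and $\query(\dist) = \ex{z \getsr \dist^n}{\query(z)}$. So replacing each answer $a$ by its projection onto the range of the corresponding query is a post-processing that depends only on the (publicly known) query, hence preserves $(\eps,\delta)$-max-KL stability by Lemma~\ref{lem:postprocessing}, and it cannot increase $|a-\query(\sample_\trial)|$ or $|a-\query(\dist)|$, so it preserves both hypotheses of the theorem.

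Now for the first summand: by Claim~\ref{clm:amplify}, $\query^*(\dist) - a_{\query^*} \ge 0$ always and is $> \alpha$ with probability more than $1-(1-\beta)^\trials$. Since $\trials = \lfloor 1/\beta\rfloor \ge 1/\beta - 1$ and $\beta < 0.1$, we have $(1-\beta)^\trials \le (1-\beta)^{1/\beta}/(1-\beta) \le e^{-1}/(1-\beta) < 1/2$, so $\ex{\samples,\monitor}{\query^*(\dist)-a_{\query^*}} > \alpha/2$. For the second summand, observe that $|a_{\query^*} - \query^*(\sample_{\trial^*})|$ equals the sample error $|\errx{\sample_{\trial^*}}{\query_{\trial^*,j^*}}{a_{\trial^*,j^*}}|$ of the query the monitor selected (since $\query^*$ is $\pm\query_{\trial^*,j^*}$, with the reported answer negated in the latter case). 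As $\mechanism$ is $(\alpha',\beta')$-sample-accurate and the $\trials$ simulations are run on independent subsamples with independent coins, a union bound gives that with probability at least $1-\trials\beta'$ every query in every simulation has sample error at most $\alpha' = \alpha/8$; on the complementary event the normalization bounds the error by $\Delta n$. Hence $|\ex{\samples,\monitor}{a_{\query^*}-\query^*(\sample_{\trial^*})}| \le \alpha/8 + \Delta n \cdot \trials\beta' \le \alpha/8 + \alpha/16 = 3\alpha/16$, using $\trials\beta' \le (1/\beta)\cdot\alpha\beta/(16\Delta n) = \alpha/(16\Delta n)$. Adding the two estimates, $\ex{\samples,\monitor}{\query^*(\dist)-\query^*(\sample_{\trial^*})} > \alpha/2 - 3\alpha/16 = 5\alpha/16 \ge \alpha/4$, which is the claim.

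The only genuine subtlety — the main obstacle — is the second summand: sample accuracy only controls the error on an event of probability $\ge 1-\trials\beta'$, and without the normalization the mechanism could report an arbitrarily wrong answer on the failure event, leaving the expectation unbounded. The parameters $\delta$ and $\beta'$ in Theorem~\ref{thm:MKLtoAdaptive} are scaled by $1/(\Delta n)$ precisely so that, after multiplying the failure probability by the $\Delta n$ range bound, its contribution stays well below $\alpha$; everything else is bookkeeping. Note finally that $\alpha/4$ is strictly larger than the upper bound of $\alpha/8$ that Lemma~\ref{lem:MKLCondExp} (through~\eqref{eq:MKLMonitorCondExp}) forces on any $(\eps,\delta)$-max-KL stable monitor, which is exactly the contradiction completing the proof of Theorem~\ref{thm:MKLtoAdaptive}.
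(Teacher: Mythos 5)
Your proof is correct and follows essentially the same route as the paper: the same decomposition of $\query^*(\dist)-\query^*(\sample_{\trial^*})$ into $(\query^*(\dist)-a_{\query^*})+(a_{\query^*}-\query^*(\sample_{\trial^*}))$, the lower bound $\alpha(1-(1-\beta)^{\trials})>\alpha/2$ from Claim~\ref{clm:amplify}, and a union bound over the $\trials$ simulations plus a truncation of answers to control the second term. The only difference is cosmetic: you project answers onto the query's range of width $\Delta n$ (the paper's footnote truncates to a width-$2\Delta n$ interval), which yields the slightly sharper bound $3\alpha/16$ in place of the paper's $\alpha/4$ for the second term, and your justification of the truncation as stability-preserving post-processing is a bit more explicit than the paper's.
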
 
\begin{proof}
Now we can calculate
\ifnum\short=0
\begin{align}
&\left| \ex{\samples, \monitor}{\query^*(\dist) - \query^*(\sample_{\trial^*}) \mid (\query^*, \trial^*) = \monitor(\samples)} \right| \notag \\
={} &\left| \ex{\samples, \monitor}{\query^*(\dist) - a_{\query^*} \mid (\query^*, \trial^*) = \monitor(\samples)} +   \ex{\samples, \monitor}{a_{\query^*} - \query^*(\sample_{\trial^*}) \mid (\query^*, \trial^*) = \monitor(\samples)} \right| \notag \\
\geq{} &\left| \ex{\samples, \monitor}{\query^*(\dist) - a_{\query^*} \mid (\query^*, \trial^*) = \monitor(\sample)} \right| - \left|  \ex{\samples, \monitor}{a_{\query^*} - \query^*(\sample_{\trial^*}) \mid \query^* = \monitor(\sample)} \right| \notag \\
\geq{} &\alpha (1 - (1-\beta)^{\trials}) - \left|  \ex{\samples, \monitor}{a_{\query^*} - \query^*(\sample_{\trial^*}) \mid (\query^*, \trial^*) = \monitor(\samples)} \right| \tag{Claim \ref{clm:amplify}}\\
\geq{} &\alpha (1 - (1-\beta)^{\trials})  - \left(\alpha/8 + 2\trials \left(\frac{\alpha \beta}{16\Delta n} \right) \Delta n\right) \label{eq:MKLSampAccurate} \\
\geq{} &\alpha/2 - \left(\alpha/8+ \alpha/8 \right) ={} \alpha/4 \tag{$\trials = \lfloor 1/\beta \rfloor.$}
\end{align}
Line~\eqref{eq:MKLSampAccurate} follows from two observations.
\else
\begin{align}
&\left| \ex{\samples, \monitor}{\query^*(\dist) - \query^*(\sample_{\trial^*}) \mid (\query^*, \trial^*) = \monitor(\samples)} \right| \notag \\
\geq{} &\left| \ex{\samples, \monitor}{\query^*(\dist) - a_{\query^*} \mid (\query^*, \trial^*) = \monitor(\sample)} \right| - \left|  \ex{\samples, \monitor}{a_{\query^*} - \query^*(\sample_{\trial^*}) \mid \query^* = \monitor(\sample)} \right| \notag \\
\geq{} &\alpha (1 - (1-\beta)^{\trials}) - \left|  \ex{\samples, \monitor}{a_{\query^*} - \query^*(\sample_{\trial^*}) \mid (\query^*, \trial^*) = \monitor(\samples)} \right| \tag{Claim~\ref{clm:amplify}} \\
\geq{} &\alpha (1 - (1-\beta)^{\trials})  - \left(\alpha/8 + 2\trials \left(\frac{\alpha \beta}{16\Delta n} \right) \Delta n\right) 
\geq{} \alpha/2 - \left(\alpha/8+ \alpha/8 \right) ={} \alpha/4 \tag{$\trials = \lfloor 1/\beta \rfloor.$}
\end{align}
The second to last inequality follows from two observations.
\fi
First, since $\mechanism$ is assumed to be $(\alpha / 8, \alpha \beta  / 16 \Delta n)$-accurate for one sample, by a union bound, it is simultaneously $(\alpha / 8, \trials (\alpha \beta  / 16 \Delta n))$-accurate for all of the $\trials$ samples.  Thus, we have $a_{\query^*} - \query^*(\sample_{\trial^*}) \leq \alpha'$ except with probability at most $\trials (\alpha \beta / 16 \Delta n).$  Second, since $\query^*$ is a $\Delta$-sensitive query, we always have $a_{\query^*} - \query^*(\sample_{\trial^*}) \leq 2\Delta n.$\footnote{Without loss of generality, the answers of $\mechanism$ can be truncated to an interval of width $2\Delta n$ that contains the correct answer $\query^*(\sample_{\trial^*}).$ Doing so will ensure $| a_{\query^*} - \query^*(\sample_{\trial^*}) | \leq 2\Delta n.$}
\end{proof}
Thus, if $\mechanism$ is not $(\alpha, \beta)$-accurate for the population, we will obtain a contradiction to~\eqref{eq:MKLMonitorCondExp}.  This completes the proof.
\end{proof}

\ifnum\short=0 

\section{Other Notions of Stability and Accuracy on Average}
\label{sec:othernotions-bounds}

Definition \ref{def:KLstable} gives one notion of stability, namely max-KL stability. However, this is by no means the only way to formalise stability for our purposes. In this section we consider other notions of stability and the advantages they have.

\subsection{Other Notions of Algorithmic Stability}\label{sec:other-notions}
We will define here other notions of algorithmic stability, and in Section~\ref{subsec:avg-accuracy}, we will show that such notions can provide expected guarantees for generalization error which can be used to achieve accuracy on average.
\begin{definition}[TV-Stability]
\label{def:TVstable}
Let $\alg \from \univ^n \to \cR$ be a randomized algorithm.  We say that $\alg$ is \emph{$\eps$-TV stable} if for every pair of samples that differ on exactly one element,
$$
\tvd(\alg(\sample), \alg(\sample')) = \sup_{R \subseteq \cR} \left| \pr{}{\alg(\sample) \in R} - \pr{}{\alg(\sample') \in R} \right| \leq \eps.
$$
\end{definition}

\begin{definition}[KL-Stability]
\label{def:KLstable}
Let $\alg \from \univ^n \to \cR$ be a randomized algorithm.  We say that $\alg$ is \emph{$\eps$-KL-stable} if for every pair of samples $\sample, \sample'$ that differ on exactly one element, 
$$
\ex{r \getsr \alg(\sample)}{\log\left( \frac{\pr{}{\alg(\sample) = r}}{\pr{}{\alg(\sample') = r}} \right)} \leq 2\eps^2
$$
\end{definition}

The post-processing property of Max-KL stability (Lemma~\ref{lem:postprocessing} in Section~\ref{subsec:max-kl}) also applies to the two stability notions above.
\begin{lemma}[Stability Notions Preserved under Post-Processing] \label{lem:postprocessing_gen}
Let $\alg \from \univ^n \to \cR$ and $f \from \cR \to \cR'$ be a pair of randomized algorithms.  If $\alg$ is \{$\eps$-TV, $\eps$-KL, $(\eps, \delta)$-max-KL\}-stable then the algorithm $f(\alg(\sample))$ is \{$\eps$-TV, $\eps$-KL, $(\eps, \delta)$-max-KL\}-stable.
\end{lemma}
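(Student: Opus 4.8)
The plan is to reduce to the case of a \emph{deterministic} post-processing map and then dispatch the three stability notions with elementary arguments. First I would write the randomized map $f\from\cR\to\cR'$ as $f(r)=g(r,s)$ for a deterministic $g$ and an auxiliary seed $s$ drawn from a fixed distribution independent of everything else. Then $f(\alg(\sample))$ is distributed exactly as $g(\alg'(\sample))$, where $\alg'(\sample)=(\alg(\sample),s)$ augments the output of $\alg$ with the independent seed. I would check that $\alg'$ inherits the stability of $\alg$ with identical parameters: for any event $R$ in the product range, $\pr{}{\alg'(\sample)\in R}=\ex{s}{\pr{}{\alg(\sample)\in R_s}}$, where $R_s$ denotes the $s$-section of $R$; since $s$ is independent of $\sample$ and of $\alg$'s internal coins, the defining inequalities of $(\eps,\delta)$-max-KL, $\eps$-TV, and $\eps$-KL stability all survive taking the expectation over $s$ (for $\eps$-KL one uses that appending a coordinate independent of the input leaves the relevant expected log-likelihood-ratio unchanged). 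This reduces the lemma to the case where the post-processing map is deterministic.

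Second, for a deterministic map $g$ and the two ``supremum over events'' notions, I would use the preimage trick already implicit in Lemma~\ref{lem:postprocessing}: for any $R'\subseteq\cR'$ set $R=g^{-1}(R')\subseteq\cR$, so that $\pr{}{g(\alg(\sample))\in R'}=\pr{}{\alg(\sample)\in R}$, and likewise with $\sample'$ in place of $\sample$. Plugging the event $R$ into the definition of $(\eps,\delta)$-max-KL stability (resp. $\eps$-TV stability) of $\alg$ gives exactly the desired inequality for $g(\alg(\sample))$ on the event $R'$; taking the supremum over $R'$ only ranges over a sub-family of the events already controlled for $\alg$, so the bound is preserved.

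Third, for $\eps$-KL stability and deterministic $g$, I would appeal to the data-processing inequality for KL divergence: partitioning $\cR$ into the fibers $\{g^{-1}(r')\}_{r'\in\cR'}$, the log-sum inequality shows that the expected log-likelihood-ratio between $g(\alg(\sample))$ and $g(\alg(\sample'))$ is at most the corresponding quantity for $\alg(\sample)$ versus $\alg(\sample')$, which is at most $2\eps^2$ by hypothesis. Combining the three cases with the reduction from the first paragraph yields the lemma. I expect the only real care needed — the ``main obstacle'', such as it is — to be the measure-theoretic bookkeeping behind the first paragraph (sections, preimages, and the independence of the seed), which is entirely routine in the discrete or standard-Borel settings considered here; the substance is genuinely just the post-processing argument for differential privacy run three times.
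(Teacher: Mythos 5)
Your proof is correct. The paper in fact states Lemma~\ref{lem:postprocessing_gen} (like Lemma~\ref{lem:postprocessing}) without proof, treating post-processing as a standard fact, so there is no argument of the authors' to compare against; your write-up supplies exactly the standard one: absorb the randomness of $f$ into an independent seed so that post-processing becomes deterministic (the seed-averaging step indeed preserves all three definitions, with the KL case following from the factorization of the likelihood ratio), then handle max-KL and TV stability by pulling back events along $g^{-1}$, and handle KL stability by the data-processing inequality for KL divergence. No gaps.
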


\paragraph{Relationships Between Stability Notions}
$\eps$-KL stability implies $\eps$-TV stability by Pinsker's inequality.  The relationship between max-KL stability defined in Section~\ref{subsec:max-kl} and the above notions is more subtle.  When $\eps \leq 1$, $(\eps, 0)$-max-KL stability implies $\eps$-KL stability and thus also $\eps$-TV stability. When $\eps \leq 1$ and $\delta > 0$, $(\eps, \delta)$-max-KL stability implies $(2\eps + \delta)$-TV stability.  It also implies that $\mechanism$ is ``close'' to satisfying $2\eps$-KL stability (cf.~\cite{DworkRV10} for more discussion of these notions).

As in Section~\ref{subsec:interact-stability}, we define TV-stability and KL-stability of an interactive mechanism $\mechanism$ through a noninteractive mechanism that simulates the interaction between $\mechanism$ and an adversary $\adv$.  The definition for these notions of stability is precisely analogous to Definition~\ref{def:interact-stability} for max-KL stability.

As with max-KL stability, both notions above allow for \emph{adaptive composition.}  In fact, $\eps$-TV stability composes linearly---a mechanism that is $\eps$-TV stable for $1$ query is $\eps k$-stable for $k$ queries.  The advantage of the stronger notions of KL and max-KL stability is that they have a stronger composition.  A mechanism that is $\eps$-KL stable for $1$ query is $(\eps \sqrt{k})$-stable for $k$ queries.

\subsection{From TV Stability to Accuracy on Average}\label{subsec:avg-accuracy}
In this section we show that TV stable algorithms guarantee a weaker notion of accuracy on average for adaptively chosen queries.

\subsection{Accuracy on Average}
In Section~\ref{sec:mechanisms} we defined accurate mechanisms to be those that answer accurately (either with respect to the population or the sample) with probability close to $1$.  In this section we define a relaxed notion of accuracy that only requires low error in expectation over the coins of $\mechanism$ and $\adv$.

\begin{definition}[Average Accuracy] \label{def:accuratemechanism-avg}
A mechanism $\mechanism$ is \emph{$\alpha$-accurate on average with respect to the population for $\queries$ adaptively chosen queries from $\queryset$ given $n$ samples in $\univ$} if for every adversary $\adv$,
$$
\ex{\accgame_{n, \queries, \queryset}[\mechanism, \adv]}{
\max_{ j \in [\queries]} ~ \left| \errp{\dist}{q_j}{a_j} \right|}\leq \alpha.
$$
\end{definition}

We will also use a definition of accuracy relative to the sample given to the mechanism:

\begin{definition}[Sample Accuracy on Average] \label{def:accuratemechanism-avg-samp}
A mechanism $\mechanism$ is \emph{$\alpha$-accurate on average with respect to samples of size $n$ from $\univ$ for $\queries$ adaptively chosen queries from $\queryset$} if for every adversary $\adv$,
$$
\ex{\sampaccgame_{n, \queries, \queryset}[\mechanism, \adv]}{\max_{ j \in [\queries] }~~~ \left| \errx{\sample}{q_j}{a_j} \right|} \leq \alpha.
$$
\end{definition}

\subsubsection{A De-Correlated Expectation Lemma}
Towards our goal of proving that TV stability implies accuracy on average in the adaptive setting, we first prove a lemma saying that TV stable algorithms cannot output a low-sensitivity query such that the sample has large error for that query.  In the next section we will show how this lemma implies accuracy on average in the adaptive setting.
\begin{lem} \label{lem:TVCondExp}
Let $\monitor \from \univ^{n} \to \querysetDelta$ be an $\eps$-TV stable randomized algorithm.  Recall $\querysetDelta$ is the family of $\Delta$-sensitive queries $\query \from \univ^n \to \mathbb{R}.$  Let $\dist$ be a distribution on $\univ$ and let $\sample \getsr \dist^{n}.$  Then
$$
\left| \ex{\sample, \monitor}{\query(\dist) \mid \query = \monitor(\sample)} - \ex{\sample, \monitor}{\query(\sample) \mid \query = \monitor(\sample)} \right| \leq 2\eps \Delta n.
$$
\end{lem}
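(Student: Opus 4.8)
The plan is to follow the proof of Lemma~\ref{lem:MKLCondExp}, specialized to a single sample (so $\trials=1$ and $\monitor$ returns only a query, not a query--index pair) and with max-KL stability replaced by $\eps$-TV stability throughout. First I would introduce a fresh sample $\sample' = (\samp'_1,\dots,\samp'_n) \getsr \dist^n$ independent of $\sample$ and of $\monitor$'s internal coins. Since $\query(\dist) = \ex{z \getsr \dist^n}{\query(z)}$ for every fixed query $\query$, the quantity to be bounded equals $\left|\ex{\sample,\sample',\monitor}{\query(\sample') - \query(\sample) \mid \query = \monitor(\sample)}\right|$. Next I would interpolate between $\sample$ and $\sample'$ with a hybrid chain $\sample^0,\dots,\sample^n$, where $\sample^\ell$ takes its first $\ell$ coordinates from $\sample'$ and its last $n-\ell$ coordinates from $\sample$; thus $\sample^0 = \sample$, $\sample^n = \sample'$, and $\sample^\ell$ and $\sample^{\ell-1}$ differ in exactly the $\ell$-th coordinate. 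Telescoping and the triangle inequality reduce the lemma to the per-coordinate bound $\left|\ex{\sample,\sample',\monitor}{\query(\sample^\ell) - \query(\sample^{\ell-1}) \mid \query = \monitor(\sample)}\right| \le 2\eps\Delta$ for each $\ell\in[n]$, which when summed over the $n$ coordinates gives $2\eps\Delta n$.

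For the per-coordinate bound I would reuse the bounded ``gadget'' from Lemma~\ref{lem:MKLCondExp}: for each $\ell$, set $B^\ell(\sample,\mathbf{Z}) = \query(\mathbf{z}) - \query(\mathbf{z}_{-\ell}) + \Delta$ where $\query = \monitor(\sample)$ and $\mathbf{z}_{-\ell}$ is $\mathbf{z}$ with its $\ell$-th coordinate overwritten by an arbitrary fixed element of $\univ$. Three facts do the work. (i) By $\Delta$-sensitivity of $\query$, $B^\ell$ takes values in $[0,2\Delta]$. (ii) For every fixed second argument, $B^\ell(\cdot,\mathbf{Z})$ is a post-processing of $\monitor$, hence $\eps$-TV stable by Lemma~\ref{lem:postprocessing_gen}. (iii) Since $\sample^\ell$ and $\sample^{\ell-1}$ agree outside coordinate $\ell$ we have $\sample^\ell_{-\ell} = \sample^{\ell-1}_{-\ell}$, so the $\query(\mathbf{z}_{-\ell})$ terms cancel and $\query(\sample^\ell) - \query(\sample^{\ell-1}) = B^\ell(\sample,\sample^\ell) - B^\ell(\sample,\sample^{\ell-1})$. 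Now I would use that $(\sample,\sample^\ell)$ and $(\sample^\ell,\sample)$ are identically distributed (each is $n$ i.i.d.\ draws from $\dist$ in the first component, with the second component being that first component with a fixed subset of its entries replaced by fresh i.i.d.\ draws), and likewise for $\ell-1$, to get
$$\ex{}{B^\ell(\sample,\sample^\ell)} = \ex{}{B^\ell(\sample^\ell,\sample)} \qquad\text{and}\qquad \ex{}{B^\ell(\sample,\sample^{\ell-1})} = \ex{}{B^\ell(\sample^{\ell-1},\sample)}.$$
Finally, conditioned on any realization of $(\sample,\sample')$, the samples $\sample^\ell$ and $\sample^{\ell-1}$ are two fixed neighboring inputs, and $B^\ell(\sample^\ell,\sample)$ and $B^\ell(\sample^{\ell-1},\sample)$ are obtained by applying the \emph{same} deterministic map $\query \mapsto \query(\sample) - \query(\sample_{-\ell}) + \Delta$ to $\monitor(\sample^\ell)$ and $\monitor(\sample^{\ell-1})$, so the data-processing inequality together with $\eps$-TV stability of $\monitor$ gives conditional total variation distance at most $\eps$; averaging over $(\sample,\sample')$ preserves this. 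Since both random variables lie in $[0,2\Delta]$, an $\eps$-TV bound yields $\left|\ex{}{B^\ell(\sample^\ell,\sample)} - \ex{}{B^\ell(\sample^{\ell-1},\sample)}\right| \le 2\eps\Delta$; combining with the identities above gives the per-coordinate bound, and the matching lower bound is identical.

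The main obstacle is the correlation between $\monitor$'s output $\query = \monitor(\sample)$ and the hybrid samples $\sample^\ell$, all of which share coordinates with $\sample$: one cannot treat $\query$ and $\sample^\ell$ as independent, so a naive application of stability fails. The gadget $B^\ell$ is exactly what circumvents this --- it repackages the one-coordinate difference $\query(\sample^\ell) - \query(\sample^{\ell-1})$ as a $[0,2\Delta]$-valued post-processing of $\monitor$, so that the symmetry of the joint law of $(\sample,\sample^\ell)$ can move the ``sliding'' sample into the first (stable) argument of $\monitor$, after which plain $\eps$-TV stability applies coordinate by coordinate. A secondary point requiring care is that TV stability is stated for fixed inputs, so it must be invoked after conditioning on $(\sample,\sample')$ and then un-conditioned using convexity of total variation distance.
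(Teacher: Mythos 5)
Your proposal is correct and follows essentially the same route as the paper's proof: the same hybrid chain $\sample^0,\dots,\sample^n$, the same $[0,2\Delta]$-valued gadget $B^\ell$, the same exchangeability of $(\sample,\sample^\ell)$ with $(\sample^\ell,\sample)$, and the same per-coordinate $2\eps\Delta$ bound. The only (cosmetic) difference is that where the paper chains three total-variation terms via the triangle inequality, you equate the outer expectations directly and bound the middle comparison by conditioning on $(\sample,\sample')$ and invoking data processing plus convexity of TV --- a step the paper's write-up leaves implicit.
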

\begin{proof}
The proof proceeds via a sequence of intermediate samples. Let $\sample' \getsr \dist^{n}$ be independent of $\sample$.
For $\ell \in \{0,1,\dots,n\}$, we define $\sample^{\ell} = (\samp_{1}^{\ell},\dots,\samp_{n}^{\ell}) \in \univ^{n}$ by
$$
\samp^{\ell}_{i} = 
\left\{ 
	\begin{array}{cl}
	\samp_{i} &  i > \ell\\
	\samp'_{i} & i \leq \ell
	\end{array} 
\right.
$$
By construction, $\sample^{0} = \sample$ and $\sample^{n} = \sample',$ and intermediate samples $\sample^{\ell}$ interpolate between $\sample$ and $\sample'.$  Moreover, $\sample^{\ell}$ and $\sample^{\ell+1}$ differ in at most one entry, so that we can use the stability condition to relate $\alg(\sample^{\ell})$ and $\alg(\sample^{\ell+1}).$

For every $\ell \in [n]$, we define $B^{\ell} : \univ^{n} \times \univ^{n} \to \R$ by
$$
B^{\ell}(\sample, \mathbf{z}) = 
\query(\mathbf{z}) - \query(\mathbf{z}_{-\ell}) + \Delta,\; \textrm{where $\query = \alg(\sample).$}
$$ 
Here, $\mathbf{z}_{-\ell}$ is $\mathbf{z}$ with the $\ell$-th element replaced by some arbitrary fixed element of $\univ.$

Now we can write
\begin{align*}
&\left| \ex{\sample, \monitor}{\query(\dist) - \query(\sample) \mid \query = \monitor(\sample)} \right| \\
={} &\left|\ex{\sample, \sample', \monitor}{\query(\sample') - \query(\sample) \mid \query = \monitor(\sample)}  \right| \\
={} &\left| \sum_{\ell = 1}^{n} \ex{\sample, \sample', \monitor}{\query(\sample^{\ell}) - \query(\sample^{\ell-1}) \mid \query = \monitor(\sample)} \right| \\
\leq{} &\sum_{\ell = 1}^{n}  \left|\ex{\sample, \sample', \monitor}{\query(\sample^{\ell}) - \query(\sample^{\ell-1}) \mid \query = \monitor(\sample)} \right| \\
={} &\sum_{\ell \in [n]} \left| \ex{\sample, \sample', \monitor}{\left. \left(\query(\sample^{\ell}) - \query(\sample^{\ell}_{-\ell}) + \Delta \right) - \left( \query(\sample^{\ell-1}) - \query(\sample^{\ell-1}_{-\ell}) + \Delta \right) \; \right| \; \query = \monitor(\sample)} \right| \tag{Since $\sample^{\ell}_{-\ell} = \sample^{\ell-1}_{-\ell}$}\\
={} &\sum_{\ell \in [n]} \left| \ex{\sample, \sample', \monitor}{B^{\ell}(\sample,\sample^{\ell}) - B^{\ell}(\sample,\sample^{\ell-1})} \right|. \tag{Definition of $B$}
\end{align*}
Thus, to prove the lemma, it suffices to show that for every $\ell \in [n],$
$$
\left| \ex{\sample, \sample', \monitor}{B^{\ell}(\sample,\sample^{\ell}) - B^{\ell}(\sample,\sample^{\ell-1})} \right| \leq 2\Delta \eps
$$

To complete the proof, we will need a few observations.  First, since $\query$ is $\Delta$-sensitive, for every $\ell, \sample, \mathbf{z}$, we have $0 \leq B^{\ell}(\sample,\mathbf{z}) \leq 2\Delta$.

Second, observe that since $\monitor$ is assumed to be $\eps$-TV stable, by the post-processing lemma (Lemma~\ref{lem:postprocessing}) $B^{\ell}(\sample, \mathbf{z})$ is $\eps$-TV stable with respect to its first parameter $\sample.$

Finally, observe that the random variables $\sample^{0},\dots,\sample^{n}$ are identically distributed (although not independent).  That is, every $\sample^{\ell}$ consists of $n$ independent draws from $\dist.$ Moreover, for every $\ell$, the pairs $(\sample, \sample^{\ell})$ and $(\sample^{\ell}, \sample)$ are identically distributed.  Specifically, the first component is $n$ independent samples from $\dist$ and the second component is equal to the first component with a subset of the entries replaced by new independent samples from $\dist.$

Combining, the second and third observation with the triangle inequality, we have
\begin{align*}
&\tvd\left(B^{\ell}(\sample, \sample^{\ell}), B^{\ell}(\sample, \sample^{\ell-1})\right) \\
\leq{} &\tvd\left(B^{\ell}(\sample, \sample^{\ell}), B^{\ell}(\sample^{\ell}, \sample)\right)
+ \tvd\left(B^{\ell}(\sample^{\ell}, \sample), B^{\ell}(\sample^{\ell-1}, \sample) \right)
+ \tvd\left(B^{\ell}(\sample^{\ell-1}, \sample), B^{\ell}(\sample, \sample^{\ell-1}) \right) \\
\leq{} &0 + \eps + 0 = \eps.
\end{align*}

Using the observations above, for every $\ell \in [n]$ we have 
\begin{align*}
\ex{\sample, \sample', \monitor}{B^{\ell}(\sample,\sample^{\ell}) - B^{\ell}(\sample, \sample^{\ell-1})} 
\leq{} &2 \Delta \cdot \tvd\left(B^{\ell}(\sample, \sample^{\ell}), B^{\ell}(\sample, \sample^{\ell-1})\right)
\leq{} 2\Delta \eps.
\end{align*}
Thus we have the desired upper bound on the expectation of $B^{\ell}(\sample, \sample^{\ell}) - B^{\ell}(\sample, \sample^{\ell-1}).$  The corresponding lower bound follows from an analogous argument.  This completes the proof.
\end{proof}

\subsubsection{From De-Correlated Expectation to Accuracy on Average}
\begin{thm} \label{thm:TVtoAdaptive}
Let $\querysetDelta$ be the family of $\Delta$-sensitive queries on $\univ$.  Assume that $\mechanism$ is
\begin{enumerate}
\item $(\eps = \alpha / 4 \Delta n)$-TV stable for $k$ adaptively chosen queries from $\queryset = \querysetDelta$ and
\item $(\alpha' = \alpha / 2)$-accurate on average with respect to its sample for $n$ samples from $\univ$ for $\queries$ adaptively chosen queries from $\queryset$.
\end{enumerate}
Then $\mechanism$ is $\alpha$-accurate on average with respect to the population for $\queries$ adaptively chosen queries from $\queryset$ given $n$ samples from $\univ$.
\end{thm}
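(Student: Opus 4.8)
The plan is to follow the proof of Theorem~\ref{thm:MKLtoAdaptive} almost line for line, with two simplifications that the weaker (average-case) conclusion permits: we use the single-sample TV lemma (Lemma~\ref{lem:TVCondExp}) in place of the multi-sample max-KL lemma, and we reason directly about expectations instead of tail probabilities, so there is no need to run $\trials \approx 1/\beta$ independent copies of the mechanism.

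Fix an adversary $\adv$ and let $\dist$ be the (without loss of generality fixed) distribution it chooses. I would introduce a monitoring algorithm $\monitor = \monitor_{\dist}[\mechanism,\adv] \from \univ^n \to \querysetDelta$ that, on input $\sample$, simulates the interaction between $\mechanism(\sample)$ and $\adv$, reads off the query--answer pairs $(\query_1,a_1),\dots,(\query_{\queries},a_{\queries})$, sets $j^\ast = \argmax_{j \in [\queries]} \left|\errp{\dist}{\query_j}{a_j}\right|$, and outputs $\query^\ast = \query_{j^\ast}$ if $a_{j^\ast} - \query_{j^\ast}(\dist) \le 0$ and $\query^\ast = -\query_{j^\ast}$ otherwise (using that $\querysetDelta$ is closed under negation). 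Writing $a_{\query^\ast}$ for the correspondingly signed simulated answer ($a_{j^\ast}$ or $-a_{j^\ast}$), this convention guarantees $\query^\ast(\dist) - a_{\query^\ast} = \max_{j \in [\queries]} \left|\errp{\dist}{\query_j}{a_j}\right| \ge 0$, so $\ex{\sample,\monitor}{\query^\ast(\dist) - a_{\query^\ast}}$ is exactly the quantity that average accuracy with respect to the population requires to be at most $\alpha$. As in Claim~\ref{clm:MKLstablemonitor}, $\monitor$ is $\eps$-TV stable: the transcript algorithm $\alg[\mechanism,\adv]$ is $\eps$-TV stable by hypothesis, and $\monitor$ merely post-processes its output together with the fixed object $\dist$ (which does not depend on $\sample$), so Lemma~\ref{lem:postprocessing_gen} applies.

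The final step applies Lemma~\ref{lem:TVCondExp} to $\monitor$ (which is $\eps$-TV stable and outputs a $\Delta$-sensitive query), giving $\left|\ex{\sample,\monitor}{\query^\ast(\dist) \mid \query^\ast = \monitor(\sample)} - \ex{\sample,\monitor}{\query^\ast(\sample) \mid \query^\ast = \monitor(\sample)}\right| \le 2\eps\Delta n = \alpha/2$, using $\eps = \alpha/4\Delta n$. Then decompose $\query^\ast(\dist) - a_{\query^\ast} = \left(\query^\ast(\dist) - \query^\ast(\sample)\right) + \left(\query^\ast(\sample) - a_{\query^\ast}\right)$ and take expectations: the first term contributes at most $\alpha/2$ by the displayed bound, while $\left|\query^\ast(\sample) - a_{\query^\ast}\right| = \left|\query_{j^\ast}(\sample) - a_{j^\ast}\right| \le \max_{j \in [\queries]} \left|\errx{\sample}{\query_j}{a_j}\right|$, whose expectation is at most $\alpha' = \alpha/2$ by the assumed sample-accuracy on average of $\mechanism$ (instantiated with the adversary that first draws $\sample \getsr \dist^n$ and then runs $\adv$). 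Summing yields $\ex{}{\max_{j} \left|\errp{\dist}{\query_j}{a_j}\right|} = \ex{\sample,\monitor}{\query^\ast(\dist) - a_{\query^\ast}} \le \alpha$, which is the claim. The only routine points are keeping the expectations finite---handled, exactly as in the high-probability proof, by truncating $\mechanism$'s answers to a bounded interval around the correct value---and tracking signs so that $\query^\ast(\sample) - a_{\query^\ast}$ really equals $\pm(\query_{j^\ast}(\sample) - a_{j^\ast})$. I do not expect any genuine obstacle here: essentially all the content is already encapsulated in Lemma~\ref{lem:TVCondExp}, whose proof is carried out separately.
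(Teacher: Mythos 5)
Your proposal is correct and follows essentially the same route as the paper: the same population-error-maximizing monitor (with an equivalent sign convention), TV-stability of the monitor via post-processing, and Lemma~\ref{lem:TVCondExp} with $\eps = \alpha/4\Delta n$ combined with the sample-accuracy-on-average hypothesis. The only cosmetic difference is that you state the bound directly ($\alpha/2 + \alpha/2$) while the paper phrases the last step as a contradiction; the content is identical.
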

The high level approach of the proof is to apply the Lemma~\ref{lem:TVCondExp} to a ``monitoring algorithm'' that watches the interaction between the mechanism $\mechanism(\sample)$ and the analyst $\adv$ and then outputs the \emph{least accurate} query.  Since $\mechanism(\sample)$ is stable, the de-correlated expectation lemma says that the query output by the monitor will satisfy $\query(\dist) \approx \query(\sample)$ in expectation, this implies that even for the least accurate query in the interaction between $\mechanism(\sample)$ and $\adv$, $\query(\dist) \approx \query(\sample)$ in expectation  Thus, if $\mechanism$ is accurate with respect to the sample $\sample$, it is also accurate with respect to $\dist.$
\begin{proof}[Proof of Theorem~\ref{thm:TVtoAdaptive}]
Let $\mechanism$ be an interactive mechanism and $\adv$ be an analyst that chooses the distribution $\dist$.  We define the following monitoring algorithm.
\begin{figure}[h]
\begin{framed}
\begin{center}$\monitor(\sample) = \monitor_{\dist}[\mechanism, \adv](\sample):$\end{center}
\begin{algorithmic}
\INDSTATE[0]{\textbf{Input:} $\sample \in \univ^n$}
\INDSTATE[1]{Simulate $\mechanism(\sample)$ and $\adv$ interacting, let $\query_1,\dots,\query_k \in \queryset$ be the queries of $\adv$ and let}
\INDSTATE[2]{$a_1,\dots,a_k \in \R$ be the corresponding answers of $\mechanism.$}
\INDSTATE[1]{Let $j = \argmax_{j = 1,\dots,k} \left|\errp{\dist}{\query_j}{a_j}\right|.$}
\INDSTATE[1]{If $a_j-\query_j(\dist) \geq 0$, let $\query^* = \query_j,$ otherwise let $\query^* = -\query_j.$ ($\querysetDelta$ is closed under negation.)}
\INDSTATE[0]{\textbf{Output:} $\query^*.$}
\end{algorithmic}
\end{framed}
\end{figure}
If $\mechanism$ is stable then so is $\monitor,$ and this fact follows easily from the post-processing lemma (Lemma~\ref{lem:postprocessing}).
\begin{claim} \label{clm:TVstablemonitor}
For every $\eps \geq 0$, if the mechanism $\mechanism$ is $\eps$-TV stable for $k$ adaptively chosen queries from $\queryset,$ then for every $\dist$ and $\adv$, the monitor $\monitor_{\dist}[\mechanism, \adv]$ is $\eps$-TV stable.
\end{claim}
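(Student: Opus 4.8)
The plan is to observe that $\monitor_\dist[\mechanism,\adv]$ is nothing more than a post-processing of the transcript of the interaction between $\mechanism$ and the \emph{fixed} analyst $\adv$, and then to invoke the fact that TV-stability is preserved under post-processing (Lemma~\ref{lem:postprocessing_gen}). So the claim is purely structural and requires no new probabilistic estimate.

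Concretely, I would first unwind the definition of what it means for $\mechanism$ to be $\eps$-TV stable for $\queries$ queries from $\queryset$: for every analyst $\adv$, the noninteractive algorithm $\alg[\mechanism,\adv] \from \univ^n \to (\queryset\times\R)^{\queries}$ that simulates $\mechanism(\sample)$ interacting with $\adv$ and outputs the resulting query--answer pairs $((\query_1,a_1),\dots,(\query_\queries,a_\queries))$ is $\eps$-TV stable (this is the TV analog of Definition~\ref{def:interact-stability}). Next, fix the analyst $\adv$ and the distribution $\dist$ it selects, as in the statement. The monitor $\monitor_\dist[\mechanism,\adv]$ runs exactly this algorithm $\alg[\mechanism,\adv](\sample)$ and then, from the transcript alone together with $\dist$, computes $j = \argmax_{j\in[\queries]} \left|\errp{\dist}{\query_j}{a_j}\right|$ and outputs $\query^* = \pm\query_j$. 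The point to emphasize is that this last computation depends only on the query--answer pairs and on $\dist$, and \emph{not} on the sample $\sample$ itself: the monitor ``knows'' $\dist$ only because it is a thought experiment, but $\dist$ is a fixed object, not a function of $\sample$, and $\errp{\dist}{\query_j}{a_j} = a_j - \query_j(\dist)$ is determined by the transcript and $\dist$. Hence $\monitor_\dist[\mechanism,\adv] = f \circ \alg[\mechanism,\adv]$ for a ($\dist$-dependent, deterministic) map $f \from (\queryset\times\R)^{\queries} \to \queryset$, and Lemma~\ref{lem:postprocessing_gen} gives that $\monitor_\dist[\mechanism,\adv]$ is $\eps$-TV stable, as claimed.

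There is no genuine obstacle here; the only step that needs any care is the one flagged above, namely checking that the post-processing map $f$ is allowed to consult $\dist$ but never looks at $\sample$, so that conditioning on $\dist$ does not break the post-processing argument. (The analog of this claim for the multi-sample monitor of Section~3 requires the one additional easy observation that changing a single subsample $\sample_\trial$ of $\samples$ affects only the $\trial$-th simulated interaction, so that running $\mechanism$ on the $\trials$ subsamples in parallel is again just a post-processing of $\trials$ transcripts and loses nothing in the stability parameter; the single-sample case treated here avoids even that.)
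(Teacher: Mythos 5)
Your proposal is correct and follows essentially the same route as the paper: both view $\monitor_{\dist}[\mechanism,\adv]$ as a deterministic, $\dist$-dependent post-processing of the transcript-producing algorithm $\alg[\mechanism,\adv]$, whose $\eps$-TV stability is exactly what the interactive stability definition provides, and then invoke preservation of TV-stability under post-processing (Lemma~\ref{lem:postprocessing_gen}). The point you flag --- that the selection of $\query^*$ consults only the query--answer pairs and the fixed $\dist$, never $\sample$ --- is precisely the observation the paper's proof makes.
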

\begin{proof}[Proof of Claim~\ref{clm:TVstablemonitor}]
The assumption that $\mechanism$ is $\eps$-TV stable for $k$ adaptively chosen queries from $\queryset$ means that for every analyst $\adv$ who asks $\queries$ queries from $\queryset$, the algorithm $\monitor'(\sample)$ that simulates the interaction between $\mechanism(\sample)$ and $\adv$ and outputs the resulting query-answer pairs is $\eps$-TV stable.  Observe that the algorithm $\monitor$ defined above is simply a post-processing of these query-answer pairs.  That is, $\query^*$ depends only on $\query_1,a_1,\dots,\query_k, a_k$ and $\dist$, and not on $\sample.$  Thus, by Lemma~\ref{lem:postprocessing}, for every $\dist$ and $\adv$, the monitor $\monitor_{\dist}[\mechanism, \adv]$ is $\eps$-TV stable.
\end{proof}
In light of Claim~\ref{clm:TVstablemonitor} and our assumption that $\mechanism$ is $(\alpha/4\Delta n)$-TV stable, we can apply Lemma~\ref{lem:TVCondExp} to obtain
\begin{equation}\label{eq:TVMonitorCondExp}
\left| \ex{\sample, \monitor}{\query^*(\dist) - \query^*(\sample) \mid \query^* = \monitor(\sample)} \right| \leq 2 \left(\frac{\alpha}{4\Delta n}\right) \Delta n \leq \alpha / 2.
\end{equation}
To complete the proof, we show that if $\mechanism$ is not $\alpha$-accurate on average with respect to the population $\dist$, then~\eqref{eq:TVMonitorCondExp} cannot hold.
\begin{claim}\label{clm:TVContradiction}
If $\mechanism$ is $(\alpha/2)$-accurate for the sample but not $\alpha$-accurate for the population, then
$$
\left| \ex{\sample, \monitor}{\query^*(\dist) - \query^*(\sample) \mid \query^* = \monitor(\sample)} \right| \geq \alpha / 2.
$$
\end{claim}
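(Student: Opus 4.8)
The plan is to split the population error of the monitor's output $\query^*$ through the answer $a_{\query^*}$ that the simulation produced for it, and then control the two resulting pieces with the two accuracy hypotheses. Recall that the conditioning on ``$\query^*=\monitor(\sample)$'' is merely the convention that $\query^*$ names the (random) output of $\monitor(\sample)$, so the quantity of interest is just $\ex{\sample,\monitor}{\query^*(\dist)-\query^*(\sample)}$, which I would write as $\ex{\sample,\monitor}{\query^*(\dist)-a_{\query^*}}+\ex{\sample,\monitor}{a_{\query^*}-\query^*(\sample)}$, where $a_{\query^*}$ is $a_j$ or $-a_j$ according to the sign correction that $\monitor$ applies to the selected index $j$. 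The key observation I would record first is that this sign correction guarantees, for whichever $j$ is selected, $a_{\query^*}-\query^*(\dist)=|a_j-\query_j(\dist)|=\max_{j'\in[\queries]}|a_{j'}-\query_{j'}(\dist)|\ge 0$, and likewise $|a_{\query^*}-\query^*(\sample)|=|a_j-\query_j(\sample)|\le\max_{j'\in[\queries]}|a_{j'}-\query_{j'}(\sample)|$ (the index $j$ maximizes the population error, but this last bound holds over all indices regardless).

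For the first piece, I would invoke the hypothesis that $\mechanism$ is \emph{not} $\alpha$-accurate on average with respect to the population. By Definition~\ref{def:accuratemechanism-avg}, this says precisely that the analyst $\adv$ whose interaction $\monitor$ simulates witnesses $\ex{\sample\getsr\dist^n,\mechanism,\adv}{\max_{j'\in[\queries]}|a_{j'}-\query_{j'}(\dist)|}>\alpha$, and by the observation above this expectation equals $-\ex{\sample,\monitor}{\query^*(\dist)-a_{\query^*}}$, so $|\ex{\sample,\monitor}{\query^*(\dist)-a_{\query^*}}|>\alpha$. For the second piece, since $\mechanism$ is $(\alpha/2)$-accurate on average with respect to its sample (applied to the sample-accuracy adversary that draws $\sample\getsr\dist^n$ according to $\adv$'s choice of $\dist$ and then forwards $\adv$'s queries), we get $|\ex{\sample,\monitor}{a_{\query^*}-\query^*(\sample)}|\le\ex{\sample,\monitor}{\max_{j'\in[\queries]}|a_{j'}-\query_{j'}(\sample)|}\le\alpha/2$. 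Combining the two pieces with the triangle inequality yields $|\ex{\sample,\monitor}{\query^*(\dist)-\query^*(\sample)}|\ge\alpha-\alpha/2=\alpha/2$, which is the claim.

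The argument is essentially a two-line triangle inequality, so I do not expect a genuine obstacle; the only points requiring care are the bookkeeping of the sign correction applied by $\monitor$ (so that $a_{\query^*}-\query^*(\dist)$ is nonnegative and equals the maximum absolute population error, which is exactly the quantity the ``not accurate'' hypothesis controls), and matching the three relevant random processes — the population accuracy game, the sample accuracy game, and the noninteractive monitor run on $\sample\getsr\dist^n$ — so that the expectations in Definitions~\ref{def:accuratemechanism-avg} and~\ref{def:accuratemechanism-avg-samp} align with those appearing in the monitor's analysis. Combined with~\eqref{eq:TVMonitorCondExp}, the claim immediately contradicts the assumption that $\mechanism$ is not $\alpha$-accurate for the population, completing the proof of Theorem~\ref{thm:TVtoAdaptive}.
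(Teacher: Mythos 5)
Your proposal is correct and follows essentially the same route as the paper's proof: decompose $\query^*(\dist)-\query^*(\sample)$ through the simulated answer $a_{\query^*}$, use the monitor's sign correction so that $a_{\query^*}-\query^*(\dist)$ equals the maximum population error (hence $>\alpha$ in expectation by the failure of population accuracy), bound the other term by $\alpha/2$ via average sample accuracy, and finish with the triangle inequality. Your extra bookkeeping about aligning the accuracy games with the monitor's distribution over $\sample\getsr\dist^n$ is a fine (and slightly more careful) rendering of what the paper leaves implicit.
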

\begin{proof}[Proof of Claim~\ref{clm:TVContradiction}]
Using our assumptions, we can calculate as follows.
\begin{align}
&\left| \ex{\sample, \monitor}{\query^*(\dist) - \query^*(\sample) \mid \query^* = \monitor(\sample)} \right| \notag \\
={} &\left| \ex{\sample, \monitor}{\query^*(\dist) - a_{\query^*} \mid \query^* = \monitor(\sample)} +   \ex{\sample, \monitor}{a_{\query^*} - \query^*(\sample) \mid \query^* = \monitor(\sample)} \right| \notag \\
\geq{} &\left| \ex{\sample, \monitor}{\query^*(\dist) - a_{\query^*} \mid \query^* = \monitor(\sample)} \right| - \left|  \ex{\sample, \monitor}{a_{\query^*} - \query^*(\sample) \mid \query^* = \monitor(\sample)} \right| \notag \\
>{} &\alpha - \left|  \ex{\sample, \monitor}{a_{\query^*} - \query^*(\sample) \mid \query^* = \monitor(\sample)} \right| \label{eq:TVNotPopAccurate} \\
\geq{} &\alpha - \alpha/2 \label{eq:TVSampAccurate} \\
={} &\alpha / 2 \notag.
\end{align}
Line~\eqref{eq:TVNotPopAccurate} follows from two observations.  First, by construction of $\monitor$, we always have $\query^*(\dist)  - a_{\query^*} \leq 0.$  Second, since $\mechanism$ is assumed not to be $\alpha$-accurate on average for the population, the expected value of $|\query^*(\dist) - a_{\query^*}| > \alpha.$  Since $\monitor$ ensures that $a_{\query^*} - \query^*(\dist) \geq 0,$ we also have that the absolute value of the expectation of $\query^*(\dist) - a_{\query^*}$ is greater than $\alpha.$ Line~\eqref{eq:TVSampAccurate} follows from the assumption that $\mechanism$ is $(\alpha/2)$-accurate on average for the sample.
\end{proof}
Thus, if $\mechanism$ is not $\alpha$-accurate on average for the population, we will obtain a contradiction to~\eqref{eq:TVMonitorCondExp}.  This completes the proof.
\end{proof}

\fi 

\ifnum\short =0
\section{From Low-Sensitivity Queries to Optimization Queries} \label{sec:optimization}
In this section, we extend our results for low-sensitivity queries to the more general family of minimization queries.  To do so, we design a suitable monitoring algorithm for minimization queries.  As in our analysis of low-sensitivity queries, we will have the monitoring algorithm take as input many independent samples and simulate the interaction between $\mechanism$ and $\adv$ on each of those samples.  Thus, if $\mechanism$ has even a small probability of being inaccurate, then with constant probability the monitor will find a minimization query that $\mechanism$ has answered inaccurately.  Previously, we had monitor simply output this query and applied Lemma~\ref{lem:MKLCondExp} to arrive at a contradiction.  However, since Lemma~\ref{lem:MKLCondExp} only applies to algorithms that output a low-sensitivity query, we can't apply it to the monitor that outputs a minimization query.  We address this by having the monitor output the \emph{error function} associated with the loss function and answer it selects, which is a low-sensitivity query.  If we assume that the mechanism is accurate for its sample but not for the population, then the monitor will find a loss function and an answer with low error on the sample but large error on the population.  Thus the error function will be a low-sensitivity query with very different answers on the sample and the population, which is a contradiction.  To summarize, we have the following theorem.
\else
\section{From Low-Sensitivity Queries to Optimization Queries} \label{sec:optimization}
Theorem \ref{thm:MKLtoAdaptive} easily generalizes to minimization queries:
\fi

\begin{thm}[Transfer Theorem for Minimization Queries] \label{thm:MKLtoAdaptiveMinimization}
Let $\queryset = \querysetMIN$ be the family of $\Delta$-sensitive minimization queries on $\univ$.  Assume that, for some $\alpha, \beta \geq 0$, $\mechanism$ is
\begin{enumerate}
\item $(\eps = \alpha / 128 \Delta n, \delta = \alpha \beta / 64 \Delta n)$-max-KL stable for $k$ adaptively chosen queries from $\queryset$ and
\item $(\alpha' = \alpha / 8, \beta' = \alpha\beta / 32 \Delta n)$-accurate with respect to its sample for $n$ samples from $\univ$ for $\queries$ adaptively chosen queries from $\queryset$.
\end{enumerate}
Then $\mechanism$ is $(\alpha,\beta)$-accurate with respect to the population for $\queries$ adaptively chosen queries from $\queryset$ given $n$ samples from $\univ$.
\end{thm}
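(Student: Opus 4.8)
The plan is to re-run the proof of Theorem~\ref{thm:MKLtoAdaptive} almost verbatim, with one conceptual modification. The obstacle is that the multi-sample de-correlated expectation lemma (Lemma~\ref{lem:MKLCondExp}) only constrains algorithms whose output is a $\Delta$-sensitive \emph{real-valued} query, whereas a minimization mechanism outputs an element $\theta\in\Theta$ of an arbitrary set. The fix is to have the monitoring algorithm output, in place of the bad answer $\theta$, the associated \emph{error function}: for a loss $\loss\from\univ^n\times\Theta\to\R$ of sensitivity $\Delta$ and an answer $\theta$, set $q_{\loss,\theta}(\samp):=\loss(\samp;\theta)-\min_{\theta^*\in\Theta}\loss(\samp;\theta^*)$. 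As a difference of two $\Delta$-sensitive functions of $\samp$, the map $q_{\loss,\theta}$ is a $2\Delta$-sensitive query, and $q_{\loss,\theta}(\samp)=\errx{\samp}{\loss}{\theta}\ge 0$ while $q_{\loss,\theta}(\dist)=\errp{\dist}{\loss}{\theta}$. Thus an inaccurate minimization answer becomes a $2\Delta$-sensitive query whose sample and population values disagree --- exactly the situation Lemma~\ref{lem:MKLCondExp} forbids for stable algorithms.

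Concretely, I would use the monitor $\monitor_{\dist}[\mechanism,\adv]$ built as in the proof of Theorem~\ref{thm:MKLtoAdaptive}: on input $\samples=(\sample_1,\dots,\sample_\trials)$ with $\trials=\lfloor 1/\beta\rfloor$, it simulates for each $\trial$ an independent interaction between $\mechanism(\sample_\trial)$ and $\adv$, obtaining loss functions $\loss_{\trial,1},\dots,\loss_{\trial,\queries}$ and answers $\theta_{\trial,1},\dots,\theta_{\trial,\queries}$; it sets $(j^*,\trial^*)=\argmax_{j\in[\queries],\,\trial\in[\trials]}\errp{\dist}{\loss_{\trial,j}}{\theta_{\trial,j}}$ and outputs $(q^*,\trial^*)$ where $q^*:=q_{\loss_{\trial^*,j^*},\,\theta_{\trial^*,j^*}}$. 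Because minimization errors are always nonnegative, no sign-correction step is needed. As in Claim~\ref{clm:MKLstablemonitor}, $\monitor$ is a post-processing of the $\queries\trials$ query--answer pairs and changing one subsample affects only one of the simulations, so by Lemma~\ref{lem:postprocessing} it inherits $(\eps,\delta)$-max-KL stability from $\mechanism$. Applying Lemma~\ref{lem:MKLCondExp} with sensitivity parameter $2\Delta$ gives
\[
\left|\ex{\samples,\monitor}{q^*(\dist)-q^*(\sample_{\trial^*}) \mid (q^*,\trial^*)=\monitor(\samples)}\right| \le 2\bigl(e^{\eps}-1+\trials\delta\bigr)\cdot 2\Delta n \le \alpha/8,
\]
using $\eps=\alpha/128\Delta n$, $\delta=\alpha\beta/64\Delta n$, $\trials\le 1/\beta$, and $e^{\eps}-1\le 2\eps$; this doubling of sensitivity is exactly why the hypotheses carry $128$ and $64$ where Theorem~\ref{thm:MKLtoAdaptive} carries $64$ and $32$.

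It remains to contradict the display under the assumption that $\mechanism$ is $(\alpha/8,\ \alpha\beta/32\Delta n)$-accurate for the sample but not $(\alpha,\beta)$-accurate for the population, mirroring Claims~\ref{clm:amplify} and~\ref{clm:TVContradiction-hp}. Failure of population accuracy means each of the $\trials$ independent trials has $\max_j\errp{\dist}{\loss_{\trial,j}}{\theta_{\trial,j}}>\alpha$ with probability exceeding $\beta$, so since $q^*(\dist)=\max_{\trial,j}\errp{\dist}{\loss_{\trial,j}}{\theta_{\trial,j}}\ge 0$ we get $\ex{\samples,\monitor}{q^*(\dist)}\ge\alpha\bigl(1-(1-\beta)^\trials\bigr)\ge\alpha/2$. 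For the sample side, on the event $E$ that all $\trials$ trials are sample-accurate (probability $\ge 1-\trials\cdot\alpha\beta/32\Delta n$ by a union bound over the simulations) we have $q^*(\sample_{\trial^*})=\errx{\sample_{\trial^*}}{\loss_{\trial^*,j^*}}{\theta_{\trial^*,j^*}}\le\alpha/8$; and on the complementary low-probability event we use that $q^*$ is $2\Delta$-sensitive, so its value at any point is within $2\Delta n$ of its mean, i.e.\ $q^*(\sample_{\trial^*})\le q^*(\dist)+2\Delta n$ always. (This is the analogue of the answer-truncation device in the footnote to the proof of Theorem~\ref{thm:MKLtoAdaptive}, obtained here for free from the sensitivity of $q^*$.) A routine calculation, identical in structure to that in the proof of Theorem~\ref{thm:MKLtoAdaptive}, combines these bounds to force the left-hand side of the display strictly above $\alpha/8$, a contradiction; hence $\mechanism$ is $(\alpha,\beta)$-accurate for the population.

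The only genuinely new ingredient relative to Theorem~\ref{thm:MKLtoAdaptive} is the passage to the error function in the first paragraph; I expect the rest to be routine, with the only care needed being the bookkeeping of the factor-of-two sensitivity loss through the parameters and the handling of the event where sample accuracy fails, which (as noted) follows directly from $q^*$ being $2\Delta$-sensitive.
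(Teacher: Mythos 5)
Your proposal is correct and follows essentially the same route as the paper: the paper's own (omitted) proof likewise reuses the monitor from Theorem~\ref{thm:MKLtoAdaptive}, modified only so that it returns the error function $\query^*(\sample)=\err_{\sample}(\loss_{\trial^*,j^*},\theta_{\trial^*,j^*})$ (a $2\Delta$-sensitive query) together with the trial index, exactly as you describe. The factor-of-two sensitivity loss being absorbed by the halved stability and sample-accuracy parameters is also precisely the paper's accounting.
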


\ifnum\short =0
The formal proof is nearly identical to that of Theorem~\ref{thm:MKLtoAdaptive}, so we omit the full proof.  Instead, we will simply describe the modified monitoring algorithm.
\begin{figure}[h]
\begin{framed}
\begin{center}$\monitor(\samples) = \monitor_{\dist}[\mechanism, \adv](\samples):$\end{center}
\begin{algorithmic}
\INDSTATE[0]{\textbf{Input:} $\samples = (\sample_1,\dots,\sample_T) \in (\univ^n)^{\trials}$}
\INDSTATE[1]{For $\trial = 1,\dots,\trials:$}
\INDSTATE[2]{Simulate $\mechanism(\sample_{\trial})$ and $\adv$ interacting, let $\loss_{\trial,1},\dots,\loss_{\trial,k} \in \queryset$ be the queries of $\adv$ and let}
\INDSTATE[2]{$\theta_{\trial,1},\dots,\theta_{\trial,k} \in \R$ be the corresponding answers of $\mechanism.$}
\INDSTATE[1]{Let $(\trial^*, j^*)$ be $$(\trial^*, j^*) = \argmax_{j \in [\queries],\, \trial \in [\trials]} \left|\errp{\dist}{\loss_{\trial, j}}{\theta_{\trial,j}}\right|.$$}
\INDSTATE[1]{Let $\query^*(\sample) = \err_{\sample}(\loss_{\trial^*, j^*}, \theta_{\trial^*, j^*})$ (note, by construction, $\query^* \in \queryset_{2\Delta}$, i.e.~$\query^*$ is $2\Delta$-sensitive)}
\INDSTATE[0]{\textbf{Output:} $(\query^*, \trial^*).$}
\end{algorithmic}
\end{framed}
\end{figure}  
\else
The only difference between the proofs of Theorems \ref{thm:MKLtoAdaptive} and \ref{thm:MKLtoAdaptiveMinimization} is the query that the monitor returns: The monitor for Theorem \ref{thm:MKLtoAdaptiveMinimization} selects the loss function $\loss$ and answer $\theta$ from the simulated interactions that maximizes $\errp{\dist}{\loss}{\theta}$ and returns the query $\query^*(\sample) = \err_{\sample}(\loss, \theta)$ (and the index of the interaction that generated $\loss$ and $\theta$). Since $\loss$ is $\Delta$-sensitive in its first argument, $\query^*$ is $2\Delta$-sensitive. 
\fi

\ifnum\short=0 

\section{Applications}

\subsection{Low-Sensitivity and Statistical Queries}
We now plug known stable mechanisms (designed in the context of differential privacy) in to Theorem~\ref{thm:MKLtoAdaptive} to obtain mechanisms that provide strong error guarantees with high probability for both low-sensitivity and statistical queries.

\begin{corollary}[Theorem~\ref{thm:MKLtoAdaptive} and~\cite{DworkMNS06, SteinkeU15}]
There is a mechanism $\mechanism$ that is $(\alpha,\beta)$-accurate with respect to the population for $\queries$ adaptively chosen queries from $\querysetDelta$ where $\Delta = O(1/n)$ given $n$ samples from $\univ$ for
$$
n \geq O\left( \frac{ \sqrt{\queries \cdot \log \log \queries} \cdot \log^{3/2}(1/\alpha \beta)}{\alpha^2} \right) 
$$
The mechanism runs in time $\poly(n, \log |\univ|, \log(1/\beta))$ per query.
\end{corollary}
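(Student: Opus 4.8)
The plan is to instantiate the Main Transfer Theorem (Theorem~\ref{thm:MKLtoAdaptive}) with the optimal $(\eps,\delta)$-differentially private mechanism for answering adaptively chosen low-sensitivity queries, and then read off the sample size. Fix the target accuracy $(\alpha,\beta)$. Since every query in $\querysetDelta$ has sensitivity $\Delta = O(1/n)$, we have $\Delta n = O(1)$, so Theorem~\ref{thm:MKLtoAdaptive} asks us to exhibit a mechanism $\mechanism$ that is (i)~$(\eps,\delta)$-max-KL stable for $\queries$ queries with $\eps\Delta n = \Theta(\alpha)$ and $\delta \Delta n = \Theta(\alpha\beta)$, i.e.\ $\eps = \Theta(\alpha)$ and $\delta = \Theta(\alpha\beta)$; and (ii)~$(\alpha/8,\beta')$-accurate with respect to its sample for $\queries$ queries, where $\beta'\Delta n = \Theta(\alpha\beta)$, i.e.\ $\beta' = \Theta(\alpha\beta)$. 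The crucial feature is that $\beta'$ does \emph{not} depend on $\queries$, so the only $\queries$-dependence we incur is through composing stability over $\queries$ rounds. Exhibiting such an $\mechanism$ and reading off the least $n$ for which (ii) holds yields the corollary.

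For $\mechanism$ we take the noise-addition mechanism of Dwork et al.~\cite{DworkMNS06} --- answer each query $\query_j$ by $\query_j(\sample)$ plus fresh independent noise --- with the refined analysis of~\cite{SteinkeU15}. Per-query stability composes adaptively (Section~\ref{subsec:max-kl}), and post-processing (Lemma~\ref{lem:postprocessing}) handles the fact that the queries are produced by an adversary $\adv$; calibrating the noise scale so that the $\queries$-fold composition is exactly $(\eps,\delta)$-max-KL stable discharges requirement~(i) by construction. The content is in the accuracy: the mechanism of~\cite{SteinkeU15}, when constrained to be $(\eps,\delta)$-DP over $\queries$ adaptively chosen queries of sensitivity $\Delta$, answers \emph{all} $\queries$ of them, except with probability $\beta'$, with error
$$
\eta \;=\; O\!\left(\frac{\Delta \,\sqrt{\queries \log\log \queries}\;\log^{3/2}(1/\delta\beta')}{\eps}\right).
$$
Substituting $\Delta = O(1/n)$, $\eps = \Theta(\alpha)$, and $\delta,\beta' = \Theta(\alpha\beta)$, the requirement $\eta \le \alpha/8$ from~(ii) becomes
$$
n \;\geq\; O\!\left(\frac{\sqrt{\queries \log\log \queries}\;\log^{3/2}(1/\alpha\beta)}{\alpha^2}\right),
$$
and Theorem~\ref{thm:MKLtoAdaptive} then upgrades sample accuracy to population accuracy with parameters $(\alpha,\beta)$. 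The per-query running time is $\poly(n,\log|\univ|)$ to evaluate $\query_j(\sample)$, plus $\poly(\log(1/\beta))$ to sample the noise to sufficient precision, giving the stated $\poly(n,\log|\univ|,\log(1/\beta))$.

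The one genuinely delicate point is obtaining $\sqrt{\log\log \queries}$ rather than $\sqrt{\log \queries}$ in the bound. A naive instantiation --- i.i.d.\ Gaussian noise, advanced composition to set $(\eps,\delta)$, and a union bound over the $\queries$ queries to certify sample accuracy --- loses one factor of $\log\queries$ inside the square root from the per-query $\delta_0 \approx \delta/\queries$ and another from the union bound, giving only $n \gtrsim \sqrt{\queries}\,\log(\queries/\alpha\beta)/\alpha^2$. Avoiding this is exactly what the sharper mechanism and analysis of~\cite{SteinkeU15} buy us; the rest of the argument is bookkeeping, the main thing to keep straight being that the sample-accuracy failure probability $\beta' = \Theta(\alpha\beta)$ demanded by Theorem~\ref{thm:MKLtoAdaptive} is independent of $\queries$.
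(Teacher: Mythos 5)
Your proposal is correct and matches the paper's intended argument exactly: the paper offers no explicit proof of this corollary, simply ``plugging'' the Laplace/Gaussian noise mechanism of~\cite{DworkMNS06} with the refined analysis of~\cite{SteinkeU15} into Theorem~\ref{thm:MKLtoAdaptive}, which is precisely what you do, including the key observations that $\Delta n = O(1)$ makes the required parameters $\eps=\Theta(\alpha)$, $\delta,\beta'=\Theta(\alpha\beta)$ independent of $\queries$ and that the $\sqrt{\log\log \queries}$ factor is exactly what~\cite{SteinkeU15} buys over the naive composition-plus-union-bound analysis. Your bookkeeping of the parameters and the running time is consistent with the corollary as stated.
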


\begin{corollary}[Theorem~\ref{thm:MKLtoAdaptive} and~\cite{RothR10}]
There is a mechanism $\mechanism$ that is $(\alpha,\beta)$-accurate with respect to the population for $\queries$ adaptively chosen queries from $\querysetDelta$ where $\Delta = O(1/n)$ given $n$ samples from $\univ$ for
$$
n = O\left( \frac{ \log |\univ| \cdot \log \queries \cdot \log^{3/2}(1/\alpha \beta)}{\alpha^3} \right) 
$$
The mechanism runs in time $\poly(|\univ|^n)$ per query.  The case where $\Delta$ is not $O(1/n)$ can be handled by rescaling the output of the query.
\end{corollary}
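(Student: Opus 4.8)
The plan is to instantiate the Main Transfer Theorem (Theorem~\ref{thm:MKLtoAdaptive}) with a concrete stable mechanism for adaptively chosen low-sensitivity queries, namely the Median Mechanism of~\cite{RothR10}. First I would unpack what Theorem~\ref{thm:MKLtoAdaptive} asks of the mechanism $\mechanism$: to obtain $(\alpha,\beta)$-accuracy with respect to the population for $\queries$ adaptive $\Delta$-sensitive queries it suffices that $\mechanism$ be $(\eps,\delta)$-max-KL stable for $\queries$ such queries with $\eps = \alpha/64\Delta n$ and $\delta = \alpha\beta/32\Delta n$, and simultaneously $(\alpha/8,\; \alpha\beta/16\Delta n)$-accurate with respect to its sample. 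Since we are in the case $\Delta = O(1/n)$, these requirements collapse to $\eps = \Theta(\alpha)$, $\delta = \Theta(\alpha\beta)$, sample accuracy parameter $\alpha' = \alpha/8 = \Theta(\alpha)$, and sample failure probability $\beta' = \Theta(\alpha\beta)$.

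Next I would quote the guarantee of the Median Mechanism: for every $\eps,\delta \in (0,1)$ it is $(\eps,\delta)$-max-KL stable (equivalently, $(\eps,\delta)$-differentially private) for $\queries$ adaptively chosen $\Delta$-sensitive queries, it runs in time $\poly(|\univ|^n)$ per query (it maintains and updates a data structure over the set of databases consistent with the answers given so far, which has size up to $|\univ|^n$), and it is $(\alpha',\beta')$-accurate with respect to its sample provided
$$
n \;\geq\; O\!\left( \frac{\log|\univ| \cdot \log\queries \cdot \sqrt{\log(1/\delta)} \cdot \log(1/\beta')}{\eps\,\alpha'^2} \right),
$$
where the $\sqrt{\log(1/\delta)}$ factor comes from the advanced composition used internally, the $\log(1/\beta')$ factor from the accuracy tail bound, and the precise polylogarithmic dependence can be read off from~\cite{RothR10}.

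Finally I would substitute. Plugging $\eps = \Theta(\alpha)$ and $\alpha' = \Theta(\alpha)$ into $1/(\eps\,\alpha'^2)$ gives $\Theta(1/\alpha^3)$, which is the source of the cubic dependence on $1/\alpha$: the transfer theorem ties $\eps$ to $\alpha$, turning the Median Mechanism's native $1/\alpha^2$ into $1/\alpha^3$. Plugging $\delta = \Theta(\alpha\beta)$ and $\beta' = \Theta(\alpha\beta)$ into the logarithmic factors gives $\sqrt{\log(1/\alpha\beta)}\cdot\log(1/\alpha\beta) = \log^{3/2}(1/\alpha\beta)$. Hence there is a choice of $n = O\!\left(\log|\univ|\cdot\log\queries\cdot\log^{3/2}(1/\alpha\beta)/\alpha^3\right)$ for which the Median Mechanism satisfies both hypotheses of Theorem~\ref{thm:MKLtoAdaptive}, and the theorem then yields $(\alpha,\beta)$-accuracy with respect to the population; the per-query running time $\poly(|\univ|^n)$ is inherited. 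For $\Delta$ not of order $1/n$, one replaces each $\Delta$-sensitive query $q$ by the $1/n$-sensitive query $q/(\Delta n)$, answers it to accuracy $\alpha/(\Delta n)$, and rescales the returned answer by $\Delta n$.

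I do not expect any genuine mathematical obstacle here — this is a bookkeeping corollary. The two points that need care are (i) confirming that the guarantees of~\cite{RothR10} apply to arbitrary adaptively chosen low-sensitivity queries, not merely statistical queries (this is precisely what the Median Mechanism provides, and is the reason the cheaper Private Multiplicative Weights bound is not available in this regime), and (ii) tracking the constants so that the nested parameter settings $\eps,\delta,\alpha',\beta'$ collapse to the stated bound, in particular so that the logarithmic factors in $1/\delta$ and $1/\beta'$ combine to the $\log^{3/2}(1/\alpha\beta)$ appearing in the statement.
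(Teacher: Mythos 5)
Your proposal is correct and follows exactly the route the paper intends: the corollary is obtained by instantiating Theorem~\ref{thm:MKLtoAdaptive} (with $\Delta = O(1/n)$, so $\eps = \Theta(\alpha)$, $\delta,\beta' = \Theta(\alpha\beta)$, $\alpha' = \Theta(\alpha)$) with the Median Mechanism of~\cite{RothR10} as the max-KL stable, sample-accurate mechanism for adaptively chosen low-sensitivity queries, and your bookkeeping of how $1/(\eps\alpha'^2)$ becomes $1/\alpha^3$ and the $\sqrt{\log(1/\delta)}\cdot\log(1/\beta')$ factors become $\log^{3/2}(1/\alpha\beta)$ matches the stated bound, as does the $\poly(|\univ|^n)$ running time and the rescaling remark for general $\Delta$.
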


\begin{corollary}[Theorem~\ref{thm:MKLtoAdaptive} and~\cite{HardtR10}]
There is a mechanism $\mechanism$ that is $\alpha$-accurate on average with respect to the population for $\queries$ adaptively chosen queries from $\querysetSQ$ given $n$ samples from $\univ$ for
$$
n = O\left( \frac{ \sqrt{\log |\univ|}\cdot \log \queries  \cdot \log^{3/2}(1/\alpha \beta)}{\alpha^3} \right) 
$$
The mechanism runs in time $\poly(n, |\univ|)$ per query.
\end{corollary}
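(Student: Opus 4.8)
The plan is to instantiate the Transfer Theorem (Theorem~\ref{thm:MKLtoAdaptive}) with the Private Multiplicative Weights (PMW) mechanism of Hardt and Rothblum~\cite{HardtR10}, run on the empirical distribution of the sample. Statistical queries are $\Delta$-sensitive with $\Delta = 1/n$, so the quantity $\Delta n$ appearing throughout Theorem~\ref{thm:MKLtoAdaptive} equals $1$, and the theorem's hypotheses reduce to the following: the mechanism should be (i) $(\eps,\delta)$-max-KL stable for $\queries$ adaptively chosen statistical queries with $\eps = \alpha/64$ and $\delta = \alpha\beta/32$, and (ii) $(\alpha/8,\ \alpha\beta/16)$-accurate \emph{with respect to its sample} for those $\queries$ queries. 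Given such a mechanism the corollary is immediate, so the only real work is to recall the quantitative privacy/accuracy tradeoff of PMW and to solve for the smallest $n$ meeting (i) and (ii) with these parameters.

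The second step is to recall the guarantee of PMW. On $n$ points from $\univ$, PMW maintains an explicit length-$|\univ|$ weight vector representing a distribution over $\univ$; for each query it either answers directly from that distribution (a ``lazy'' round, carrying no privacy cost beyond the AboveThreshold test) or outputs a fresh noisy empirical answer and performs a multiplicative-weights update (an ``update'' round). A potential argument bounds the number of update rounds by $O(\log|\univ|/(\alpha')^2)$ when the target sample accuracy is $\alpha'$. Choosing the Laplace noise scale of order $\alpha'/\log(\queries/\beta')$ makes all $\queries$ noisy answers $\alpha'$-accurate with respect to the sample with probability at least $1-\beta'$; the corresponding per-round privacy parameter is $\eps_0 = \Theta(\log(\queries/\beta')/(\alpha' n))$, and applying the advanced composition theorem~\cite{DworkRV10} over the $O(\log|\univ|/(\alpha')^2)$ update rounds yields that PMW is $(\eps,\delta)$-max-KL stable with
\[
\eps \;=\; O\!\left(\frac{\sqrt{\log|\univ|\cdot\log(1/\delta)}\cdot\log(\queries/\beta')}{(\alpha')^2\, n}\right).
\]
The point to emphasize is that PMW's native accuracy guarantee is with respect to the empirical distribution, which is precisely hypothesis (ii) of the Transfer Theorem; many private algorithms guarantee accuracy only relative to the population, which would not suffice here.

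The third step is to substitute $\alpha' = \alpha/8$, $\beta' = \alpha\beta/16$, $\delta = \alpha\beta/32$ into the displayed bound and set $\eps = \alpha/64$. Solving for $n$ gives
\[
n \;=\; O\!\left(\frac{\sqrt{\log|\univ|}\cdot \log^{1/2}(1/\alpha\beta)\cdot \log(\queries/\alpha\beta)}{\alpha^3}\right),
\]
and using $\log(\queries/\alpha\beta)\le \log\queries + \log(1/\alpha\beta)$ together with $\log(1/\alpha\beta)\ge 1$ bounds this by $O\big(\sqrt{\log|\univ|}\cdot\log\queries\cdot\log^{3/2}(1/\alpha\beta)/\alpha^3\big)$, which is the claimed bound (the remaining polylogarithmic slack absorbed into the $\tilde{O}$). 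For the running time, each query costs one pass over the $n$ data points to compute the noisy empirical answer and, in an update round, one rescaling of the length-$|\univ|$ weight vector, hence $\poly(n,|\univ|)$ time per query.

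The algebra in the third step and the bookkeeping of the constants forced by Theorem~\ref{thm:MKLtoAdaptive} are routine. The one place calling for care---and the main obstacle---is extracting PMW's guarantee in exactly the $(\eps,\delta)$-max-KL / sample-accuracy form used above: one must verify that the lazy rounds truly incur no extra privacy loss (the content of the AboveThreshold analysis), that the bound on the number of update rounds holds for accuracy measured against the empirical distribution, and that \emph{advanced} rather than basic composition is applied over the update rounds, so that the dependence on $|\univ|$ is $\sqrt{\log|\univ|}$ and not $\log|\univ|$. With that guarantee in hand, the corollary follows directly from Theorem~\ref{thm:MKLtoAdaptive}.
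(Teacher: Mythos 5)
Your proposal is correct and follows exactly the route the paper intends: this corollary is stated as an immediate combination of Theorem~\ref{thm:MKLtoAdaptive} (with $\Delta n = 1$ for statistical queries, so the hypotheses become $(\alpha/64,\alpha\beta/32)$-max-KL stability and $(\alpha/8,\alpha\beta/16)$-sample accuracy) with the Private Multiplicative Weights mechanism of~\cite{HardtR10}, whose sample-accuracy/stability tradeoff you instantiate and solve for $n$ just as the paper does, including the use of advanced composition to get the $\sqrt{\log|\univ|}$ dependence. The only discrepancy is cosmetic: the corollary's phrase ``accurate on average'' appears to be an artifact and should read $(\alpha,\beta)$-accurate, which is what the cited theorem---and your argument---actually delivers.
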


\subsection{Optimization Queries}

The results of the Section~\ref{sec:optimization} can be combined with existing differentially private algorithms for minimizing ``empirical risk'' (that is, loss with respect to the sample $x$) to obtain algorithms for answering adaptive sequences of minimization queries. We provide a few specific instantiations here, based on known differentially private mechanisms.

\subsubsection{Minimization Over Arbitrary Finite Sets}

\begin{corollary}[Theorem \ref{thm:MKLtoAdaptiveMinimization} and \cite{McSherryT07}]
Let $\Theta$ be a finite set of size at most $D$. Let $\queryset \subset \querysetMIN$ be the set of sensitivity-$1/n$ loss functions bounded between $0$ and $C$. Then there is a mechanism $\mechanism$ that is $(\alpha,\beta)$-accurate with respect to the population for $k$ adaptively chosen queries from $\querysetMIN$ given
$$
n \geq  O\paren{\frac{\log(DC/\alpha) \cdot \sqrt{k} \cdot \log^{3/2}(1/\alpha \beta) }{\alpha^2}} 
$$
samples from $\univ$.  The running time of the mechanism is dominated by $O((k+\log(1/\beta))\cdot D)$ evaluations of the loss function.
\end{corollary}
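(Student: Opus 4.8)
The plan is to instantiate Theorem~\ref{thm:MKLtoAdaptiveMinimization} (with $\Delta = 1/n$) using the Exponential Mechanism of~\cite{McSherryT07} as the per-query subroutine. By that theorem it suffices to build a mechanism $\mechanism$ for $\queries$ adaptively chosen queries from $\querysetMIN$ that is simultaneously (i) $(\eps,\delta)$-max-KL stable with $\eps = \alpha/128$ and $\delta = \alpha\beta/64$, and (ii) $(\alpha/8,\ \alpha\beta/32)$-accurate with respect to its sample; the theorem then promotes (ii) to $(\alpha,\beta)$-accuracy with respect to the population. So the remaining work is to exhibit such an $\mechanism$ and to solve for the smallest $n$ at which both conditions can simultaneously hold.

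For a single $(1/n)$-sensitive minimization query given by $\loss \from \univ^n\times\Theta\to[0,C]$, I would answer it by running the Exponential Mechanism with score function $\theta\mapsto -\loss(\sample,\theta)$ (which has sensitivity $1/n$ in $\sample$) and a privacy parameter $\eps_0$ left free. This subroutine is $(\eps_0,0)$-max-KL stable, and its standard utility bound gives that, except with probability $\gamma$, it outputs $\theta$ with $\errx{\sample}{\loss}{\theta}\le\tfrac{2}{\eps_0 n}\ln(|\Theta|/\gamma)\le\tfrac{2}{\eps_0 n}\ln(D/\gamma)$, while deterministically $\errx{\sample}{\loss}{\theta}\le C$. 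To handle all $\queries$ queries, $\mechanism$ runs a fresh copy of this subroutine per query; by the advanced composition theorem for differential privacy recalled in the preliminaries, the interactive mechanism is then $\bigl(\eps_0\sqrt{\queries\ln(1/\delta')}+2\eps_0^2\queries,\ \delta'\bigr)$-max-KL stable for any $\delta'>0$.

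The heart of the argument is a parameter balance. I would take $\delta'=\alpha\beta/64$ and $\eps_0\approx\alpha/\sqrt{\queries\ln(1/\alpha\beta)}$, which makes the composed stability parameters at most $(\alpha/128,\ \alpha\beta/64)$ (the $2\eps_0^2\queries$ term is lower order), so (i) holds; and I would take the per-query failure probability $\gamma$ small enough — roughly $\alpha\beta/(\queries C)$ — so that a union bound over the $\queries$ queries, together with the deterministic $\le C$ fallback on the failure event, keeps $\mechanism$ $(\alpha/8,\ \alpha\beta/32)$-accurate for its sample, which is (ii), as long as the per-query error $\tfrac{2}{\eps_0 n}\ln(D/\gamma)\le\alpha/8$. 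Substituting the chosen $\eps_0$ and $\gamma$ into this inequality and rearranging yields $n\ge O\bigl(\log(DC/\alpha)\cdot\sqrt{\queries}\cdot\log^{3/2}(1/\alpha\beta)/\alpha^2\bigr)$: one $\log^{1/2}(1/\alpha\beta)$ factor comes from the $\sqrt{\ln(1/\delta')}$ in composition, the remaining $\log(1/\alpha\beta)$ together with the $\log(DC)$ comes from $\ln(1/\gamma)$ in the Exponential Mechanism tail (the $D$ from $\ln|\Theta|$, the $C$ from needing $\gamma$ small relative to the loss range), and the two powers of $1/\alpha$ come from $1/\eps_0$ and from the target accuracy. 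Running time per query is dominated by evaluating $\loss(\sample,\cdot)$ at all $\theta\in\Theta$ to form and sample from the Exponential Mechanism's distribution, i.e.\ $O(D)$ loss evaluations, giving the stated total of $O((\queries+\log(1/\beta))\cdot D)$ evaluations.

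I do not anticipate a real obstacle: everything reduces to plugging a known stable mechanism into the transfer theorem for minimization queries and then doing arithmetic. The one genuinely delicate point is the choice of $\eps_0$: stability pushes it down like $1/\sqrt{\queries}$ while empirical accuracy pushes it up like $\ln(D/\gamma)/(\alpha n)$, and the collision of these two opposing constraints is precisely what produces the $\sqrt{\queries}/\alpha^2$ sample complexity; the rest is bookkeeping to check that the logarithmic overheads ($\sqrt{\ln(1/\delta')}$ from composition, $\ln(1/\gamma)$ from the Exponential Mechanism, and $\ln\queries$ and $\ln C$ from the union bound and the range fallback) assemble into the claimed $\log(DC/\alpha)\cdot\log^{3/2}(1/\alpha\beta)$.
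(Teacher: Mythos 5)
Your proposal matches the paper's intended argument: the corollary is stated there without a separate proof, precisely as an instantiation of Theorem~\ref{thm:MKLtoAdaptiveMinimization} (with $\Delta=1/n$, so $\Delta n=1$) where each query is answered by the Exponential Mechanism of \cite{McSherryT07}, advanced composition supplies the $(\eps,\delta)$-max-KL stability across the $k$ queries, the standard Exponential Mechanism tail bound plus a union bound gives the required sample accuracy, and the same $\eps_0$-balancing arithmetic yields the stated $n$ and the $O(D)$ loss evaluations per query. The only differences are harmless bookkeeping: your union bound leaves a $\log k$ inside the logarithm (absorbed in the $O(\cdot)$), and your route to the $\log C$ factor (taking the per-query failure probability $\propto 1/C$) only shrinks $\gamma$ further, so neither affects correctness.
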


\subsubsection{Convex Minimization}
We state bounds for convex minimization queries for some of the most common parameter regimes in applications.  In the first two corollaries, we consider $1$-Lipschitz\footnote{A loss function $L \from \univ \times \R^d \to \R$ is $1$-Lipschitz if for every $\theta, \theta' \in \R^d$, $x \in \univ$, $|L(\theta, x) - L(\theta', x)| \leq \|\theta - \theta'\|_2$.} loss functions over a bounded domain.

\begin{corollary}[Theorem \ref{thm:MKLtoAdaptiveMinimization} and \cite{BassilyST14}]
  Let $\Theta$ be a closed, convex subset of $\R^d$ set such that $\max_{\theta \in \Theta} \|\theta\|_2 \leq 1$. Let $\queryset \subset \querysetMIN$ be the set of convex $1$-Lipschitz loss functions that are $1/n$-sensitive.  Then there is a mechanism $\mechanism$ that is $(\alpha,\beta)$-accurate with respect to the population for $\queries$ adaptively chosen queries from $\queryset$ given
    $$n = \tilde O\paren{\frac{ \sqrt{d \queries} \cdot  \log^{2}\left(1/\alpha\beta \right)}{\alpha^2}}$$
samples from $\queryset$.
The running time of the mechanism is dominated by $\queries \cdot n^2$ evaluations of the gradient $\nabla\loss$.
\end{corollary}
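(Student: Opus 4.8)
The plan is to instantiate Theorem~\ref{thm:MKLtoAdaptiveMinimization} with the differentially private empirical risk minimizer of~\cite{BassilyST14}, run afresh on each query and composed across the $\queries$ adaptive rounds. Concretely, let $\cA$ be the $(\eps_0,\delta_0)$-differentially private algorithm of~\cite{BassilyST14} which, given a sample $\sample\in\univ^n$ and a convex $1$-Lipschitz loss $\loss(\cdot\,;\theta)$ over a closed convex $\Theta\subseteq\{\|\theta\|_2\le 1\}$ that is $1/n$-sensitive in its data argument, outputs $\hat\theta$ with expected empirical excess risk $\loss(\sample;\hat\theta)-\min_{\theta^*}\loss(\sample;\theta^*)=\tilde O\big(\sqrt{d\log(1/\delta_0)}/(\eps_0 n)\big)$, implemented by $\tilde O(n^2)$ steps of noisy projected gradient descent. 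I would first turn this into a high-probability guarantee: either invoke a high-probability analysis of the noisy descent, or run $O(\log(1/\beta_0))$ independent copies and privately select the one of smallest empirical loss via a report-noisy-min step, obtaining $\errx{\sample}{\loss}{\hat\theta}=\tilde O\big(\sqrt{d}\,\polylog(1/\delta_0\beta_0)/(\eps_0 n)\big)$ with probability $1-\beta_0$ (the selection step perturbs $(\eps_0,\delta_0)$ only by constants). Then I build the interactive mechanism $\mechanism$ that answers each minimization query $\loss_j$ by running a fresh copy of this boosted $\cA$ on its sample $\sample$, and apply the advanced composition bound recalled in Section~\ref{subsec:max-kl} (from~\cite{DworkRV10}): taking $\eps_0=\Theta\big(\eps/\sqrt{\queries\log(1/\delta)}\big)$ and $\delta_0=\Theta(\delta/\queries)$ makes $\mechanism$ $(\eps,\delta)$-max-KL stable for $\queries$ adaptively chosen queries, while a union bound over the $\queries$ rounds makes it $(\alpha',\,\queries\beta_0)$-accurate with respect to its sample.

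Next I would choose parameters to meet the two hypotheses of Theorem~\ref{thm:MKLtoAdaptiveMinimization}. Since the queries are $1/n$-sensitive we have $\Delta n=1$, so the theorem asks for $(\eps,\delta)=(\alpha/128,\ \alpha\beta/64)$-max-KL stability and $(\alpha',\beta')=(\alpha/8,\ \alpha\beta/32)$-accuracy for the sample. Setting $\beta_0=\beta'/\queries=\Theta(\alpha\beta/\queries)$ and substituting $\eps=\Theta(\alpha)$, $\delta=\Theta(\alpha\beta)$ into the parameter choices above gives $\eps_0=\Theta\big(\alpha/\sqrt{\queries\log(1/\alpha\beta)}\big)$, $\delta_0=\Theta(\alpha\beta/\queries)$. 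Plugging these into the boosted empirical excess risk bound and demanding it be at most $\alpha/8$ yields
\[
\frac{\alpha}{8}=\tilde O\!\left(\frac{\sqrt{d\queries}\cdot\polylog(1/\alpha\beta)}{\alpha\, n}\right),
\]
which rearranges to $n=\tilde O\big(\sqrt{d\queries}\,\log^2(1/\alpha\beta)/\alpha^2\big)$ once the logarithmic factors are tracked (one $\sqrt{\log}$ from composition via $\log(1/\delta')$, one $\sqrt{\log}$ from $\log(1/\delta_0)$ in the ERM bound, and one $\log$ from the confidence boosting $\log(1/\beta_0)$, with $\log d$, $\log\queries$, $\log n$ absorbed into $\tilde O$). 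Theorem~\ref{thm:MKLtoAdaptiveMinimization} then immediately upgrades sample accuracy to $(\alpha,\beta)$-accuracy with respect to the population. The running time is $\queries$ runs of $\cA$, each dominated by $\tilde O(n^2)$ gradient evaluations of $\loss$, for a total of $\tilde O(\queries\, n^2)$ gradient evaluations.

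The step I expect to be the main obstacle is exactly the confidence boosting: \cite{BassilyST14} state their excess-risk guarantee in expectation, whereas after the union bound Theorem~\ref{thm:MKLtoAdaptiveMinimization} needs sample accuracy to hold with probability $1-\beta'=1-\Theta(\alpha\beta/\queries)$ per query, so a naive Markov argument (which loses a factor $1/\beta_0=\poly(\queries/\alpha\beta)$) is far too lossy and would destroy the sample-complexity bound. One must therefore either appeal to a genuinely high-probability analysis of the private optimization subroutine, paying only $\log(1/\beta_0)$, or carry out the private selection among $O(\log(1/\beta_0))$ repetitions carefully enough that its privacy cost and its own suboptimality are both absorbed into the constants and the $\tilde O(\cdot)$. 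The rest—verifying $\eps_0\le 1$ so advanced composition applies, checking that $\queryset\subseteq\querysetMIN$ so the transfer theorem is applicable to the exact query class, and the bookkeeping of the polylogarithmic factors down to $\log^2(1/\alpha\beta)$—is routine.
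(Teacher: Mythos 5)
Your proposal is correct and follows exactly the route the paper intends: the corollary is stated as an immediate combination of Theorem~\ref{thm:MKLtoAdaptiveMinimization} with the differentially private convex ERM algorithm of~\cite{BassilyST14}, run per query and composed via the advanced composition bound recalled in Section~\ref{subsec:max-kl}, with parameters tuned (using $\Delta n=1$) to meet the stability and sample-accuracy hypotheses of the transfer theorem. Your attention to boosting the expectation guarantee of~\cite{BassilyST14} to a high-probability sample-accuracy guarantee, and your bookkeeping of the resulting $\log^{2}(1/\alpha\beta)$ factor, fill in the details the paper leaves implicit.
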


\begin{corollary}[Theorem \ref{thm:MKLtoAdaptiveMinimization} and \cite{Ullman15}]
  Let $\Theta$ be a closed, convex subset of $\R^d$ set such that $\max_{\theta \in \Theta} \|\theta\|_2 \leq 1$. Let $\queryset \subset \querysetMIN$ be the set of convex $1$-Lipschitz loss functions that are $1/n$-sensitive.  Then there is a mechanism $\mechanism$ that is $(\alpha,\beta)$-accurate with respect to the population for $\queries$ adaptively chosen queries from $\queryset$ given
$$
n = \tilde O\paren{\frac{ \sqrt{\log |\univ|} \cdot (\sqrt{d} + \log \queries) \cdot \log^{3/2}(1/\alpha\beta)}{\alpha^3}}
$$
samples from $\univ$.
The running time of the mechanism is dominated by $\poly(n, |\univ|)$ and $\queries \cdot n^2$ evaluations of the gradient $\nabla\loss$.
\end{corollary}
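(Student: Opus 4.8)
The plan is to derive the corollary by instantiating the transfer theorem for minimization queries (Theorem~\ref{thm:MKLtoAdaptiveMinimization}) with the differentially private mechanism of Ullman~\cite{Ullman15}, which answers exponentially many convex empirical-risk-minimization queries. Recall that Theorem~\ref{thm:MKLtoAdaptiveMinimization} needs a mechanism $\mechanism$ that is simultaneously $(\eps,\delta)$-max-KL stable for $\queries$ adaptively chosen queries with $\eps = \alpha/128\Delta n$ and $\delta = \alpha\beta/64\Delta n$, and $(\alpha/8,\ \alpha\beta/32\Delta n)$-accurate with respect to its sample for the same $\queries$ queries. Since the loss functions here are $1/n$-sensitive we have $\Delta n = 1$, so it suffices to supply a mechanism that is $(\eps,\delta)$-differentially private with $\eps = \Theta(\alpha)$ and $\delta = \Theta(\alpha\beta)$ and whose answers achieve empirical excess risk at most $\alpha/8$ except with probability $\Theta(\alpha\beta)$, against an adaptive adversary.

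First I would record the guarantee of~\cite{Ullman15}: its mechanism answers $\queries$ adaptively chosen convex, $1$-Lipschitz, $1/n$-sensitive minimization queries over the unit ball of $\R^d$, is $(\eps,\delta)$-differentially private, and outputs answers of empirical excess risk at most $\alpha'$ with probability $1-\beta'$, provided $n = \tilde\Omega\!\left(\sqrt{\log|\univ|}\,(\sqrt{d}+\log \queries)\cdot \polylog(1/\delta, 1/\beta')\,/\,(\eps\,\alpha'^2)\right)$; the additive $\sqrt{d}$ term comes from the private convex-optimization subroutine, while the $\sqrt{\log|\univ|}$ and $\log\queries$ factors together with the quadratic dependence on $\alpha'$ come from the private-multiplicative-weights component. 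Differential privacy is by definition $(\eps,\delta)$-max-KL stability, and because the guarantee holds against adaptive adversaries, the interactive mechanism is $(\eps,\delta)$-max-KL stable in the sense of Section~\ref{subsec:interact-stability}.

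Next I would plug in the parameters that Theorem~\ref{thm:MKLtoAdaptiveMinimization} demands: $\eps = \Theta(\alpha)$, $\delta = \Theta(\alpha\beta)$, $\alpha' = \alpha/8 = \Theta(\alpha)$, and $\beta' = \Theta(\alpha\beta)$. Substituting into the displayed sample-complexity condition gives
$$
n = \tilde O\!\left(\frac{\sqrt{\log|\univ|}\,(\sqrt{d}+\log \queries)\cdot \polylog(1/\alpha\beta)}{\alpha\cdot \alpha^2}\right) = \tilde O\!\left(\frac{\sqrt{\log|\univ|}\,(\sqrt{d}+\log \queries)\cdot \log^{3/2}(1/\alpha\beta)}{\alpha^3}\right),
$$
where the $\polylog$ factors coming from $1/\delta,1/\beta' = \Theta(1/\alpha\beta)$ collapse into the single $\log^{3/2}(1/\alpha\beta)$ factor (together with the logarithms absorbed by $\tilde O$). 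Applying Theorem~\ref{thm:MKLtoAdaptiveMinimization} then yields that $\mechanism$ is $(\alpha,\beta)$-accurate with respect to the population for $\queries$ adaptively chosen queries from $\queryset$, as claimed. For the running time: the multiplicative-weights/synthetic-data bookkeeping of~\cite{Ullman15} runs in $\poly(n,|\univ|)$, and each query is answered by a noisy convex program solved via noisy gradient descent, costing $O(n^2)$ gradient evaluations, hence $O(\queries\cdot n^2)$ in total.

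The main obstacle is entirely one of bookkeeping and matching constants: one must pin down the exact accuracy/privacy/confidence dependence in the theorem of~\cite{Ullman15} --- in particular that the empirical-accuracy parameter enters as $1/\alpha'^2$, so that after setting both $\eps$ and $\alpha'$ to $\Theta(\alpha)$ one obtains the $1/\alpha^3$ rate rather than $1/\alpha^2$ --- and verify that the various logarithmic factors $\log(1/\delta)$, $\log(1/\beta')$, $\log\queries$ combine into the single $\log^{3/2}(1/\alpha\beta)$ (modulo the suppressed $\tilde O$ logarithms). There is no circularity, since the cited algorithm's guarantee concerns only empirical risk and differential privacy and makes no population-level claim; the sole population-level step is the one invocation of Theorem~\ref{thm:MKLtoAdaptiveMinimization}.
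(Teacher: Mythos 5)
Your proposal is correct and matches the paper's (implicit) argument exactly: the paper states this corollary without proof, intending precisely the instantiation you describe --- take the $(\eps,\delta)$-differentially private adaptive ERM mechanism of \cite{Ullman15}, note that differential privacy is max-KL stability for the interactive mechanism, set $\eps=\Theta(\alpha)$, $\delta=\Theta(\alpha\beta)$, $\alpha'=\alpha/8$, $\beta'=\Theta(\alpha\beta)$ with $\Delta n=1$, and invoke Theorem~\ref{thm:MKLtoAdaptiveMinimization}. Your bookkeeping of where the $\sqrt{d}$, $\log\queries$, $\sqrt{\log|\univ|}$, and $1/(\eps\,\alpha'^2)=1/\alpha^3$ factors come from is consistent with the stated bound, so there is nothing to add.
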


In the next two corollaries, we consider $1$-strongly convex\footnote{A loss function $L \from \univ \times \R^d \to \R$ is $1$-strongly convex if for every $\theta, \theta' \in \R^d$, $x \in \univ$, $$L(\theta', x) \geq L(\theta, x) + \langle \nabla L(\theta, x), \theta' - \theta \rangle + (1/2) \cdot \|\theta - \theta' \|_2^2,$$ where the (sub)gradient $\nabla L(\theta, x)$ is taken with respect to $\theta$.}, Lipschitz loss functions over a bounded domain.

\begin{corollary}[Theorem \ref{thm:MKLtoAdaptiveMinimization} and \cite{BassilyST14}]
  Let $\Theta$ be a closed, convex subset of $\R^d$ set such that $\max_{\theta \in \Theta} \|\theta\|_2 \leq 1$. Let $\queryset \subset \querysetMIN$ be the set of $1$-strongly convex, $1$-Lipschitz loss functions that are $1/n$-sensitive.  Then there is a mechanism $\mechanism$ that is $(\alpha,\beta)$-accurate with respect to the population for $\queries$ adaptively chosen queries from $\queryset$ given
    $$n = \tilde O\paren{\frac{ \sqrt{d \queries} \cdot \log^{3/2}(1/\alpha\beta)}{\alpha^{3/2}} }$$
samples from $\univ$.
The running time of the mechanism is dominated by $\queries  \cdot n^2$ evaluations of the gradient $\nabla\loss$.
\end{corollary}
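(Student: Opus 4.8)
The plan is to instantiate the transfer theorem for minimization queries (Theorem~\ref{thm:MKLtoAdaptiveMinimization}) with the $(\eps_0,\delta_0)$-differentially private strongly convex empirical risk minimizer of~\cite{BassilyST14}, invoked once per query under a composed privacy budget. Since the queries are $1/n$-sensitive we have $\Delta=1/n$, so $\Delta n = 1$ and Theorem~\ref{thm:MKLtoAdaptiveMinimization} asks for a mechanism $\mechanism$ that is (i) $(\eps,\delta)$-max-KL stable for $\queries$ adaptively chosen queries with $\eps = \alpha/128$ and $\delta = \alpha\beta/64$, and (ii) $(\alpha/8,\ \alpha\beta/32)$-accurate with respect to its sample. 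We build $\mechanism$ so that on the $j$-th query $\loss_j$ it runs the~\cite{BassilyST14} algorithm on the $1$-strongly-convex, $1$-Lipschitz empirical objective $\theta \mapsto \loss_j(\sample;\theta)$ over $\Theta$ and returns the resulting $\theta_j$.

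For requirement (i) I would use the adaptive composition bound quoted in Section~\ref{subsec:max-kl}: a per-query $(\eps_0,\delta_0)$-max-KL stable mechanism, run $\queries$ times, is $(\eps_0\sqrt{\queries\log(1/\delta')} + 2\eps_0^2\queries,\ \delta' + \queries\delta_0)$-max-KL stable. Choosing $\delta' = \delta/2$, $\delta_0 = \delta/(2\queries)$, and $\eps_0 = \Theta(\eps/\sqrt{\queries\log(1/\delta')}) = \tilde{O}(\alpha/\sqrt{\queries})$ makes the composed parameters at most $(\eps,\delta)$ (note $\eps_0 \le 1$ in the relevant regime, as the composition lemma requires), so (i) holds. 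For requirement (ii) I would invoke the guarantee of~\cite{BassilyST14} for $(\eps_0,\delta_0)$-DP minimization of a $1$-strongly-convex, $1$-Lipschitz objective over a radius-$1$ body in $\R^d$, namely excess empirical risk $\tilde{O}(d\log(1/\delta_0)/(\eps_0^2 n^2))$. The crucial point is that for strongly convex objectives this holds with high probability, not just in expectation: the empirical minimizer of a $1$-strongly-convex, $1/n$-Lipschitz average is argument-stable up to $O(1/n)$, so Gaussian output perturbation of per-coordinate scale $\tilde{O}(1/(n\eps_0))$ is $(\eps_0,\delta_0)$-private, and $\chi^2$-concentration turns the squared noise norm (hence, by strong convexity, the excess risk) into a high-probability bound with only a $\log(1/\beta')$ overhead. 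Requiring this to be at most $\alpha/8$ with probability $\ge 1-\alpha\beta/32$ reduces to $\tilde{O}(d\log(1/\delta_0)\log(1/\beta')/(\eps_0^2 n^2)) \le \alpha$; substituting $\eps_0 = \tilde{O}(\alpha/\sqrt{\queries})$ and solving for $n$ gives $n = \tilde{O}(\sqrt{d\queries}\cdot\log^{3/2}(1/\alpha\beta)/\alpha^{3/2})$, where the $\log^{3/2}$ is the product of three $\Theta(\log(1/\alpha\beta))$ factors under a square root, coming from the composition slack $\delta'$, the privacy floor $\delta_0$, and the failure/concentration probability $\beta'$. With (i) and (ii) established, Theorem~\ref{thm:MKLtoAdaptiveMinimization} immediately yields $(\alpha,\beta)$-accuracy with respect to the population, and each query costs one run of the~\cite{BassilyST14} solver, dominated by $O(n^2)$ gradient evaluations, for a total of $\queries\cdot n^2$.

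The main obstacle is exactly this high-probability empirical-risk guarantee. A naive Markov argument over the in-expectation bound would force the excess risk down to $\tilde{O}(\alpha^2\beta)$, costing a polynomial factor in $1/\beta$ and degrading the rate to $\alpha^{-2}$, matching the general-convex corollary rather than improving on it. Strong convexity is what lets us avoid this: it both gives the faster $d/(\eps_0 n)^2$ rate (versus $\sqrt{d}/(\eps_0 n)$ in the general case, which is the source of the $\alpha^{-3/2}$ versus $\alpha^{-2}$ gap) and, via argument stability plus Gaussian tail bounds, supplies the concentration. One should also be mildly careful when converting distance-to-optimum into excess loss, since a crude Lipschitz bound would reintroduce a square root; this is handled inside the analysis of~\cite{BassilyST14} (via their gradient-based solver) rather than by such a bound.
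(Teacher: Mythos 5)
Your proposal is correct and follows essentially the same route the paper intends: the corollary is stated as a direct instantiation of Theorem~\ref{thm:MKLtoAdaptiveMinimization} with the $(\eps_0,\delta_0)$-differentially private strongly convex ERM algorithm of~\cite{BassilyST14}, with the per-query privacy budget set via the adaptive composition bound of Section~\ref{subsec:max-kl}, exactly as you do. Your accounting of the parameters ($\Delta n=1$, $\eps_0=\tilde{O}(\alpha/\sqrt{\queries})$, excess empirical risk $\tilde{O}(d/(\eps_0^2 n^2))\leq\alpha/8$ with failure probability $\alpha\beta/32$, yielding $n=\tilde{O}(\sqrt{d\queries}\log^{3/2}(1/\alpha\beta)/\alpha^{3/2})$) and your attention to the high-probability (rather than in-expectation) form of the~\cite{BassilyST14} guarantee supply more detail than the paper itself, which states the corollary without proof.
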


\begin{corollary}[Theorem \ref{thm:MKLtoAdaptiveMinimization} and \cite{Ullman15}]
  Let $\Theta$ be a closed, convex subset of $\R^d$ set such that $\max_{\theta \in \Theta} \|\theta\|_2 \leq 1$. Let $\queryset \subset \querysetMIN$ be the set of $1$-strongly convex $1$-Lipschitz loss functions that are $1/n$-sensitive.  Then there is a mechanism $\mechanism$ that is $(\alpha,\beta)$-accurate with respect to the population for $\queries$ adaptively chosen queries from $\queryset$ given
$$
n = \tilde O\paren{ \sqrt{\log |\univ|} \cdot \left(\frac{\sqrt{d}}{\alpha^{5/2}} + \frac{\log \queries}{\alpha^3}\right) \cdot \log^{3/2}(1/\alpha\beta)}
$$
samples from $\univ$.
The running time of the mechanism is dominated by $\poly(n, |\univ|)$ and $\queries \cdot n^2$ evaluations of the gradient $\nabla\loss$.
\end{corollary}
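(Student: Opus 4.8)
The plan is to derive the corollary by instantiating the transfer theorem for minimization queries (Theorem~\ref{thm:MKLtoAdaptiveMinimization}) with the differentially private mechanism for adaptively chosen convex minimization queries of \cite{Ullman15}. Since every loss in $\queryset$ is $1/n$-sensitive in its first argument, take $\Delta = 1/n$, so that $\Delta n = 1$ and the hypotheses of Theorem~\ref{thm:MKLtoAdaptiveMinimization} reduce to the following: it suffices to exhibit a stateful interactive mechanism $\mechanism$ for $\queryset$ that is simultaneously (i) $(\eps,\delta)$-max-KL stable for $k$ adaptively chosen queries with $\eps = \alpha/128$ and $\delta = \alpha\beta/64$, and (ii) $(\alpha',\beta')$-accurate with respect to its sample with $\alpha' = \alpha/8$ and $\beta' = \alpha\beta/32$ --- that is, for every fixed $\sample$ and every minimization query $\loss$ it is asked it returns $\theta$ with empirical excess risk $\errx{\sample}{\loss}{\theta}\le\alpha/8$, for all queries simultaneously, except with probability at most $\alpha\beta/32$. (Theorem~\ref{thm:MKLtoAdaptiveMinimization} is stated for the full class $\querysetMIN$, but its monitor-based proof applies verbatim to any subclass of $\Delta$-sensitive minimization queries, in particular to the strongly-convex Lipschitz class here.)

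I would take $\mechanism$ to be the mechanism of \cite{Ullman15}, run with privacy parameters $(\eps,\delta) = (\alpha/128,\,\alpha\beta/64)$ on the input $\sample$. As it is $(\eps,\delta)$-differentially private against an adaptive analyst, it is $(\eps,\delta)$-max-KL stable for $k$ queries in the sense of Definition~\ref{def:interact-stability}, giving (i). For (ii), I would invoke the empirical-accuracy guarantee of \cite{Ullman15} specialized to the class of $1$-strongly-convex, $1$-Lipschitz, $1/n$-sensitive losses over $\Theta\subseteq\{\theta\in\R^d:\|\theta\|_2\le 1\}$. The key point is that the private empirical-risk-minimization oracle called inside that mechanism --- the strongly-convex case of \cite{BassilyST14} --- has excess empirical risk $\tilde O\!\big(d\log(1/\delta)/(n\eps)^2\big)$ rather than the $\tilde O\!\big(\sqrt{d\log(1/\delta)}/(n\eps)\big)$ of the general Lipschitz-convex case; this is precisely what sharpens the $\sqrt d$ term from $\alpha^{-3}$ to $\alpha^{-5/2}$. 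Unwinding \cite{Ullman15} with these parameters, $\mechanism$ is $(\alpha',\beta')$-sample-accurate and $(\eps,\delta)$-max-KL stable for $k$ queries whenever
\[
n \;\ge\; \tilde O\!\left(\sqrt{\log|\univ|}\cdot\log^{3/2}\!\Big(\tfrac{1}{\eps\alpha'\beta'}\Big)\cdot\left(\frac{\sqrt d}{\eps\,(\alpha')^{3/2}}+\frac{\log k}{\eps\,(\alpha')^{2}}\right)\right),
\]
the first inner term being the (strongly-convex) private-ERM cost of the update steps and the second the cost of the sparse-vector subroutine that screens the $k$ queries, both composed over the $\tilde O(\log|\univ|/(\alpha')^2)$ update rounds of the underlying private multiplicative weights scheme (the source of the $\sqrt{\log|\univ|}$ factor).

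Substituting $\eps = \alpha/128 = \Theta(\alpha)$, $\alpha' = \alpha/8 = \Theta(\alpha)$, $\delta = \alpha\beta/64$ and $\beta' = \alpha\beta/32$ --- so that $\log(1/(\eps\alpha'\beta')) = \tilde O(\log(1/\alpha\beta))$ --- the displayed requirement becomes
\[
n = \tilde O\!\left(\sqrt{\log|\univ|}\cdot\left(\frac{\sqrt d}{\alpha^{5/2}}+\frac{\log k}{\alpha^{3}}\right)\cdot\log^{3/2}\!\Big(\tfrac{1}{\alpha\beta}\Big)\right),
\]
which is exactly the sample size claimed. Theorem~\ref{thm:MKLtoAdaptiveMinimization} then certifies that this $\mechanism$ is $(\alpha,\beta)$-accurate with respect to the population for $k$ adaptively chosen queries from $\queryset$, and the running-time claim is inherited directly from \cite{Ullman15}: the multiplicative-weights bookkeeping over $\univ$ costs $\poly(n,|\univ|)$, and each query costs $O(n^2)$ evaluations of $\nabla\loss$ inside the noisy-gradient-descent ERM oracle. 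The only delicate points are the bookkeeping that $\Delta n = 1$ forces both $\eps$ and $\alpha'$ to be $\Theta(\alpha)$ (not $\Theta(\alpha/n)$), and confirming that the guarantee quoted from \cite{Ullman15} is for the \emph{empirical} excess risk $\errx{\sample}{\loss}{\theta}$ --- exactly the notion of sample accuracy consumed by Theorem~\ref{thm:MKLtoAdaptiveMinimization} --- with the strong-convexity hypothesis responsible for its improved $d$-dependence.
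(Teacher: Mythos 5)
Your proposal is correct and follows exactly the route the paper intends: the corollary is proved (implicitly, with no further detail given in the paper) by instantiating Theorem~\ref{thm:MKLtoAdaptiveMinimization} with $\Delta = 1/n$ and the $(\eps,\delta)$-differentially private mechanism of \cite{Ullman15}, whose strongly-convex ERM subroutine (from \cite{BassilyST14}) supplies the improved $\sqrt{d}/\alpha^{5/2}$ term while the sparse-vector/multiplicative-weights machinery supplies the $\sqrt{\log|\univ|}$ and $\log\queries/\alpha^3$ factors. Your parameter bookkeeping ($\eps,\alpha' = \Theta(\alpha)$, $\delta,\beta' = \Theta(\alpha\beta)$) and the resulting substitution match the claimed sample complexity, so nothing is missing.
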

\fi 

\section{An Alternative Form of Generalization and Tightness of Our Results}\label{sec:opt}
We now provide an alternative form of our generalization bounds.  The following Theorem is more general than Theorem~\ref{thm:MKLtoAdaptive} because it says that \emph{no} max-KL stable procedure that outputs a low-sensitivity can output a query that distinguishes the sample from the population (not just max-KL stable procedures that are accurate for the sample).

First we prove the following technical lemma.
\begin{lem} \label{lem:EMutility}
Let $F$ be a finite set, $f : F \to \mathbb{R}$ a function, and $\eta >0$. Define a random variable $X$ on $F$ by $$\pr{}{X=x} = \frac{e^{\eta f(x)}}{C}, \qquad \text{where} \qquad C= \sum_{x \in F} e^{\eta f(x)}.$$ Then $$\ex{}{f(X)} \geq \max_{x \in F} f(x) - \frac{1}{\eta}\log |F|.$$
\end{lem}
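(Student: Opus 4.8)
The plan is to analyze the "failure probability" that $X$ lands far below the maximum. Let $f^* = \max_{x \in F} f(x)$ and fix a threshold $t > 0$; I will bound $\pr{}{f^* - f(X) \geq t}$. Let $x^* \in F$ be a point attaining $f(x^*) = f^*$. For any $x$ with $f(x) \leq f^* - t$ we have $\pr{}{X = x} = e^{\eta f(x)}/C \leq e^{\eta(f^* - t)}/C$, and since $\pr{}{X = x^*} = e^{\eta f^*}/C$ we get $\pr{}{X = x} \leq e^{-\eta t} \pr{}{X = x^*} \leq e^{-\eta t}$. Summing over the at most $|F|$ such points, $\pr{}{f^* - f(X) \geq t} \leq |F| e^{-\eta t}$.

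Now I would convert this tail bound into a bound on the expectation. Since $f^* - f(X) \geq 0$ always (as $f^*$ is the max), we can write
\begin{align*}
\ex{}{f^* - f(X)} &= \int_0^\infty \pr{}{f^* - f(X) \geq t}\, \mathrm{d}t \leq \int_0^\infty \min\{1, |F| e^{-\eta t}\}\, \mathrm{d}t.
\end{align*}
Split the integral at $t_0 = \frac{1}{\eta}\log|F|$, where $|F| e^{-\eta t_0} = 1$: the first part contributes at most $t_0 = \frac{1}{\eta}\log|F|$, and the second part contributes $\int_{t_0}^\infty |F| e^{-\eta t}\,\mathrm{d}t = \frac{1}{\eta}$. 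This gives $\ex{}{f^* - f(X)} \leq \frac{1}{\eta}\log|F| + \frac{1}{\eta}$, which is slightly weaker than the claimed bound.

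To get exactly $\ex{}{f(X)} \geq f^* - \frac{1}{\eta}\log|F|$ without the extra $\frac{1}{\eta}$, I would instead argue directly via a convexity/Jensen-type inequality. Observe that $\ex{}{\eta f(X)} = \sum_x \frac{\eta f(x) e^{\eta f(x)}}{C}$, and consider $\log \ex{}{e^{\eta(f(X) - f^*)}}$. We have $\ex{}{e^{\eta(f(X)-f^*)}} = \frac{1}{C}\sum_x e^{2\eta f(x) - \eta f^*} \geq \frac{1}{C} e^{\eta f^*} = \pr{}{X = x^*}$... this route is messier. The cleanest exact argument: by the nonnegativity of KL divergence (Gibbs' inequality) between the distribution of $X$ and the uniform distribution $U$ on $F$, $0 \leq \mathrm{KL}(X \| U) = \sum_x \pr{}{X=x}\log\!\big(|F|\pr{}{X=x}\big) = \log|F| + \sum_x \pr{}{X=x}\big(\eta f(x) - \log C\big) = \log|F| + \eta\ex{}{f(X)} - \log C$. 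Hence $\ex{}{f(X)} \geq \frac{1}{\eta}(\log C - \log|F|)$. Finally, $C = \sum_x e^{\eta f(x)} \geq e^{\eta f^*}$, so $\log C \geq \eta f^*$, and therefore $\ex{}{f(X)} \geq f^* - \frac{1}{\eta}\log|F|$, as desired.

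The main obstacle is recognizing that the naive tail-integration argument loses an additive $\frac{1}{\eta}$, and that the tight bound comes for free from Gibbs' inequality (equivalently, from the fact that the Gibbs distribution maximizes entropy-regularized expectation); once that observation is made, the proof is a two-line computation. I would present the Gibbs-inequality version as the actual proof.
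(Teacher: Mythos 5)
Your final argument is correct and is essentially the paper's own proof in different clothing: the paper writes $\ex{}{f(X)} = \frac{1}{\eta}\left(\log C - H(X)\right)$ and bounds $H(X)\leq \log|F|$ and $\log C \geq \eta \max_x f(x)$, and your use of $0\leq \mathrm{KL}(X\|U)=\log|F|-H(X)$ is exactly the same inequality phrased via Gibbs' inequality. The preliminary tail-integration attempt (which loses an additive $\frac{1}{\eta}$) is correctly discarded, so no issue there.
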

\begin{proof}
We have $$f(x) = \frac{1}{\eta}\left(\log C + \log \pr{}{X=x}\right).$$
Thus
\begin{align*}
\ex{}{f(X)} =& \sum_{x \in F} \pr{}{X=x} f(x)\\
=& \sum_{x \in F} \pr{}{X=x} \frac{1}{\eta}\left( \log C + \log \pr{}{X=x} \right)\\
=& \frac{1}{\eta}\left(\log C - H(X)\right),
\end{align*}
where $H(X)$ is the Shannon entropy of the distribution of $X$ (measured in nats, rather than bits). In particular, $$H(X) \leq \log|\mathrm{support}(X)| = \log|F|,$$ as the uniform distribution maximizes entropy. Moreover, $C \geq \max_{x \in F} e^{\eta f(x)}$, whence $\frac{1}{\eta} \log C \geq \max_{x \in F} f(x)$. The result now follows from these two inequalities.
\end{proof}

\begin{thm} \label{thm:generalization}
Let $\varepsilon \in (0,1/3)$, $\delta \in (0,\varepsilon/4)$, and $n\geq\frac{1}{\eps^2}\log(\frac{4\eps}{\delta})$.
Let $\mechanism \from \univ^{n} \to \querysetDelta$ be $(\eps,\delta)$-max-KL stable where $\querysetDelta$ is the class of $\Delta$-sensitive queries $\query : \univ^n \to \mathbb{R}$. Let $\dist$ be a distribution on $\univ$, let $\sample \getsr \dist^n$, and let $\query\getsr\mechanism(\sample)$. Then 
$$
\pr{\sample,\mechanism}{ \left| \query(\dist) -\query(\sample) \right| \geq 18\eps\Delta n } < \frac{\delta}{\eps}.
$$
\end{thm}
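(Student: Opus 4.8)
The plan is to argue by contradiction via the monitor technique, now using the exponential mechanism as the monitor's selection rule --- this is where Lemma~\ref{lem:EMutility} enters. Suppose the conclusion fails, so that $p := \pr{\sample,\mechanism}{\left|\query(\dist)-\query(\sample)\right| \geq 18\eps\Delta n} \geq \delta/\eps$. Put $\trials := \lceil\eps/\delta\rceil$ and $\eta := \eps/(2\Delta)$, and define a monitor $\monitor_\dist \from (\univ^n)^\trials \to \querysetDelta\times[\trials]$ that, on input $\samples = (\sample_1,\dots,\sample_\trials)$: runs $\query_\trial \getsr \mechanism(\sample_\trial)$ independently for each $\trial\in[\trials]$; forms the $2\trials$ candidates $(\sigma\query_\trial,\trial)$ for $\sigma\in\{+1,-1\}$, assigning candidate $(\sigma\query_\trial,\trial)$ the score $\sigma\cdot(\query_\trial(\dist)-\query_\trial(\sample_\trial))$; and outputs a single candidate $(\query^*,\trial^*)$ chosen with probability proportional to the exponential of $\eta$ times its score. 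Since $\querysetDelta$ is closed under negation, $\query^*\in\querysetDelta$, and the monitor may use $\dist$ because it is only a thought experiment.

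The first step is to verify that $\monitor_\dist$ is $(2\eps,\delta)$-max-KL stable. Changing one entry of $\samples$ affects only block $\sample_{\trial_0}$: the map $\samples\mapsto(\query_1,\dots,\query_\trials)$ is $(\eps,\delta)$-max-KL stable, since the blocks are independent and only the distribution of $\query_{\trial_0}$ moves, by at most $(\eps,\delta)$ (stability of $\mechanism$ and Lemma~\ref{lem:postprocessing}); and, for fixed $(\query_1,\dots,\query_\trials)$, the selection step is the exponential mechanism run on $\samples$ with a quality function of sensitivity at most $\Delta$ in $\samples$ (every $\query\in\querysetDelta$ is $\Delta$-sensitive and only the two scores of block $\trial_0$ move), hence $(2\eta\Delta,0)=(\eps,0)$-max-KL stable. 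Composition of these two steps gives $(2\eps,\delta)$-stability. As $\monitor_\dist$ outputs a $\Delta$-sensitive query together with a block index, Lemma~\ref{lem:MKLCondExp} applies and gives
\[
\left| \ex{\samples,\monitor}{\query^*(\dist)-\query^*(\sample_{\trial^*}) \mid (\query^*,\trial^*)=\monitor(\samples)} \right| \;\leq\; 2\left(e^{2\eps}-1+\trials\delta\right)\Delta n \;<\; 8.2\,\eps\Delta n,
\]
where the final inequality is routine arithmetic using $\eps<1/3$, $\delta<\eps/4$ and $\trials\delta\leq\eps+\delta$.

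Next I would obtain a contradictory lower bound. The conditional expectation above is exactly $\ex{}{\query^*(\dist)-\query^*(\sample_{\trial^*})}$, i.e. the expected score of the selected candidate. Applying Lemma~\ref{lem:EMutility} conditionally on $(\query_1,\dots,\query_\trials,\samples)$ (with $|F|=2\trials$) and then taking expectations, this is at least $\ex{}{\max_{\trial\in[\trials]}\left|\query_\trial(\dist)-\query_\trial(\sample_\trial)\right|} - \tfrac{1}{\eta}\log(2\trials)$. The blocks are independent and each behaves like the pair $(\sample,\query)$ of the theorem, so by the contradiction hypothesis $\pr{}{\max_\trial\left|\query_\trial(\dist)-\query_\trial(\sample_\trial)\right|\geq 18\eps\Delta n} \geq 1-(1-p)^\trials \geq 1-e^{-1}$ (using $\trials\geq 1/p$); hence $\ex{}{\max_\trial\left|\cdots\right|}\geq 18(1-e^{-1})\eps\Delta n > 11.3\,\eps\Delta n$. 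Moreover the hypothesis $n\geq\frac{1}{\eps^2}\log\frac{4\eps}{\delta}$ yields $\log(2\trials)\leq\log\frac{4\eps}{\delta}\leq\eps^2 n$, so $\tfrac{1}{\eta}\log(2\trials)\leq 2\eps\Delta n$. Therefore $\ex{}{\query^*(\dist)-\query^*(\sample_{\trial^*})} > 9.3\,\eps\Delta n$, which contradicts the upper bound of the previous paragraph. Hence $p<\delta/\eps$, as claimed.

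The step I expect to require the most care is the stability analysis of the monitor, and relatedly the choice of the free parameters. The selection rule must read $\samples$ directly (through the values $\query_\trial(\sample_\trial)$) rather than only through the outputs of $\mechanism$, which is exactly why one cannot use a deterministic $\argmax$ and must instead use the exponential mechanism, paying its $(2\eta\Delta,0)$ cost to the stability budget. The two requirements on $\eta$ then pull in opposite directions --- $\eta$ must be small enough that $2\eta\Delta\leq\eps$ keeps the monitor stable, and large enough that $\tfrac{1}{\eta}\log(2\trials)=O(\eps\Delta n)$ keeps the exponential mechanism's utility loss negligible --- and reconciling both while $\trials\approx\eps/\delta$ is precisely what forces the sample-size condition $n\gtrsim\frac{1}{\eps^2}\log(\eps/\delta)$ and pins down the constant $18$.
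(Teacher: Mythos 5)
Your proposal is correct and follows essentially the same route as the paper: a monitor running $\mechanism$ on $T\approx\eps/\delta$ independent samples, exponential-mechanism selection among the $2T$ candidates $(\pm\query_\trial,\trial)$, Lemma~\ref{lem:EMutility} for the selection's utility, Lemma~\ref{lem:MKLCondExp} applied to the $(2\eps,\delta)$-stable monitor, and the amplification claim to force a contradiction with the hypothesis on $n$. The only deviations are cosmetic (a sign convention, $T=\lceil\eps/\delta\rceil$ versus $\lfloor\eps/\delta\rfloor$, and slightly different constants), and your choice $\eta=\eps/(2\Delta)$ in fact makes the $(\eps,0)$-stability of the selection step follow from the standard exponential-mechanism analysis more cleanly than the paper's $\eta=\eps/\Delta$, at the harmless cost of a factor $2$ in the utility-loss term.
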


Intuitively, Theorem \ref{thm:generalization} says that ``stability prevents overfitting.'' It says that no stable algorithm can output a low-sensitivity function that distinghishes its input from the population the input was drawn from (i.e. ``overfits'' its sample).

In particular, Theorem \ref{thm:generalization} implies that, if a mechanism $\mechanism$ is stable and outputs $\query$ that ``fits'' its data, then $\query$ also ``fits'' the population. This gives a learning theory perspective on our results.

\begin{proof}
Consider the following monitor algorithm $\monitor$.

\begin{figure}[h]
\begin{framed}
\begin{center}$\monitor(\samples) = \monitor_{\dist}[\mechanism](\samples):$\end{center}
\begin{algorithmic}
\INDSTATE[0]{\textbf{Input:} $\samples = (\sample_1,\dots,\sample_T) \in (\univ^n)^{\trials}$}
\INDSTATE[1]{Set $F=\emptyset$.}
\INDSTATE[1]{For $\trial = 1,\dots,\trials:$}
\INDSTATE[2]{Let $\query_\trial \leftarrow \mechanism(\sample_\trial)$, and set $F=F\cup\{(\query_\trial,\trial), (-\query_\trial,\trial)\}$.}
\INDSTATE[1]{Sample $(\query^*,\trial^*)$ from $F$ with probability proportional to $\exp\left(\frac{\eps}{\Delta} \left(\query^*(\sample_{\trial^*})-\query^*(\dist)\right)\right)$.}
\INDSTATE[0]{\textbf{Output:} $(\query^*, \trial^*).$}
\end{algorithmic}
\end{framed}
\end{figure}

We will use the monitor $\monitor$ with $T = \left\lfloor \epsilon/\delta \right\rfloor$.
Observe that $\monitor$ only access its input through $\mechanism$ (which is $(\eps,\delta)$-max-KL-stable) and the exponential mechanism (which is $(\eps,0)$-max-KL-stable). Thus, by composition and postprocessing, $\monitor$ is $(2\eps,\delta)$-max-KL stable.
We can hence apply Lemma~\ref{lem:MKLCondExp} to obtain
\begin{equation}\label{eq:MKLMonitorCondExp2}
\ex{\samples, \monitor}{\query^*(\sample_{\trial^*}) - \query^*(\dist) \mid (\query^*, \trial^*) = \monitor(\samples)}  \leq 2\left(e^{2\eps} - 1 + \trials\delta\right)\Delta n < 8\eps\Delta n.
\end{equation}

Now we can apply Lemma \ref{lem:EMutility} with $f(\query,\trial) =  \query(\sample_\trial)-\query(\dist)$ and $\eta = \frac{\eps}{\Delta}$ to get 
\begin{equation}
\ex{\query^*,\trial^*}{f(\query^*,\trial^*)} \geq \max_{(\query,\trial)\in F} f(\query,\trial) - \frac{\Delta}{\eps} \log|F| =  \max_{\trial \in [\trials]}  \left|\query_\trial(\sample_\trial)-\query_\trial(\dist)\right| - \frac{\Delta}{\eps} \log (2T). \label{eq:Utility}
\end{equation}

Combining \eqref{eq:MKLMonitorCondExp2} and \eqref{eq:Utility} gives
\begin{equation}\label{eq:MKLMonitorCondExp2Combined}
\ex{\samples, \monitor}{\max_{\trial \in [\trials]}  \left|\query_\trial(\sample_\trial)-\query_\trial(\dist)\right| } - \frac{\Delta}{\eps} \log (2T) 
\leq \ex{\samples, \monitor}{\query^*(\sample_{\trial^*}) - \query^*(\dist) \mid (\query^*, \trial^*) = \monitor(\samples)} < 8\eps\Delta n.
\end{equation}

To complete the proof, we assume, for the sake of contradition, that $\mechanism$ has a high enough probability of outputting a query $\query$ such that  
$\left| \query(\dist) -\query(\sample) \right|$ is large.  
To obtain a contradiction from this assumption, we need the following natural claim (analogous to Claim \ref{clm:amplify}) about the output of the monitor.
\begin{claim}
If $$\pr{\sample,\mechanism}{ \left| \query(\dist) -\query(\sample) \right| \geq 18\eps\Delta n } \geq \frac{\delta}{\eps},$$ then
$$\pr{\samples, \monitor}{\max_{\trial \in [\trials]}  \left|\query_\trial(\sample_\trial)-\query_\trial(\dist)\right| \geq 18\eps\Delta n } \geq 1 - \left( 1 - \frac{\delta}{\eps} \right)^T \geq \frac12.$$
\end{claim}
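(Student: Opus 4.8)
The plan is to exploit the independence of the $\trials$ parallel runs of $\mechanism$ performed by $\monitor$. Recall that in Theorem~\ref{thm:generalization} the mechanism $\mechanism$ is \emph{non-interactive}: it maps a sample in $\univ^n$ to a single $\Delta$-sensitive query using only its own internal randomness. Hence, inside $\monitor$, the samples $\sample_1,\dots,\sample_\trials \getsr \dist^n$ are drawn independently and each $\query_\trial \leftarrow \mechanism(\sample_\trial)$ is computed with fresh coins, so the pairs $(\sample_1,\query_1),\dots,(\sample_\trials,\query_\trials)$ are i.i.d., each distributed exactly as the pair $(\sample,\query)$ appearing in the statement of the theorem.

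Given this, the first inequality of the claim is a union bound over independent events. For each fixed $\trial \in [\trials]$, the hypothesis gives $\pr{}{E_\trial} \geq \delta/\eps$, where $E_\trial$ denotes the event $|\query_\trial(\sample_\trial) - \query_\trial(\dist)| \geq 18\eps\Delta n$, and the events $E_1,\dots,E_\trials$ are mutually independent. Therefore $\pr{}{\bigcap_{\trial \in [\trials]} \overline{E_\trial}} = \prod_{\trial \in [\trials]} \pr{}{\overline{E_\trial}} \leq (1-\delta/\eps)^\trials$, so that $\pr{}{\max_{\trial \in [\trials]} |\query_\trial(\sample_\trial) - \query_\trial(\dist)| \geq 18\eps\Delta n} = \pr{}{\bigcup_{\trial \in [\trials]} E_\trial} \geq 1 - (1-\delta/\eps)^\trials$.

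For the second inequality I would plug in $\trials = \lfloor \eps/\delta \rfloor$ and use the assumption $\delta < \eps/4$. Writing $p = \delta/\eps \in (0,1/4)$ and using $\ln(1-p) \leq -p$, we get $(1-p)^\trials \leq e^{-p\trials}$; since $\trials = \lfloor 1/p \rfloor > 1/p - 1$ we have $p\trials > 1 - p > 3/4$, hence $(1-p)^\trials < e^{-3/4} < 1/2$ and thus $1 - (1-p)^\trials > 1/2$. The argument is entirely elementary; the only point that needs care is the independence of the $\trials$ runs, and unlike in Claim~\ref{clm:amplify} (where $\mechanism$ is interactive and one must argue the per-subsample transcripts are independent) this is immediate here because $\mechanism$ is non-interactive.
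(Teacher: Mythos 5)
Your proposal is correct and matches the paper's (largely implicit) argument: the paper states this claim as "natural" and analogous to Claim~\ref{clm:amplify}, whose proof rests on exactly the independence of the $\trials$ runs that you invoke, each pair $(\sample_\trial,\query_\trial)$ being distributed as $(\sample,\query)$ in the theorem. Your explicit verification of the numeric bound $1-(1-\delta/\eps)^{\trials}\geq 1/2$ using $\trials=\lfloor\eps/\delta\rfloor$ and $\delta<\eps/4$ is a correct filling-in of a step the paper leaves unstated.
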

Thus 
\begin{equation}\label{eq:LargeError}
\ex{\samples, \monitor}{\max_{\trial \in [\trials]}  \left|\query_\trial(\sample_\trial)-\query_\trial(\dist)\right|} \geq 9\eps\Delta n.
\end{equation}
Combining \eqref{eq:MKLMonitorCondExp2Combined} and \eqref{eq:LargeError} gives $$ 9\eps\Delta n - \frac{\Delta}{\eps} \log (2T) \leq 8 \eps \Delta n,$$ which simiplies to $$\log(2\eps/\delta) \geq \log(2T) \geq \eps^2 n .$$
This contradicts the assumption that $n\geq\frac{1}{\eps^2}\log(\frac{4\eps}{\delta})$ and hence completes the proof.
\end{proof}

\subsection{Optimality}

We now show that our connection between max-KL stability and generalization (Theorem~\ref{thm:generalization} and Theorem~\ref{thm:MKLtoAdaptive}) is optimal.

\begin{lemma}\label{lem:lower}
Let $\alpha>\delta>0$, let $n\geq\frac{1}{\alpha}$, and let $\Delta\in[0,1]$.
Let $\cU$ be the uniform distribution over $[0,1]$.
There exists a $(0,\delta)$-max-KL stable algorithm $\cA \from [0,1]^n \to \querysetDelta$ such that if $\samples\getsr\cU^n$ and if $\query\getsr\cA(\samples)$ then
$$
\Pr[\query(\samples) - \query(\cU) \geq \alpha\Delta n]\geq\frac{\delta}{2\alpha}.
$$
\end{lemma}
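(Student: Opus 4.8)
The plan is to construct, directly (no exponential mechanism needed), a randomized algorithm $\cA$ whose outputs are data‑dependent ``membership'' queries that trivially overfit because any finite subset of $[0,1]$ has Lebesgue measure zero. The only tension is that such a query must touch enough data coordinates to create an overfit of magnitude $\alpha\Delta n$, while the more coordinates it depends on, the more often changing a single input coordinate changes the output, which degrades stability; so the core of the plan is to pick the subset size as small as possible and then tune the ``activation'' probability exactly to the stability budget. (We may assume $\alpha\le 1$; for $\alpha>1$ every $\Delta$-sensitive $\query$ satisfies $\query(\samples)-\query(\cU)\le\Delta n<\alpha\Delta n$, so the statement is vacuous.)

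\textbf{The construction.} Set $k=\lceil \alpha n\rceil$, so that $1\le k\le n$ (using $n\ge 1/\alpha$ and $\alpha\le 1$), and set $p=\delta n/k$; note $p\le \delta/\alpha<1$ since $\delta<\alpha$. On input $\samples=(\samples_1,\dots,\samples_n)\in[0,1]^n$, the algorithm $\cA$ flips a $p$-biased coin: with probability $1-p$ it outputs the constant query $q_0\equiv 0$; with probability $p$ it draws a uniformly random size-$k$ set $T\subseteq[n]$, forms $S=\{\samples_i:i\in T\}$, and outputs $q_S(y)=\Delta\cdot|\{j\in[n]:y_j\in S\}|$. First I would check membership in $\querysetDelta$: changing one coordinate of $y$ changes $|\{j:y_j\in S\}|$ by at most $1$, so $q_S$ is $\Delta$-sensitive, and $q_0$ is $0$-sensitive.

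\textbf{Stability.} Fix neighbouring inputs $\samples,\samples'$ differing only in coordinate $i_0$, and any $R\subseteq\querysetDelta$. Conditioning on $T$: if $i_0\notin T$ then $S=\{\samples_i:i\in T\}=\{\samples'_i:i\in T\}$, so $\cA(\samples)$ and $\cA(\samples')$ output the same query; hence the two output distributions can differ only on the event ``coin selects $q_S$ \emph{and} $i_0\in T$,'' which has probability $p\cdot(k/n)$. Therefore $\Pr[\cA(\samples)\in R]\le \Pr[\cA(\samples')\in R]+p\cdot\tfrac kn=\Pr[\cA(\samples')\in R]+\delta$, i.e.\ $\cA$ is $(0,\delta)$-max-KL stable.

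\textbf{Overfitting.} On the branch where $\cA$ outputs $q_S$: since $S$ is finite it has Lebesgue measure $0$, so $q_S(\cU)=\ex{z\getsr\cU^n}{q_S(z)}=\Delta n\cdot\Pr_{z\getsr\cU}[z\in S]=0$; and since $\samples_i\in S$ for each $i\in T$, we get $q_S(\samples)\ge\Delta|T|=\Delta k\ge\alpha\Delta n$. Thus $q_S(\samples)-q_S(\cU)\ge\alpha\Delta n$ on this entire branch, so with $\query\getsr\cA(\samples)$,
$$\Pr\big[\query(\samples)-\query(\cU)\ge\alpha\Delta n\big]\ \ge\ p\ =\ \frac{\delta n}{k}\ \ge\ \frac{\delta n}{2\alpha n}\ =\ \frac{\delta}{2\alpha},$$
where the last inequality uses $k=\lceil\alpha n\rceil\le 2\alpha n$ (valid since $\alpha n\ge 1$). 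The one place where the hypotheses are genuinely used is exactly here: $n\ge 1/\alpha$ guarantees $\lceil\alpha n\rceil$ is within a factor $2$ of $\alpha n$, and $\delta<\alpha$ keeps $p<1$; balancing $p$ against $k/n$ so that $pk/n=\delta$ is the main (and only) ``obstacle,'' and it is resolved by the choices above.
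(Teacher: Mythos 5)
Your construction is correct, and the verification goes through: the coupling argument (same coin, same $T$) gives total variation distance at most $p\cdot\tfrac{k}{n}=\delta$ between $\cA(\samples)$ and $\cA(\samples')$, which is exactly $(0,\delta)$-max-KL stability; on the release branch $q_S(\cU)=0$ because $S$ is finite, $q_S(\samples)\geq\Delta k\geq\alpha\Delta n$, and $p=\delta n/\lceil\alpha n\rceil\geq\delta/(2\alpha)$ since $\alpha n\geq1$. The paper's proof is built on the same core idea -- release a membership query whose support is a measure-zero set of actual data points, so the population answer is $0$ while the empirical answer is at least $\alpha\Delta n$, with the release probability tuned to the $\delta$ budget -- but it distributes the randomness differently: it partitions the sample into $1/\alpha$ disjoint blocks of size $\alpha n$ and applies, to each block independently, a subroutine that outputs the block with probability $\delta$; stability then follows because a neighboring change touches only one block, and the success probability is $1-(1-\delta)^{1/\alpha}\geq\delta/(2\alpha)$. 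Your version replaces this block decomposition by a single coin of bias $p\approx\delta/\alpha$ together with a uniformly random $k$-subset, getting stability from the subsampling identity $p\cdot k/n=\delta$ and success probability exactly $p$; the factor $2$ in $\delta/(2\alpha)$ arises from the ceiling $\lceil\alpha n\rceil\leq2\alpha n$ rather than from the estimate $1-(1-\delta)^{1/\alpha}\geq\delta/(2\alpha)$, and you handle the integrality issues that the paper's partition silently glosses over. One wording nitpick: for $\alpha>1$ and $\Delta>0$ the claimed event is impossible (as you note, $\query(\samples)-\query(\cU)\leq\Delta n$ always), so the lemma in that regime is not ``vacuous'' but simply unprovable; the correct reading, shared by the paper's proof (which needs $1/\alpha\geq1$ blocks), is that $\alpha\leq1$ is an implicit hypothesis, consistent with the lemma only ever being invoked with $\alpha=\eps<1/3$.
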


\begin{proof}

Consider the following simple algorithm, denoted as $\cB$. On input a database $\sample$, output $\sample$ with probability $\delta$, and otherwise output the empty database. Clearly, $\cB$ is $(0,\delta)$-max-KL stable. Now construct the following algorithm $\cA$.

\begin{figure}[h]
\begin{framed}
\begin{algorithmic}
\INDSTATE[0]{\textbf{Input:} 
A database $\samples\in[0,1]^n$. We think of $\samples$ as $\frac{1}{\alpha}$ databases of size $\alpha n$ each: $\samples = (\sample_1,\dots,\sample_{1/\alpha})$.}
\INDSTATE[1]{For $1\leq i\leq1/\alpha$ let $\hat{\sample}_i=\cB(\sample_i)$.}
\INDSTATE[1]{Let $p:[0,1]\rightarrow\{0,1\}$ where $p(x)=1$ iff $\exists i$ s.t.\ $x\in\hat{\sample}_i$.}
\INDSTATE[1]{Define $\query_p:[0,1]^n\rightarrow\R$ where $\query_p(\sample)=\Delta\sum_{x\in\sample}p(x)$ (note that $\query_p$ is a $\Delta$-sensitive query, and that it is a statistical query if $\Delta=1/n$).}
\INDSTATE[0]{\textbf{Output:} $\query_p$.}
\end{algorithmic}
\end{framed}
\end{figure}

As $\cB$ is $(0,\delta)$-max-KL stable, and as $\cA$ only applies $\cB$ on disjoint databases, we get that $\cA$ is also $(0,\delta)$-max-KL stable.

Suppose $\samples = (\sample_1,\dots,\sample_{1/\alpha})$ contains i.i.d.\ samples from $\cU$, and consider the execution of $\cA$ on $\samples$.
Observe that the predicate $p$ evaluates to 1 only on a finite number of points from $[0,1]$, and hence, we have that $\query_p(\cU)=0$.
Next note that $\query_p(\samples)=\alpha\Delta n\cdot|\{ i : \hat{\sample}_i=\sample_i \}|$. Therefore, if there exists an $i$ s.t.\ $\hat{\sample}_i=\sample_i$ then $\query_p(\samples)- \query_p(\cU)\geq\alpha\Delta n$.
The probability that this is not the case is at most
$$
(1-\delta)^{1/\alpha}\leq e^{-\delta/\alpha} \leq 1-\frac{\delta}{2\alpha},
$$
ans thus, with probability at least $\frac{\delta}{2\alpha}$, algorithm $\cA$ outputs a $\Delta$-sensitive query $\query_p$ s.t.\ $\query_p(\samples)-\query_p(\cU)\geq\alpha\Delta n$.
\end{proof}

In particular, using Lemma~\ref{lem:lower} with $\alpha=\varepsilon$ shows that the parameters in Theorem~\ref{thm:generalization} are tight.

\ifnum\short = 0
\addcontentsline{toc}{section}{Acknowledgements}
\subsubsection*{Acknowledgements}
We thank Mark Bun, Moritz Hardt, Aaron Roth, and Salil Vadhan for many helpful discussions.
\fi

\ifnum\short=1
\vfill\eject \small
\bibliographystyle{alpha}
\bibliography{references}
\else
\addcontentsline{toc}{section}{References}
\bibliographystyle{alpha}
\bibliography{references}
\fi

\end{document}